\newtheorem{assumption}{Assumption}
\newtheorem{theorem}{Theorem}
\newtheorem{lemma}{Lemma}
\newtheorem{example}{Example}
\newtheorem{definition}{Definition}
\newcommand{\iidsim}{\overset{\mathrm{i.i.d.}}{\sim}}
\newcommand{\qu}{{q^\mathrm{u}}}
\newcommand{\qa}{{q^\mathrm{a}}}
\newcommand{\dqu}{{\delta q^\mathrm{u}}}
\newcommand{\dqa}{{\delta q^\mathrm{a}}}
\newcommand{\dq}{{\delta q}}
\newcommand{\Ka}{\mathbf{K}_\mathrm{a}}
\newcommand{\ka}{k_\mathrm{a}}
\newcommand{\Mu}{\mathbf{M}_\mathrm{u}}
\newcommand{\Ba}{\mathbf{B}^\mathrm{a}}
\newcommand{\Bu}{\mathbf{B}^\mathrm{u}}
\newcommand{\tauU}{\tau^\mathrm{u}}
\newcommand{\tauA}{\tau^\mathrm{a}}
\newcommand{\J}{\mathbf{J}}
\newcommand{\Ju}[1][]{\mathbf{J}_{\mathrm{u}_{#1}}}
\newcommand{\Ja}[1][]{\mathbf{J}_{\mathrm{a}_{#1}}}
\newcommand{\Jn}[1][]{\mathbf{J}_{\mathrm{n}_{#1}}}
\newcommand{\Jt}[1][]{\mathbf{J}_{\mathrm{t}_{#1}}}
\newcommand{\JuActive}{\tilde{\mathbf{J}}_\mathrm{u}}
\newcommand{\JaActive}{\tilde{\mathbf{J}}_\mathrm{a}}
\newcommand{\R}[1][]{\mathbb{R}^{#1}}
\newcommand{\nU}{n_\mathrm{u}}
\newcommand{\nA}{n_\mathrm{a}}
\newcommand{\nC}{n_\mathrm{c}}
\newcommand{\nCG}{n_\mathrm{cg}}
\newcommand{\norm}[1]{\left\lVert{#1}\right\rVert}
\newcommand{\minimize}{\mathrm{min}.}
\newcommand{\DfDx}[2]{\frac{\partial {#1}}{\partial {#2}}}
\newcommand{\DfDxLine}[2]{\partial {#1} / \partial {#2}}
\newcommand{\A}{\mathbf{A}}
\newcommand{\B}{\mathbf{B}}
\newcommand{\I}{\mathbf{I}}
\newcommand{\code}[1]{{\fontfamily{cmss}\selectfont {#1}}}
\newcommand{\Nearest}{\mathtt{Nearest}}
\newcommand{\Extend}{\mathtt{Extend}}
\newcommand{\ContactSample}{\mathtt{ContactSample}}
\def\BibTeX{{\rm B\kern-.05em{\sc i\kern-.025em b}\kern-.08em
    T\kern-.1667em\lower.7ex\hbox{E}\kern-.125emX}}
\begin{document}
\title{Global Planning for Contact-Rich Manipulation via Local Smoothing of Quasi-dynamic Contact Models}

\author{Tao Pang$^{*}$, H.J. Terry Suh$^{*}$, Lujie Yang and Russ Tedrake
\thanks{*These authors contributed equally to this work.}
\thanks{ This work is funded by Amazon PO 2D-06310236, Lincoln Laboratory/Air Force PO 7000470769, NSF Award No. EFMA-1830901 and the Ocado Group. The authors are with the Computer Science and Artificial Intelligence Laboratory (CSAIL), Massachusetts Institute of Technology, USA. {\tt\small \{pangtao, hjsuh94, lujie, russt\}@csail.mit.edu}}%
}


\maketitle

\begin{abstract}
The empirical success of Reinforcement Learning (RL) in contact-rich manipulation leaves much to be understood from a model-based perspective, where the key difficulties are often attributed to (i) the explosion of contact modes, (ii) stiff, non-smooth contact dynamics and the resulting exploding / discontinuous gradients, and (iii) the non-convexity of the planning problem.
The stochastic nature of RL addresses (i) and (ii) by effectively sampling and averaging the contact modes.
On the other hand, model-based methods have tackled the same challenges by smoothing contact dynamics analytically.
Our first contribution is to establish the theoretical equivalence of the two smoothing schemes for simple systems, and provide qualitative and empirical equivalence on several complex examples. 
In order to further alleviate (ii), our second contribution is a convex, differentiable and quasi-dynamic formulation of contact dynamics, which is amenable to both smoothing schemes, and has proven to be highly effective for contact-rich planning.
Our final contribution resolves (iii), where we show that classical sampling-based motion planning algorithms can be effective in global planning when contact modes are abstracted via smoothing. Applying our method on several challenging contact-rich manipulation tasks, we demonstrate that efficient model-based motion planning can achieve results comparable to RL, but with dramatically less computation.
\newline \href{https://sites.google.com/view/global-planning-contact/home}{\color{magenta} \code{https://sites.google.com/view/global-planning-contact/home}}
\end{abstract}

\begin{IEEEkeywords}
Manipulation Planning, Dexterous Manipulation, Motion and Path Planning, Contact Modeling.
\end{IEEEkeywords}

\section{Introduction}
\IEEEPARstart{R}{ecent} advances in RL have shown impressive results in manipulation through contact-rich dynamics that were difficult to achieve with previous model-based methods \cite{andrychowicz2020learning, chen2022system, nagabandi2019deep}. However, it is yet unclear what made these methods successful where model-based methods have struggled. Our high-level goal is to understand these reasons and interpret them from a model-based point of view. From such interpretations, we seek to apply the ingredients behind the empirical success of RL with the generalizability and efficiency of models.

From a model-based perspective, the most significant obstacle for planning through contact lies in the \emph{hybrid} nature of contact dynamics (i.e. numerous modes of smooth dynamics separated by guard surfaces). The non-smooth nature of the resulting dynamics implies that Taylor approximation no longer holds locally; thus, the locally linear model constructed with the gradient quickly becomes invalid. This invalidity of the local model presents significant challenges for both iterative gradient-based optimization \cite{clarke,gradientsampling}, and sampling-based planning that rely on local distance metrics \cite{shkolnik2009reachability,wu2020r3t,haddad2021anytime}.

\begin{figure}[t]
	\centering\includegraphics[width = 0.49\textwidth]{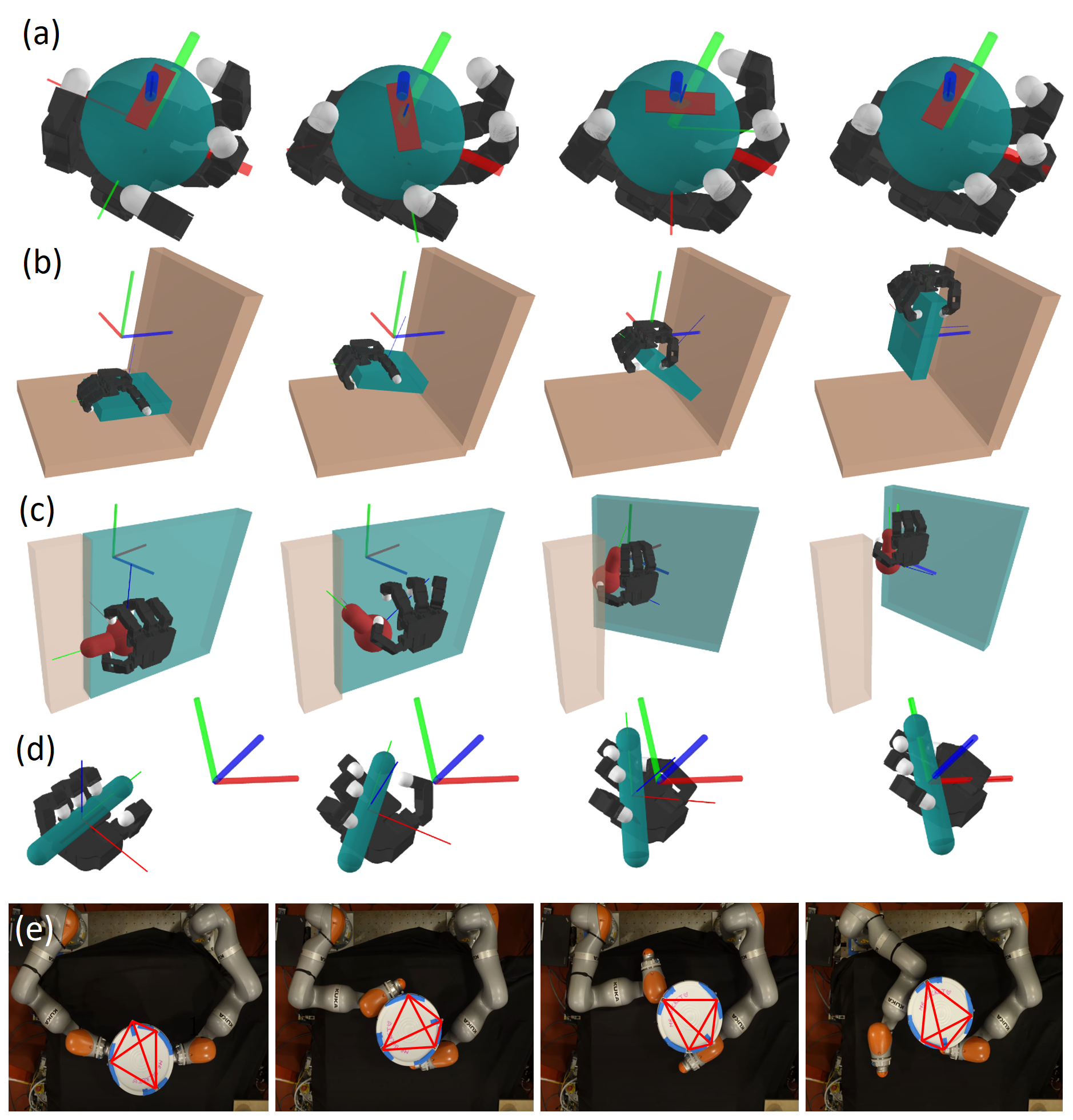}
    \caption{Examples of contact-rich plans generated by our method. Each row corresponds to a time-snapshot of five different tasks: (\textbf{a}) 3D in-hand manipulation, (\textbf{b}) plate pickup using extrinsic dexterity, (\textbf{c}) door opening with a hinge and (\textbf{d}) pen placement, (\textbf{e}) bucket rotation with two iiwa arms on hardware. Each plan is generated online in the order of a minute of wall-clock time.}
	\label{fig:banner}
	\vskip -0.25 true in
\end{figure}

Faced with such challenges, many existing works have sought to explicitly consider contact modes by either enumerating or sampling them. With model-based knowledge of the dynamic modes, these planners typically alternate between continuous-state planning in the current contact mode and a discrete search for the next mode, resulting in trajectories punctuated by a handful of mode changes \cite{wu2020r3t, chen2021trajectotree, cheng2021contact}. Another example of such approaches comes from transcribing the modes into binary variables, resulting in Mixed-Integer Programs \cite{aceituno2020global,marcucci2017approximate, hogan2020feedback}. However, both of these methods suffer from poor scalability: complex systems such as 3D in-hand manipulation can have trillions of contact modes \cite{huang2020efficient}. The explosion of contact modes indicates that a viable planning strategy in the regime of rich contacts needs to somehow avoid considering all possible contact sequences \cite{posa2014direct}. 

Indeed, recent works have attributed the success of RL to its stochastic nature, where the contact modes are stochastically abstracted \cite{bundledgradients,suh2022differentiable,lidec2022leveraging} by a process of sampling and averaging. This procedure, termed \emph{randomized smoothing}, stochastically smooths the underlying landscape \cite{duchi2}. These works show that randomized smoothing can locally abstract neighboring contact modes into a single average linear model, implicitly creating a \emph{stochastic force field} where contact is applied from a distance by means of randomization. This average model can give more informative gradients that are useful for planning.

However, the idea of smoothing contact modes for planning is hardly new in model-based literature. Many of the previous works in planning through contact have employed schemes to iteratively smooth contact dynamics in order to improve convergence of the solvers \cite{posa2014direct,mordatch2012contact,manchester2020variational,huang2021plasticinelab,howell2022dojo}, which leads to similar force-from-a-distance interpretations. Importantly, these methods differ from randomized smoothing as they consider the structure of the underlying equations to produce smooth approximations. We term this process \emph{analytic smoothing}. The presence of the two smoothing schemes, randomized and analytic, begs an important question: what differences, both in terms of theoretical characterizations and empirical performance, can we expect from both smoothing schemes?

Our first contribution is to establish the theoretical equivalence of the two smoothing schemes for simple systems under our framework (Sec.\ref{sec:localsmoothing},\ref{sec:smoothingequivalence}). Using this framework, we further show how to efficiently compute the locally linear models (i.e. gradients) of the smoothed dynamics online (Sec.\ref{sec:smoothing_computation_schemes}), and show that both smoothing schemes result in comparable qualitative characteristics, as well as empirical performance on a number of complex examples (Sec.\ref{sec:traj_opt},\ref{sec:rrt_results})

Furthermore, at the heart of the model-based planning approach we take lies the question: what is the \emph{right} model for gradient-based contact-rich manipulation planning? 
We believe this model needs to be 
(\textbf{i}) numerically robust
and (\textbf{ii}) differentiable, so that local linearizations can be efficiently computed for planning. 
Most importantly, the model should be able to (\textbf{iii}) predict \emph{long-term} behavior, so that the planner can look far ahead while taking few steps.
Lastly, the model should be (\textbf{iv}) amenable to smoothing in order to provide more informative gradients across contact modes. 

Our second contribution (Sec.\ref{sec:quasidynamic}) is a contact dynamics model that satisfies all requirements. Specifically, we propose a \emph{convex} implicit time-stepping contact model (Sec.\ref{sec:convex_quasi_dynamic_contact_dynamics}). Convexity is achieved by Anitescu's relaxation of frictional contact constraints \cite{anitescu2006optimization}, which in practice only introduces mild non-physical behavior \cite{pang2021convex, castro2021unconstrained}. The convexity provides a clear numerical advantage over the traditional Linear Complementarity Problem (LCP) formulation \cite{anitescu1997formulating, stewart2000rigid}. Moreover, the derivatives of the dynamics with respect to the current state and control action can be readily computed using sensitivity analysis for convex conic programs \cite{agrawal2019differentiatingcone} (Sec.\ref{sec:quasi_dynamic_derivatives}). 

For long-term predictability, we adopt the quasi-dynamic assumption widely used in robotic manipulation \cite{mason2001mechanics, motioncones, aceituno2020global, cheng2021contact}. 
A quasi-dynamic model ``sees'' further than its second-order counterpart by ignoring transient dynamics and focuses on transitions between steady-states. Furthermore, by throwing away kinetic energy at every time step, quasi-dynamic models are also simpler as they do not need variables for velocity, nor parameters to describe damping.

We validate and test our quasi-dynamic contact model by running the same input trajectories in a high-fidelity second-order simulator Drake \cite{drake}, as well as on hardware (Sec.\ref{sec:sim2real}). Our results show that our model is able to approximate the second-order dynamics well if the system considered is highly damped and truly dominated by frictional forces.

Our contact model can also be analytically smoothed out using a log-barrier relaxation (Sec.\ref{sec:analyticsmoothing}). In this relaxation scheme, the hard contact constraints are softly enforced by a log-barrier function, a common technique used in the interior-point method for convex programs \cite[\textsection 11]{boyd2004convex} \cite{howell2022dojo}.
We further show that the gradients of the smoothed contact model can be easily computed with the implicit function theorem.

Finally, we believe that another key factor behind the empirical success of RL lies in its goal of performing global optimization with stochasticity. In contrast, deterministic model-based optimization for planning through nonlinear dynamics generally results in non-convex optimization problems, where the quality of different local minima can be a make-or-break factor. Solving such problems requires non-trivial initialization and tuning \cite{onol2020tuning}, both of which can be highly problem specific and notoriously hard to debug. This motivates us to search further for a more global approach.

In robotics, sampling-based motion planning (SBMP) algorithms such as the Rapidly-Exploring Random Tree (RRT) \cite{lavalle1998rapidly} are widely used for global search problems, including those with kinodynamic constraints \cite{karaman2010optimal}. However, the success of such planners has rarely extended to contact-rich settings. We find that while optimization-based methods for planning through contact have employed smoothing schemes, all existing SBMP methods for contact planning explicitly consider modes instead of smoothing them \cite{cheng2021contact,wu2020r3t,chen2021trajectotree,motioncones,terry}, as SBMP methods do not inherently require local characterizations of dynamics (i.e. gradients). Yet, previous works have shown that such local models are highly relevant for designing more efficient distance metrics during the nearest neighbor queries that respects dynamic reachability \cite{shkolnik2009reachability,wu2020r3t,haddad2021anytime}.

Our final contribution is to fill in this gap by combining smoothing-based contact mode abstraction and the global search capabilities of RRT. We enable RRT to search through contact dynamics constraints by utilizing a novel distance metric derived from the local smoothed models (Sec.\ref{sec:mahalanobis}). In addition, we propose an efficient extension step by computing actions that expand the tree using the local models (Sec.\ref{sec:rrt_for_contact}). With a variety of contact-rich tasks inspired by \cite{rajeswaran2018learning} that involve both intrinsic and extrinsic dexterity \cite{extrinsic}, we show that combining smoothing with RRT achieves tractable global motion planning for highly contact-rich manipulation (Sec.\ref{sec:rrt_results}). To the best of our knowledge, our work appears to be the first to successfully combine SBMP with contact mode smoothing.

\textbf{Summary of Contributions}. (\textbf{i}) We establish the theoretical equivalence of methods for randomized and analytic smoothing on simple systems, and qualitative / empirical equivalence on complex systems. (\textbf{ii}) We present a convex, differentiable formulation of quasi-dynamic contact dynamics and its analytic smoothing, which we show to be highly effective for contact-rich manipulation planning. (\textbf{iii}) We combine contact mode smoothing with sampling-based motion planning, filling in a gap in the spectrum of existing methods and achieving efficient global planning through highly-rich contact dynamics. 

\section{Local Theory of Smoothing}\label{sec:localsmoothing}
\noindent Before discussing complex systems with contact, we formalize mathematically what it means to smooth a function, as well as different algorithms to compute their local approximations. We aim to provide a coherent view of the different smoothing schemes and their equivalence relations.

Consider a locally-Lipschitz (but potentially non-smooth) function $f:\mathbb{R}^n\rightarrow\mathbb{R}^m$. We consider a class of iterative algorithms that rely on finding a locally-affine approximation to $f$ at every iteration, around the current operating point $\bar{x}$. We denote this affine model as $g(\cdot;\bar{x})$,
\begin{equation}
    g(x;\bar{x}) \coloneqq \mathbf{J}(\bar{x})\underbrace{(x - \bar{x})}_{=:\delta x} + \mu(\bar{x}).
\end{equation}
which consists of the sensitivity term $\mathbf{J}(\bar{x})\in\mathbb{R}^{m\times n}$, and a bias term $\mu(\bar{x})\in\mathbb{R}^m$, which we refer to as model parameters $\left(\mathbf{J}\left(\bar{x}\right),\mu\left(\bar{x}\right)\right)$. 

If $f\in C^1$ (i.e. smooth), then a natural candidate to use for the model parameters is the first-order Taylor expansion of $f$ around $\bar{x}$, also known as the \emph{linearization} of the function. The linearization requires obtaining the gradient of $f$.

\begin{definition}\label{def:linearization}\normalfont
    We define the \emph{\emph{linearization}} of $f$ around the nominal point $\bar{x}$ as $\bar{f}$,
    \begin{equation}
        \label{eq:f_bar}
        \begin{aligned}
            \bar{f}(x;\bar{x}) \coloneqq \frac{\partial f}{\partial x}\bigg|_{x=\bar{x}}(x-\bar{x}) + f(\bar{x}).
        \end{aligned}
    \end{equation}
\end{definition}
\noindent Note that $\bar{f}$ is a locally-affine model with parameters $(\partial f/\partial x|_{x=\bar{x}},f(\bar{x}))$.

However, when $f$ is non-smooth with fast-changing discontinuous gradients, the Taylor expansion \eqref{eq:f_bar} quickly deviates from $f$. Therefore, approaches that utilize \eqref{eq:f_bar} (e.g. approaches that use the gradient) often suffer greatly. In this section, we investigate a set of smoothing methods that can give more informative locally-affine approximations to $f$. 

\subsection{Smooth Surrogate of a Function}
We first formalize the process of smoothing a function $f$ by introducing the \emph{smooth surrogate}, which is a modification of $f$ that closely resembles the original function, yet is smooth.

\subsubsection{The Smooth Surrogate}
We define a smooth surrogate using the process of convolution.
\begin{definition}\label{def:surrogate}\normalfont
    Given a probability distribution $\rho$ characterized by a smooth density function, we define $f_\rho$ as the \underline{smooth surrogate} of $f$,
    \begin{equation}
        \label{eq:smooth_surrogate_def}
        f_\rho(x) \coloneqq \mathbb{E}_{w\sim\rho}[f(x + w)], 
    \end{equation}
\end{definition}

The smooth surrogate provides us with a natural formalism of common smooth approximations that are made to non-smooth functions. We give some examples of smooth surrogates that are commonly used in machine learning and robotics, as well as their corresponding densities, in Table \ref{table:examples}.

\begin{table}[thpb]
\centering
\begin{tabular}{||c | c | c||} 
 \hline
 $f$ & $\rho$ & $f_\rho$ \\ [0.5ex] 
 \hline\hline
 \href{https://en.wikipedia.org/wiki/Rectifier_(neural_networks)}{\texttt{ReLU}} & \href{https://en.wikipedia.org/wiki/Logistic_distribution}{\texttt{Logistic}} & \href{https://pytorch.org/docs/stable/generated/torch.nn.Softplus.html}{\texttt{SoftPlus}} \\
 \href{https://en.wikipedia.org/wiki/Heaviside_step_function}{\texttt{Heaviside}} & \href{https://en.wikipedia.org/wiki/Cauchy_distribution}{\texttt{Cauchy}} & \href{https://en.wikipedia.org/wiki/Inverse_trigonometric_functions}{\texttt{ArcTan}} \\
  \href{https://en.wikipedia.org/wiki/Heaviside_step_function}{\texttt{Heaviside}} & \href{https://en.wikipedia.org/wiki/Logistic_distribution}{\texttt{Logistic}} & \href{https://en.wikipedia.org/wiki/Sigmoid_function}{\texttt{Sigmoid}} \\
 \hline
\end{tabular}
\caption{Examples of smooth surrogates of non-smooth functions.}
\label{table:examples}
\end{table}

Note that the expectation \eqref{eq:smooth_surrogate_def} is applied pointwise. Given a single point $x$, it is desirable to average points from a neighborhood that is centered at $x$. Thus, we make the following assumption on the distribution $\rho$, which we further assume to be smooth and elliptical to enforce regularity. 

\begin{assumption}\normalfont
    We assume $\rho$ is a zero-mean elliptical distribution with an integrable and smooth density: $\rho\in L^1,\rho\in C^1$.
\end{assumption}

The smoothness of $f_\rho$, despite the non-smoothness of $f$, is formalized as the following lemma which follows from properties of convolution. 

\begin{lemma}\normalfont
    If $\rho\in C^1$, then $f_\rho \in C^1$.
\end{lemma}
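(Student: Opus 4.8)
The plan is to exploit the convolution structure of the smooth surrogate so that all of the $x$-dependence is shifted onto the density $\rho$, and then to differentiate under the integral sign. First I would perform the change of variables $u = x + w$ in Definition~\ref{def:surrogate} to write
\begin{equation}
  f_\rho(x) = \int_{\mathbb{R}^n} f(x+w)\,\rho(w)\,dw = \int_{\mathbb{R}^n} f(u)\,\rho(u - x)\,du = (f * \rho)(x),
\end{equation}
where the last equality uses the symmetry $\rho(-v) = \rho(v)$ guaranteed by the zero-mean elliptical assumption. The point of this rewriting is that the integrand now depends on $x$ only through the smooth factor $\rho(u-x)$, while the (merely locally-Lipschitz, possibly non-smooth) factor $f(u)$ carries no $x$-dependence at all.

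Next I would establish differentiability by transferring the derivative onto $\rho$. Formally, for each coordinate $i$,
\begin{equation}
  \frac{\partial f_\rho}{\partial x_i}(x) = \int_{\mathbb{R}^n} f(u)\,\frac{\partial}{\partial x_i}\rho(u-x)\,du = -\int_{\mathbb{R}^n} f(u)\,\frac{\partial \rho}{\partial v_i}(u-x)\,du,
\end{equation}
where $\partial\rho/\partial v_i$ exists and is continuous precisely because $\rho \in C^1$. The substantive step is to justify the interchange of differentiation and integration, for which I would invoke the Leibniz rule via dominated convergence: it suffices to exhibit, on a neighborhood of each $\bar{x}$, an integrable dominating function $h(u)$ with $|f(u)\,(\partial\rho/\partial v_i)(u-x)| \le h(u)$ uniformly in $x$. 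This is where the hypotheses $\rho \in L^1$ and $\rho \in C^1$ enter, together with the elliptical decay of $\rho$ and its derivatives, which must outpace the at-most-linear (local Lipschitz) growth of $f$.

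Finally, continuity of the gradient follows from the same machinery: since $(\partial\rho/\partial v_i)(u-x)$ is continuous in $x$ and dominated by the same integrable $h$, a second application of dominated convergence shows that $x \mapsto \partial f_\rho/\partial x_i(x)$ is continuous, hence $f_\rho \in C^1$. I expect the main obstacle to be the domination step rather than the formal manipulations: the bare statement $\rho \in L^1 \cap C^1$ does not by itself control the tail behavior of $\partial\rho/\partial v_i$ against an unbounded $f$, so I would either strengthen the hypothesis (e.g. also require $\nabla\rho$ integrable and $f$ of at most polynomial growth, as holds for the Gaussian/logistic/Cauchy examples in Table~\ref{table:examples}) or restrict attention to the local neighborhood on which these integrals are manifestly finite. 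With that domination in hand, the two dominated-convergence arguments are routine.
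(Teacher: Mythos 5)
Your proof follows essentially the same route as the paper's: the change of variables $y = x + w$ followed by differentiation under the integral sign (Leibniz rule), which pushes the derivative onto the smooth density $\rho$. You are in fact more careful than the paper, which invokes the Leibniz rule without exhibiting a dominating function; your observation that $\rho \in L^1 \cap C^1$ alone does not control $\nabla\rho$ against an unbounded locally-Lipschitz $f$, so that some additional decay of $\nabla\rho$ (satisfied by the elliptical examples in Table~\ref{table:examples}) is implicitly needed, is a genuine refinement rather than a divergence.
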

\begin{proof}
    We show that the gradients are well-defined using Leibniz integral rule,
    \begin{equation}
        \begin{aligned}
            \frac{d}{dx} f_\rho(x) & = \textstyle\frac{d}{dx} \int f(x+w)\rho(w)dw \\
                                   & = \textstyle\frac{d}{dx} \int f(y)\rho(y-x)dy \\
                                   & = \textstyle\int f(y) \left(\frac{d}{dx} \rho(y-x)\right) dy,
        \end{aligned}
    \end{equation}
    where we make the reparametrization $y=x+w$, and $\frac{d}{dx} \rho(y-x)$ is well-defined since $\rho\in C^1$.
\end{proof}

\subsubsection{Linearization of the Smooth Surrogate}

Since $f_\rho\in C^1$, first-order Taylor approximation now holds for $f_\rho$ and we can successfully use the linearization of $f_\rho$ as an affine model that best represents $f_\rho$ at an operating point of $\bar{x}$.
\begin{definition}\normalfont
We apply the definition of linearization on $f_\rho$ to define the linearization of the smooth surrogate, 
\begin{equation}
    \begin{aligned}
    \mathbf{J}_\rho(\bar{x}) & \coloneqq \textstyle\frac{\partial}{\partial x} f_\rho(\bar{x}) = \textstyle\frac{\partial}{\partial x}\mathbb{E}_{w\sim\rho}[f(x+w)]|_{x=\bar{x}} \\
    \mu_\rho(\bar{x}) & \coloneqq f_\rho(\bar{x}) = \mathbb{E}_{w\sim\rho}[f(\bar{x}+w)].
    \end{aligned}
\end{equation}
\end{definition}
We note that there are two alternative ways to characterize $\mathbf{J}_\rho$. First, the following lemma establishes the validity of the \emph{reparametrization gradient} of $f_\rho$ \cite{kingma}, which requires access to gradients of $f$.  
\begin{lemma}\normalfont\label{lem:reparametrization}
    $\mathbf{J}_\rho$ can be obtained by taking the expectation of gradients,
    \begin{equation}
    \mathbf{J}_\rho(\bar{x}) = \textstyle\mathbb{E}_{w\sim\rho}\left[\frac{\partial}{\partial x}f(x)\big|_{x=\bar{x}+w}\right].
    \end{equation}
    \begin{proof}
        The exchange of derivative and expectation follows from Leibniz integral rule. 
    \end{proof}
\end{lemma}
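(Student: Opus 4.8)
The plan is to start from the integral form of the smooth surrogate, $f_\rho(x) = \int f(x+w)\rho(w)\,dw$, and to compute $\mathbf{J}_\rho(\bar{x}) = \frac{\partial}{\partial x}f_\rho(\bar{x})$ directly by differentiating under the integral sign. The only difference between the claimed identity and the definition of $\mathbf{J}_\rho$ is the position of the derivative operator relative to the expectation, so the entire content of the lemma is the legitimacy of this interchange. I would therefore reduce everything to verifying the hypotheses of the Leibniz integral rule (equivalently, dominated convergence) for the parameter-dependent integrand $w \mapsto f(\bar{x}+w)\rho(w)$.

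Concretely, I would work coordinate-wise and write the $i$-th partial derivative as the limit of a difference quotient,
\[
\frac{\partial}{\partial x_i}f_\rho(\bar{x}) = \lim_{t\to 0}\int \frac{f(\bar{x}+te_i+w)-f(\bar{x}+w)}{t}\,\rho(w)\,dw,
\]
and then pass the limit inside the integral. Two facts make the pointwise integrand well behaved. First, because $f$ is locally Lipschitz, Rademacher's theorem guarantees that $f$ is differentiable for almost every argument; since $\rho$ is absolutely continuous (it has a density), the set of $w$ on which $\frac{\partial}{\partial x}f(x)\big|_{x=\bar{x}+w}$ fails to exist has $\rho$-measure zero and is invisible to the integral. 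Hence the integrand converges $\rho$-almost everywhere to $\frac{\partial}{\partial x_i}f(x)\big|_{x=\bar{x}+w}$. Second, the local Lipschitz property bounds each difference quotient by a local Lipschitz constant, which supplies the candidate dominating function.

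The main obstacle is precisely the construction of an \emph{integrable} dominating function, since the integration runs over the whole support of $\rho$, which may be unbounded (e.g.\ Gaussian). Local Lipschitzness only controls $f$ on compact sets, so to dominate the difference quotients uniformly in $t$ near $\bar{x}$ one needs the growth of the relevant Lipschitz constant $L(w)$ to be integrable against the tails of $\rho$ — a mild compatibility assumption between the growth of $f$ and the decay of $\rho$, automatically satisfied when $f$ is globally Lipschitz or when $\rho$ has compact support. Under such a bound, dominated convergence licenses the interchange, and the resulting integral $\int \frac{\partial}{\partial x}f(x)\big|_{x=\bar{x}+w}\,\rho(w)\,dw$ is by definition $\mathbb{E}_{w\sim\rho}\big[\frac{\partial}{\partial x}f(x)\big|_{x=\bar{x}+w}\big]$, which is the claim. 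I would note that this is the complementary route to Lemma~1: there the derivative was moved onto the smooth density $\rho$ to establish $f_\rho\in C^1$ \emph{without} differentiating $f$, whereas here it is kept on $f$, which is why the almost-everywhere differentiability of $f$ and a domination hypothesis reenter the argument.
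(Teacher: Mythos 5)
Your proof takes the same route as the paper, whose entire argument is the one-line invocation ``the exchange of derivative and expectation follows from Leibniz integral rule''; you have simply unpacked that invocation into its constituent steps (difference quotients, Rademacher's theorem to get $\rho$-a.e.\ convergence of the quotients, and dominated convergence to pass the limit inside). Your observation that the dominating function must be integrable against the tails of $\rho$ --- requiring a compatibility condition between the growth of the local Lipschitz constant of $f$ and the decay of $\rho$ --- is a genuine hypothesis that the paper leaves implicit, and correctly identifying it is the most substantive content of your write-up.
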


The next lemma establishes the validity of the \emph{score function} gradient of $f_\rho$, which does not requires access to gradients of $f$. This gradient, commonly used in RL, is also referred to as the likelihood ratio, or the REINFORCE gradient \cite{reinforce}.

\begin{lemma}\normalfont(\textbf{Stein's Lemma})\label{lem:stein}
    For any choice of $b$ that does not depend on $w$, $\mathbf{J}_\rho$ can be obtained as follows,
    \begin{equation}
    \mathbf{J}_\rho(\bar{x}) = \mathbb{E}_{w\sim \rho}\bigg[\left[f(\bar{x} + w)-b\right]\underbrace{\left[-\frac{\nabla \rho(w)}{\rho(w)}\right]^\intercal}_{=:S(w)}\bigg].
    \end{equation}
    \begin{proof}
        We refer to \cite{stein,stein2} for detailed proof for the case that $b=0$. If $b$ is non-zero, we use the fact that the expectation of the score function $S(w)$ is zero, $\mathbb{E}_w S(w)=0$ to show that the choice of $b$ has no impact on the equality.
    \end{proof}
\end{lemma}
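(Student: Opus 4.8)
The plan is to derive the baseline-free ($b=0$) identity directly from the definition $\mathbf{J}_\rho(\bar{x}) = \frac{\partial}{\partial x}\mathbb{E}_{w\sim\rho}[f(x+w)]|_{x=\bar{x}}$, and then argue separately that the baseline $b$ drops out because the score function integrates to zero. The essential idea, which distinguishes this from Lemma~\ref{lem:reparametrization}, is to move the derivative off the (non-smooth) map $f$ and onto the smooth density $\rho$. First I would reparametrize the convolution exactly as in the smoothness lemma, writing $f_\rho(x) = \int f(y)\rho(y-x)\,dy$ via the substitution $y = x+w$, so that the entire $x$-dependence lives in the argument of $\rho$.

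Next I would differentiate under the integral sign and use the chain rule $\frac{\partial}{\partial x}\rho(y-x) = -\nabla\rho(y-x)^\intercal$, then reparametrize back to $w = y-\bar{x}$ to obtain $\mathbf{J}_\rho(\bar{x}) = -\int f(\bar{x}+w)\,\nabla\rho(w)^\intercal\,dw$. The key algebraic move is then the standard log-derivative trick: multiply and divide the integrand by $\rho(w)$ to rewrite $-\nabla\rho(w)^\intercal$ as $\rho(w)\,S(w)$, with $S(w) = [-\nabla\rho(w)/\rho(w)]^\intercal$, which recasts the integral as an expectation against $\rho$ and yields precisely the claimed formula with $b=0$.

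To handle the general baseline, I would invoke linearity of expectation to split $\mathbb{E}_w\big[[f(\bar{x}+w)-b]\,S(w)\big] = \mathbb{E}_w[f(\bar{x}+w)S(w)] - b\,\mathbb{E}_w[S(w)]$ and show the second term vanishes. This reduces to proving $\mathbb{E}_{w\sim\rho}[S(w)] = 0$, which follows from $\mathbb{E}_{w\sim\rho}[S(w)^\intercal] = -\int \nabla\rho(w)\,dw$ and the fact that the integral of the gradient of a decaying density is zero. Since $b$ does not depend on $w$ it factors out of the expectation, so the baseline cannot affect the equality—this is exactly the observation the paper uses.

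The main obstacle is analytic rather than algebraic: rigorously justifying the two interchanges of limit and integral. The exchange of $\frac{\partial}{\partial x}$ with $\int$ is the same Leibniz step already invoked earlier, covered by $\rho\in C^1$ together with the local Lipschitzness of $f$. The more delicate point is the vanishing of $\int\nabla\rho(w)\,dw$, which amounts to the boundary flux of $\rho$ at infinity being zero; note that $\rho\in L^1\cap C^1$ alone does not force $\rho(w)\to 0$ as $\norm{w}\to\infty$. I would therefore lean on the assumed zero-mean elliptical (hence radially decaying) structure of $\rho$, or equivalently assume $\nabla\rho\in L^1$ with vanishing tails, to close this gap cleanly.
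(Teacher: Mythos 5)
Your proof is correct and matches the paper's: the paper simply defers the $b=0$ case to the cited references, and your derivation (moving the derivative onto $\rho$ via the convolution reparametrization, then the log-derivative trick) is exactly the standard argument those references contain, while your treatment of the baseline via $\mathbb{E}_w[S(w)]=0$ is word-for-word the paper's own observation. Your added caution that $\rho\in L^1\cap C^1$ alone does not force the boundary flux $\int\nabla\rho(w)\,dw$ to vanish, and that the decaying elliptical structure of $\rho$ is what closes this, is a fair refinement the paper leaves implicit.
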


\subsection{Rethinking Linearization as a Minimizer}

We also provide an alternative motivation of smoothing. Instead of interpreting smoothing as modifying $f$ directly, we can provide an alternative interpretation of linearization: what is the \emph{best} affine approximation of $f$ around a nominal point $\bar{x}$, and according to which metric? We propose a natural objective of the residual between the original function $f$ and the linear model $g$, weighted by some distribution $\rho$. Since $f_\rho$ is an affine model that has to minimize the residual from a distribution rather than a single point, we expect the affine model to be more informative compared to the exact linearization, especially when $f$ is non-smooth. We formalize this through the following theorem.

\begin{theorem}\label{thm:regression}\normalfont
Let $\rho$ be a zero-mean Gaussian with covariance $\mathbf{\Sigma}$, $\rho(w)=\mathcal{N}(w;0,\mathbf{\Sigma})$. Consider the problem of choosing the best affine model with parameters $(\mathbf{J},\mu)$ such that the residual distributed according to $\rho$ is minimized,
\begin{equation}\label{eq:residual}
    \underset{\mathbf{J},\mu}{\minimize} \; \frac{1}{2}\mathbb{E}_{w\sim\rho} \big[\|f(\bar{x}+w) - \mathbf{J} w - \mu \|^2_2\big].
\end{equation}
Then, the solution to the problem is given by the exact linearization of the smooth surrogate,
\begin{equation}
    \mathbf{J}^*=\mathbf{J}_\rho(\bar{x}),\quad \mu^* = \mu_\rho(\bar{x}).
\end{equation}
\end{theorem}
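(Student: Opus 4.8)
The plan is to exploit the fact that the objective in \eqref{eq:residual} is a convex quadratic in the decision variables $(\mathbf{J},\mu)$: its Hessian is positive semidefinite, so any stationary point is a global minimizer and it suffices to solve the first-order (normal) equations. First I would set the gradient of the objective with respect to $\mu$ to zero, which gives $\mathbb{E}_{w\sim\rho}[f(\bar{x}+w) - \mathbf{J}w - \mu] = 0$. Since $\rho$ is zero-mean, the term $\mathbb{E}_w[\mathbf{J}w] = \mathbf{J}\,\mathbb{E}_w[w]$ vanishes, so this collapses immediately to $\mu^* = \mathbb{E}_w[f(\bar{x}+w)] = \mu_\rho(\bar{x})$, independent of $\mathbf{J}$.

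Next I would differentiate with respect to $\mathbf{J}$ and set the result to zero, yielding the matrix normal equation $\mathbb{E}_w[(f(\bar{x}+w) - \mathbf{J}w - \mu)\,w^\intercal] = 0$. Again using $\mathbb{E}_w[w]=0$ to drop the $\mu$ contribution, and using $\mathbb{E}_w[ww^\intercal] = \mathbf{\Sigma}$ for the zero-mean Gaussian, this reduces to $\mathbb{E}_w[f(\bar{x}+w)\,w^\intercal] = \mathbf{J}\,\mathbf{\Sigma}$, and hence $\mathbf{J}^* = \mathbb{E}_w[f(\bar{x}+w)\,w^\intercal]\,\mathbf{\Sigma}^{-1}$.

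The central remaining step is to identify this closed-form least-squares solution with the linearization $\mathbf{J}_\rho(\bar{x})$ of the smooth surrogate, and this is exactly where Stein's lemma (\cref{lem:stein}) does the work. For the Gaussian density $\rho(w)=\mathcal{N}(w;0,\mathbf{\Sigma})$ one has $\nabla\rho(w) = -\mathbf{\Sigma}^{-1}w\,\rho(w)$, so the score direction is $-\nabla\rho(w)/\rho(w) = \mathbf{\Sigma}^{-1}w$ and $S(w) = [\mathbf{\Sigma}^{-1}w]^\intercal = w^\intercal\mathbf{\Sigma}^{-1}$. Taking $b=0$ in \cref{lem:stein} then gives $\mathbf{J}_\rho(\bar{x}) = \mathbb{E}_w[f(\bar{x}+w)\,w^\intercal\mathbf{\Sigma}^{-1}] = \mathbb{E}_w[f(\bar{x}+w)\,w^\intercal]\,\mathbf{\Sigma}^{-1}$, which coincides with $\mathbf{J}^*$. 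Convexity then certifies that $(\mathbf{J}^*,\mu^*)$ is the global minimizer, completing the argument.

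The main obstacle I anticipate is precisely this last identification: without invoking Stein's lemma one would have to establish directly that $\mathbb{E}_w[f(\bar{x}+w)\,w^\intercal]\,\mathbf{\Sigma}^{-1}$ equals $\partial_x f_\rho(\bar{x})$, which amounts to a differentiation-under-the-integral / integration-by-parts argument leaning on the specific algebraic form of the Gaussian gradient $\nabla\rho(w) = -\mathbf{\Sigma}^{-1}w\,\rho(w)$. Since that computation is exactly the content of the already-proven \cref{lem:stein}, the obstacle is reduced to the routine verification of the Gaussian score function, and the rest is elementary linear algebra on the normal equations.
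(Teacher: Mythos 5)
Your proposal is correct and follows essentially the same route as the paper: solve the convex normal equations for $(\mathbf{J},\mu)$, use the zero mean of $\rho$ to decouple them, and invoke Stein's lemma with the Gaussian score $S(w)=w^\intercal\mathbf{\Sigma}^{-1}$ to identify $\mathbb{E}_w[f(\bar{x}+w)w^\intercal]\mathbf{\Sigma}^{-1}$ with $\mathbf{J}_\rho(\bar{x})$. Your version is in fact slightly more careful about the order of the matrix factors (the paper writes $\mathbb{E}[ww^\intercal]^{-1}$ on the left, which only typechecks after transposing), but the substance is identical.
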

\noindent\begin{proof}
    (\ref{eq:residual}) is a linear regression problem and convex in the arguments, and thus, the first-order stationarity condition implies optimality. Letting $F(\mathbf{J},\mu)$ denote the objective function, the gradients are 
    \begin{subequations}
        \begin{align}\label{eq:stationarityone}
            \partial F/\partial \mu & = \mathbb{E}_{w}[f(\bar{x}+w)] - \mu^*, \\
            \label{eq:stationaritytwo}
            \partial F/\partial \mathbf{J} & = \mathbb{E}_{w}[ww^\intercal]\mathbf{J}^* - \mathbb{E}_w[f(\bar{x}+w)w^\intercal].
        \end{align}
    \end{subequations}
    Setting them to zero, \eqref{eq:stationarityone} directly leads to the solution for $\mu^*$. In addition, $\mathbf{J}^*$ can be obtained from \eqref{eq:stationaritytwo}:
    \begin{equation}
        \begin{aligned}
            \mathbf{J}^* & = \mathbb{E}_{w\sim\rho}[ww^\intercal]^{-1}\mathbb{E}_{w\sim\rho}[f(\bar{x}+w)w^\intercal] \\
            & = \textstyle\frac{\partial}{\partial x} \mathbb{E}_{w\sim\rho}[f(x+w)]|_{x=\bar{x}},
        \end{aligned}
    \end{equation}
    which comes from using Stein's lemma, where the score function of the Gaussian is $S(w) = \mathbf{\Sigma}^{-1} w$.
\end{proof}

Theorem \ref{thm:regression} offers another insight of why smoothing is necessary if we generalize the objective around a distribution. A slightly technical modification of this theorem, which replaces $\mathbf{J}w$ in \eqref{eq:residual} with $\mathbf{J}\mathbf{F}^{-1}(w)S(w)$ where $\mathbf{F}(w)$ is the Fisher information matrix $\mathbf{F}(w)\coloneqq\mathbb{E}_{w\sim\rho}[S(w)S(w)^\intercal]$, also applies for general elliptical distributions with identical proof. However, we document the Gaussian case for intuition.

\begin{example}\label{ex:theory}\normalfont
    To illustrate the different smoothing schemes and the implication of Theorem \ref{thm:regression}, we plot a simple example for a polynomial $f$ in Fig. \ref{fig:theoryexample}. The example shows that when least-squares fit is performed on the samples of $f(\bar{x}+w_i)$, the linear model that minimizes \eqref{eq:residual} converges to the exact linearization of the smooth surrogate $f_\rho$.
    \begin{figure}[thpb]
	\centering\includegraphics[width = 0.5\textwidth]{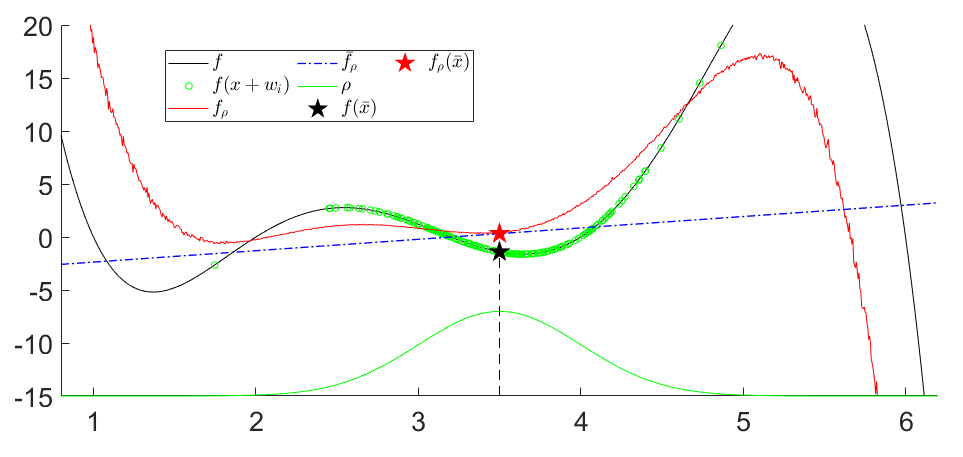}
    \caption{Polynomial $f$ (black) and its smooth surrogate $f_\rho$ (red), for the case where $\rho$ is a Gaussian (green). The plotted $f_\rho$ is obtained using Monte-Carlo estimation. When $f(\bar{x}+w_i)$ (green) are sampled around $\bar{x}$ and the least squares fit is performed (blue line, $\bar{f}_\rho$), it converges to the exact linearization of the smooth surrogate $f_\rho$.}
	\label{fig:theoryexample}
    \end{figure}
\end{example}

\subsection{Computation of the Locally Linear Model \label{sec:smoothing_computation_schemes}}
The equivalence between different characterizations of $(\mathbf{J}_\rho,\mu_\rho)$ implies different methods to compute them with access to $f$. We investigate three major methods to compute these quantities.

\subsubsection{Analytical Smoothing \label{sec:analytic_smoothing}} If $f$ and $\rho$ are sufficiently structured, one can analytically compute the convolution $f_\rho$ and its derivative $\J_\rho$, similar to the analytic derivatives of $f_\rho$ on simple functions from Table \ref{table:examples}.

\subsubsection{Randomized Smoothing, First-Order \label{sec:randomized_smoothing}}
If $\rho$ is a distribution that is easy to sample from, one can compute a Monte-Carlo estimate of $\mu_\rho$ by averaging the samples.
\begin{lemma}\normalfont
    The sample mean of $f(x + w_i)$, where $w_i$ are drawn i.i.d. from $\rho$, is an unbiased estimator of $\mu_\rho$,
    \begin{equation}
        \mathbb{E}_{w_i\iidsim\rho}\left[\frac{1}{N}\sum^N_{i=1}f(\bar{x} + w_i) \right] = \mu_\rho(\bar{x}).
    \end{equation}
\end{lemma}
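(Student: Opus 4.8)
The plan is to reduce the claim to a single application of linearity of expectation, leveraging the definition of the smooth surrogate from Definition~\ref{def:surrogate}. First I would pull the deterministic factor $1/N$ and the finite summation outside the expectation taken over the joint law of $(w_1,\dots,w_N)$. This exchange is licensed by integrability: under the standing assumption $\rho\in L^1$, together with local Lipschitzness of $f$, the quantity $f_\rho(\bar{x})=\mathbb{E}_{w\sim\rho}[f(\bar{x}+w)]$ is finite — a fact already relied upon throughout this section — so the expectation of a finite sum of integrable terms splits into the sum of the individual expectations.

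Next, because the $w_i$ are identically distributed according to $\rho$, each summand satisfies
\begin{equation}
    \mathbb{E}_{w_i\sim\rho}\!\left[f(\bar{x}+w_i)\right] = \mathbb{E}_{w\sim\rho}\!\left[f(\bar{x}+w)\right] = f_\rho(\bar{x}),
\end{equation}
which is precisely $\mu_\rho(\bar{x})$ by the definition $\mu_\rho(\bar{x}):=f_\rho(\bar{x})$ introduced just above. I would remark here that the independence of the samples is not actually invoked for unbiasedness; identical distribution alone suffices, and the i.i.d.\ hypothesis matters only when one later analyzes the estimator's variance.

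Finally I would collect the $N$ identical contributions: the sum evaluates to $N\,\mu_\rho(\bar{x})$, and multiplying by the prefactor $1/N$ recovers $\mu_\rho(\bar{x})$, which establishes the claimed equality.

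The main obstacle is essentially absent here — the statement is a textbook unbiasedness argument. The only technical point worth flagging explicitly is the integrability that justifies interchanging expectation with the finite sum, but this follows immediately from $\rho\in L^1$ and the well-definedness of $f_\rho$ already assumed in the section, so the proof is short and direct.
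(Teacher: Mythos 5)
Your proof is correct, and it actually takes a cleaner route than the paper does. The paper's own proof of this lemma is a one-line appeal to the law of large numbers, which strictly speaking addresses asymptotic consistency of the sample mean (convergence to $\mu_\rho$ as $N\to\infty$) rather than the finite-$N$ unbiasedness the lemma actually asserts. Your argument --- linearity of expectation over a finite sum of identically distributed, integrable terms, followed by the observation that each summand has expectation $f_\rho(\bar{x})=\mu_\rho(\bar{x})$ by Definition~\ref{def:surrogate} --- is the standard and correct justification for unbiasedness, and your remark that independence is irrelevant here (only identical distribution matters) is a nice additional observation the paper does not make. The only thing your write-up buys at the cost of a little extra length is explicitness about integrability, which you correctly tie to $\rho\in L^1$ and local Lipschitzness of $f$; the paper leaves this implicit. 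In short: same conclusion, but your argument is the one that actually proves the stated equality, whereas the paper's cited tool proves a related but different (asymptotic) fact.
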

\begin{proof}
    Law of large numbers.
\end{proof}
Furthermore, Lemma \ref{lem:reparametrization} implies that $\mathbf{J}_\rho$ can be estimated by sampling the gradients. We note that crucially, use of this method requires $f$ to be locally-Lipschitz (i.e. continuous). The failure mode of first-order randomized smoothing is documented in \cite{teg,bundledgradients,suh2022differentiable} and also briefly illustrated in Example \ref{ex:computation}.

\begin{lemma}\normalfont
    Under locally Lipschitz $f$, the sample mean of $\textstyle\frac{\partial f}{\partial x}(x + w_i)|_{x=\bar{x}}$, where $w_i$ are drawn i.i.d. from $\rho$, is an unbiased estimator of $\mathbf{J}_\rho$,
    \begin{equation}
        \mathbb{E}_{w_i\iidsim\rho}\left[\frac{1}{N}\sum^N_{i=1}\frac{\partial f}{\partial x}(\bar{x} + w_i) \right] = \mathbf{J}_\rho(\bar{x}).
    \end{equation}
\end{lemma}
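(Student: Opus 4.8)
The plan is to reduce the statement directly to the reparametrization identity of Lemma~\ref{lem:reparametrization} together with linearity of expectation, so that essentially no new computation is required. First I would observe that each summand $\frac{\partial f}{\partial x}(\bar{x}+w_i)$ is a (matrix-valued) function of a single draw $w_i\sim\rho$, and that the $w_i$ are identically distributed. Pulling the expectation through the finite sum by linearity then gives
\begin{equation}
    \mathbb{E}_{w_i\iidsim\rho}\left[\frac{1}{N}\sum_{i=1}^N\frac{\partial f}{\partial x}(\bar{x}+w_i)\right] = \frac{1}{N}\sum_{i=1}^N\mathbb{E}_{w_i\sim\rho}\left[\frac{\partial f}{\partial x}(\bar{x}+w_i)\right],
\end{equation}
and since every term shares the same law, each expectation equals the common value $\mathbb{E}_{w\sim\rho}\left[\frac{\partial f}{\partial x}(x)\big|_{x=\bar{x}+w}\right]$.

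Second, I would invoke Lemma~\ref{lem:reparametrization}, which already establishes that this common expectation is exactly $\mathbf{J}_\rho(\bar{x})$. Substituting back, the $N$ identical terms average to $\frac{1}{N}\cdot N\cdot\mathbf{J}_\rho(\bar{x}) = \mathbf{J}_\rho(\bar{x})$, which is precisely the claimed unbiasedness. In other words, the entire analytic content is already carried by the reparametrization lemma (the exchange of derivative and expectation via the Leibniz integral rule), and what remains is bookkeeping over the sum.

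The only genuinely delicate point — and the reason the locally Lipschitz hypothesis is explicitly stated — is ensuring the estimator is even well-defined, since $\partial f/\partial x$ need not exist everywhere for non-smooth $f$. Here I would appeal to Rademacher's theorem, which guarantees that a locally Lipschitz $f$ is differentiable Lebesgue-almost everywhere; because $\rho$ admits a density and is therefore absolutely continuous with respect to Lebesgue measure, the shifted point $\bar{x}+w$ lands in the differentiability set with probability one. Thus each summand is defined $\rho$-almost surely and the sample mean is a bona fide random matrix, which is exactly what licenses applying Lemma~\ref{lem:reparametrization}. I expect no substantive obstacle beyond being careful that the locally Lipschitz assumption is what rules out the pathological discontinuous-gradient regime flagged in the surrounding discussion; given that, the argument is a one-line consequence of the earlier lemma.
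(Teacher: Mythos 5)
Your proposal is correct and follows essentially the same route as the paper, which simply cites the sample-averaging argument together with Lemma~\ref{lem:reparametrization}; your use of linearity of expectation is in fact the more precise tool for unbiasedness than the paper's appeal to the law of large numbers. The additional observation that Rademacher's theorem plus absolute continuity of $\rho$ makes each summand well-defined almost surely is a welcome refinement the paper leaves implicit, not a departure from its argument.
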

\begin{proof}
    We use Law of large numbers then Lemma \ref{lem:reparametrization}.
\end{proof}

\subsubsection{Randomized Smoothing, Zeroth-Order} Finally, Lemma \ref{lem:stein} implies that we can estimate $\mathbf{J}_\rho$ in zeroth-order, which is analogous to how gradients are computed in RL \cite{reinforce, suh2022differentiable}.
\begin{lemma}\normalfont
    For any choice of $b$ that does not depend on $w$, we have 
    \begin{equation}
        \mathbb{E}_{w_i\iidsim\rho}\left[\frac{1}{N}\sum^N_{i=1} \left[f(\bar{x} + w_i) - b\right] S(w_i) \right] = \mathbf{J}_\rho(\bar{x}).
    \end{equation}
\end{lemma}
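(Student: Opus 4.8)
The plan is to reduce the claim to two ingredients already established: the linearity of expectation across the i.i.d.\ samples, and Stein's lemma (Lemma~\ref{lem:stein}) applied termwise. First I would observe that the estimator is a sample average of $N$ terms, each of the form $[f(\bar{x}+w_i)-b]S(w_i)$, and that the $w_i$ are drawn i.i.d.\ from $\rho$. Because the expectation operator is linear and each summand is an identically distributed function of a single $w_i$, I can pull the expectation inside the sum to obtain
\begin{equation}
\mathbb{E}_{w_i\iidsim\rho}\left[\frac{1}{N}\sum_{i=1}^N [f(\bar{x}+w_i)-b]S(w_i)\right] = \frac{1}{N}\sum_{i=1}^N \mathbb{E}_{w\sim\rho}\big[[f(\bar{x}+w)-b]S(w)\big],
\end{equation}
where every term in the sum is the \emph{same} single-sample expectation.

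Second, I would invoke Lemma~\ref{lem:stein} directly: it asserts that this single-sample expectation equals $\mathbf{J}_\rho(\bar{x})$ for any $b$ that does not depend on $w$. Substituting, the sum collapses to $\frac{1}{N}\cdot N\cdot \mathbf{J}_\rho(\bar{x}) = \mathbf{J}_\rho(\bar{x})$, which is exactly the claim. Note that, unlike the two first-order Monte-Carlo lemmas that invoke the law of large numbers, the present statement is one of \emph{unbiasedness} rather than consistency, so only linearity of expectation is required and no limiting argument enters.

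The one point that deserves care — and what I would flag as the crux — is the claimed invariance to the baseline $b$, since this is precisely where a naive estimator could acquire a bias. This invariance rests on the score function integrating to zero, $\mathbb{E}_{w\sim\rho}[S(w)]=0$, so that $\mathbb{E}_w[b\,S(w)] = b\,\mathbb{E}_w[S(w)] = 0$ for constant $b$; that identity in turn follows from $\int \nabla\rho(w)\,dw = 0$, guaranteed by $\rho\in L^1\cap C^1$ (Assumption~1) via the vanishing-tail argument. Since Lemma~\ref{lem:stein} already encodes this $b$-independence, I would simply cite it, while remarking that in practice $b$ is chosen as a control variate that reduces the variance of the estimator without altering its mean.
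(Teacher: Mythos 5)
Your proof is correct and takes essentially the same route as the paper's: reduce the sample average to a single-sample expectation and invoke Stein's lemma (Lemma~\ref{lem:stein}), whose $b$-invariance rests on $\mathbb{E}_{w\sim\rho}[S(w)]=0$. Your remark that only linearity of expectation is needed — rather than the law of large numbers, which the paper's one-line proof cites — is apt, since the claim is one of unbiasedness (an identity of expectations for finite $N$) and not of convergence as $N\to\infty$.
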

\begin{proof}
    We use Law of large numbers then Lemma \ref{lem:stein}.
\end{proof}
A typical choice of $b$ is the zero-noise evaluation of $f$, $b=f(\bar{x})$. Note that if $\rho$ is Gaussian, we can also take a finite-sample approximation to the least-squares problem:
\begin{equation}
\label{eq:generic_f_local_least_squares}
\begin{aligned}
\underset{\mathbf{J}_\rho,\mu_\rho}{\minimize} & \frac{1}{N}\sum^N_{i=1}\left\|f(\bar{x}+w_i) - \mathbf{J}_\rho w_i - \mu_\rho \right\|^2_2. \\
\end{aligned}
\end{equation}
Unlike the first-order estimator, the zeroth-order estimator is still unbiased for discontinuous functions \cite{suh2022differentiable}. However, the variance of this estimator is often higher than that of the first-order estimator used in Lemma \ref{lem:reparametrization}. This effect is also illustrated in Example \ref{ex:computation}.

\begin{example}\normalfont \label{ex:computation}(\textbf{ReLU, Heaviside, Delta}).
    To better illustrate different computational methods and their properties, we revisit an example from Table \ref{table:examples}, where $f$ is the ReLU function and $\rho$ is a logistic distribution, for which the smooth surrogagte $f_\rho$ is the Softplus. 
    
    We illustrate the three different methods, introduced in Sec.\ref{sec:smoothing_computation_schemes}, to compute the gradient of $f_\rho$. In the analytic scheme, we simply differentiate softplus to obtain a sigmoid function. The first-order randomized version averages the gradients $\partial f /\partial x$ (a Heaviside) with the samples drawn from $\rho$. Finally, the zeroth-order version uses Stein's lemma for the logistic distribution, which has the score function $S(w)=\tanh (w)$. The results are plotted in Fig.\ref{fig:computation}. Note that both quantities converge to the analytic gradient on average, with the zeroth-order gradient having more variance than the first-order one.
    
    Lastly, given $g\coloneqq\partial f /\partial x$ as the Heaviside, the smooth surrogate $g_\rho$ becomes the Sigmoid. We illustrate the three methods of computing the gradient of $g_\rho$, which is equivalent to $d^2f_\rho/dx^2$. Using the analytic scheme, we recover the density function for the logistic distribution, and the zeroth-order estimator is also successful in recovering the gradient. However, as the gradients of the Heaviside (i.e. dirac-delta) is almost-surely zero, the first-order estimator becomes biased and fails to estimate the gradient of $g_\rho$. 
    \begin{figure}[thpb]
	\centering\includegraphics[width = 0.5\textwidth]{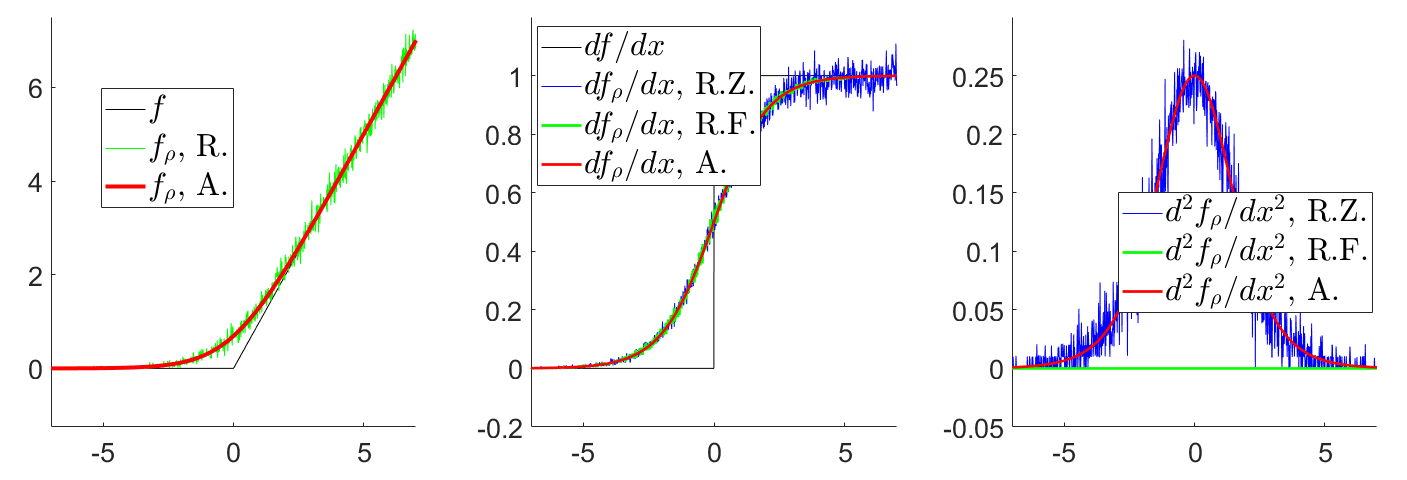}
    \caption{A. analytic, R.F. randomized first, R.Z. randomized zero. Left: ReLU $f$ (black), its smooth surrogate $f_\rho$, the softplus (red), and the Monte-Carlo approximation of $f_\rho$ (green). Middle: Gradients of $f_\rho$ computed using different methods. Right: Gradient of $g_\rho$, which we denote as $d^2f_\rho/dx^2$, computed by different computation methods. Color scheme is identical.}
	\label{fig:computation}
    \end{figure}
\end{example}

Finally, we note that in the context of randomized smoothing, the first-order and the zeroth-order gradient estimators can be combined and interpolated, leading to a $\alpha$-ordered gradient estimator where $\alpha\in[0,1]$ \cite{suh2022differentiable}. Such interpolation smoothly trades off the bias and variance characteristics of the first and zeroth-order gradients. 

\subsection{Smoothing of Dynamical Systems}\label{sec:smoothdynamics}
Our theorems in the setting of functions readily extends to dynamical systems. We overload the notation on the variables for convention, and say the system has state $x\in\mathbb{R}^n$, input $u\in\mathbb{R}^m$, and map $f:\mathbb{R}^n\times\mathbb{R}^m\rightarrow\mathbb{R}^n$, 
\begin{equation}
\label{eq:f_x_u}
x_+ = f(x, u).
\end{equation}
where we use $(\mathbf{A}(\bar{x},\bar{u}),\mathbf{B}(\bar{x},\bar{u}),c(\bar{x},\bar{u}))$ to parameterize the linear system $\bar{f}$ that best describes $f$ around some nominal point $(\bar{x},\bar{u})$, 
\begin{equation}
    \begin{aligned}
    \bar{f}(x,u) = \mathbf{A}(\bar{x},\bar{u})\underbrace{(x-\bar{x})}_{=:\delta x } + \mathbf{B}(\bar{x},\bar{u})\underbrace{(u-\bar{u})}_{=:\delta u} + c(\bar{x},\bar{u}).
    \end{aligned}
\end{equation}
For dynamical systems, the Taylor expansion requires the system Jacobian, 
\begin{equation}\label{eq:linearization}
    \begin{aligned}
    \mathbf{A}(\bar{x},\bar{u}) & = \frac{\partial}{\partial x}f(x,u)|_{x=\bar{x},u=\bar{u}} \\
    \mathbf{B}(\bar{x},\bar{u}) & =\frac{\partial}{\partial u}f(x,u)|_{x=\bar{x},u=\bar{u}} \\
    c(\bar{x},\bar{u}) & = f(\bar{x},\bar{u}).
    \end{aligned}
\end{equation}
Similarly in this setting, the smooth surrogate can be defined as the surrogate dynamics \begin{equation}
    f_\rho(x,u) = \mathbb{E}_{w\sim\rho}[f(x+w^x,u+w^u)],
\end{equation}
where $w^x$ is the component of $w$ that corresponds to $x$, and $w^u$ is defined similarly. The model parameters of the locally linear model $\bar{f}_\rho$ are given as follows:
\begin{equation}
\label{eq:ABc_rho}
    \begin{aligned}
    \mathbf{A}_\rho(\bar{x},\bar{u}) & =  \frac{\partial}{\partial x}\mathbb{E}_{w\sim\rho}[f(x+w^x,u+w^u)]|_{x=\bar{x},u=\bar{u}} \\
    \mathbf{B}_\rho(\bar{x},\bar{u}) & =  \frac{\partial}{\partial u}\mathbb{E}_{w\sim\rho}[f(x+w^x,u+w^u)]|_{x=\bar{x},u=\bar{u}} \\
    c_\rho(\bar{x},\bar{u}) & = \mathbb{E}_{w\sim\rho}[f(\bar{x}+w^x,\bar{u}+w^u)].
    \end{aligned}
\end{equation}

The previous theorems (Lemma \ref{lem:reparametrization}, Lemma \ref{lem:stein}, Theorem \ref{thm:regression}) also extend to this setting with slight changes in notation. In the remaining sections, we shorthand the notation and refer to the model parameters as a matrix instead of a function for compactness (e.g. $\mathbf{A}_\rho$ instead of $\mathbf{A}_\rho(\bar{x},\bar{u})$).

\subsubsection{Analytic Smoothing}
If $f$ is sufficiently structured, we can analytically obtain $f_\rho$ as well as the parameters for its linearization $(\mathbf{A}_\rho,\mathbf{B}_\rho,c_\rho)$ without a sampling procedure.

\subsubsection{First-Order Randomized Smoothing}
We use the following estimators for first-order randomized smoothing of dynamical systems,
\begin{subequations}
\label{eq:montecarlo_ABc}
\begin{align}
\mathbf{A}_\rho & \approx \textstyle \frac{1}{N}\sum^N_{i=1} \left[\textstyle\frac{\partial}{\partial x}f(\bar{x}+w_i^x,\bar{u}+w_i^u\right] \quad w_i\sim \rho \label{eq:montecarlo_ABc:A} \\ 
\mathbf{B}_\rho & \approx \textstyle \frac{1}{N}\sum^N_{i=1} \left[\textstyle\frac{\partial}{\partial u}f(\bar{x}+w_i^x,\bar{u}+w_i^u)]\right] \quad w_i\sim \rho \\ 
c_\rho & \approx \textstyle \frac{1}{N}\sum^N_{i=1} \left[\textstyle f(\bar{x}+w_i^x,\bar{u}+w_i^u)]\right] \quad w_i\sim \rho \label{eq:montecarlo_ABc:c} 
\end{align}
\end{subequations}
where $\partial f/\partial x$, and $\partial f/\partial u$ are Jacobians of the dynamics.

\subsubsection{Zeroth-Order Randomized Smoothing}
Similar to \eqref{eq:generic_f_local_least_squares}, we can estimate the gradients using least-squares when $\rho$ is Gaussian:
\begin{equation}
    \label{eq:montecarlo_ABc_zero}
    \begin{aligned}
    \mathbf{A}_\rho,\mathbf{B}_\rho &  = \underset{\mathbf{A,B}} {\text{argmin}}\textstyle\sum^N_{i=1}\\
    & \| f(\bar{x}+w_i^x,\bar{u}+w_i^u) - \mathbf{A} w_i^x - \mathbf{B} w_i^u - c_\rho\|^2_2
    \end{aligned}
\end{equation}
where $c_\rho$ is computed with \eqref{eq:montecarlo_ABc:c}.

\section{Convex Quasi-dynamic Differentiable Contact Dynamics}\label{sec:quasidynamic}
\noindent In this section, we present the Convex, Quasi-dynamic, Differentiable Contact (CQDC) model which is used for the contact-rich planning tasks in the rest of this paper. We start by elaborating on the advantages of planning with quasi-dynamic models. We then detail the convex\footnote{
By calling the forward dynamics $x_+ = f(x, u)$ convex, we mean that a convex optimization program is solved when computing $x_+$ from a given $(x, u)$ pair. It does not mean that $f(x, u)$ is a convex function in $(x, u)$.  
} forward dynamics formulation and how to take its derivatives, and conclude with remarks on implementation. 

\subsection{Why Plan with Quasi-dynamic Models?} \label{sec:why_quasidynamic}
Quasi-dynamic models, which previous works have used extensively for robotic manipulation \cite{mason2001mechanics, motioncones, aceituno2020global, cheng2021contact, pang2021easing}, simplify Newtonian dynamics by removing terms related to velocity and acceleration, and focusing on contact forces at the core of robot-object interactions. Although the model cannot describe highly-dynamics behaviors such as spinning a pen between fingers \cite{mordatch2012contact}, it holds up for a wide variety of manipulation tasks, including many that involve dexterous hands \cite{andrychowicz2020learning}. 

Halving the number of states by removing velocity is one of the most recognized advantages of quasi-dynamic systems \cite[\textsection 10.1]{mason2001mechanics}.
However, by focusing on transitions between static equilibria and ignoring transients induced by damping and acceleration, a quasi-dynamic model can predict further into the future while taking fewer steps than its second-order counterpart, thereby reducing the number of steps it requires to reach the goal.
We believe this temporal aspect of quasi-dynamic models' advantage is important but often overlooked in the literature. In practice, many trajectory optimization formulations for manipulation enforce the second-order Newtonian dynamics \cite{landry2019differentiable, kurtz2022contact}. Although modeling the transients allows the discovery of more dynamic behaviors, we believe, especially in the manipulation setting, that the added computational complexity due to taking many steps for long planning horizons oftentimes outweighs the benefits.

\subsection{Forward Dynamics \label{sec:convex_quasi_dynamic_contact_dynamics}}
Despite the numerical advantage of convex contact dynamics formulations \cite{anitescu2006optimization, todorov2012mujoco, castro2021unconstrained}, some formulations can produce highly inaccurate contact forces with poorly-chosen hyperparameters \cite{kolbert2016experimental}. In this work, we adopt Anitescu's convex relaxation of the Coulomb friction constraints \cite{anitescu2006optimization}. Anitescu's convex relaxation is equivalent to common LCP formulations \cite{stewart2000rigid} in non-sliding contacts or in separation, and introduces a non-physical yet mild ``boundary layer'' effect between relatively-sliding objects \cite{pang2021convex, castro2021unconstrained}. Not only does Anitescu's convex friction model have the simulation step size $h$ as the only hyperparameter, the ``boundary layer'' also disappears as $h \rightarrow 0$. The step size therefore can serve as a knob that allows the planner to trade the dynamics model's predictive power for physical accuracy.

In the robotic manipulation setting, a system can be divided into $\nA$ \emph{actuated} Degree of Freedoms (DOFs), which correspond to the robots, and $\nU$ \emph{unactuated} DOFs, which correspond to the objects. The configurations of objects and robots are denoted respectively by $\qu \in \R[\nU]$ and $\qa \in \R[\nA]$. For quasi-dynamic systems, the system state is defined by $x \coloneqq q \coloneqq (\qu, \qa)$. Similar to \cite{pang2021convex}, we model the robots as impedances \cite{hogan1985impedance}, which in the quasi-dynamic setting reduces to springs with a diagonal stiffness matrix $\Ka \in \R[\nA \times \nA]$. Accordingly, the input $u \in \R[\nA]$ is defined by the commanded positions of the robots' joints, which can also be interpreted as the equilibrium positions of the springs.

The discretized quasi-dynamic equations of motion are
\begin{subequations}
\label{eq:q_dynamics_eom}
\begin{align} 
h \Ka \left(\qa + \dqa - u \right) &= h\tauA + \sum_{i=1}^{\nC} \left( \Ja[i](q) \right)^\intercal \lambda_i, \label{eq:q_dynamics_eom:a}\\
\left( \frac{\epsilon}{h} \Mu (q) \right) \dqu &= h\tauU + \sum_{i=1}^{\nC} \left( \Ju[i](q) \right)^\intercal \lambda_i, \label{eq:q_dynamics_eom:u}
\end{align}
\end{subequations}
where $h \in \R_{++}$ is the step size in seconds; $\Mu(q) \in \R[\nU \times \nU]$ is the configuration-dependent mass matrix of the objects; $\epsilon \in \R_{+}$ is a small regularization constant; 
$\tauA \in \R[\nA]$ and $\tauU \in \R[\nU]$ are non-contact external torques (e.g. due to gravity) for robots and objects;
the change in system configuration from the current to the next step is $\dq \coloneqq (\dqu, \dqa)$.
For 3D systems, it is more common to work with angular velocity instead of rotational displacement. As our discussion trivially extends to angular velocity by using a linear map (e.g. \cite[Section II]{castro2021unconstrained}), we use displacement in this paper for brevity.

There are $\nC$ contact pairs at the current time step. For the $i$-th contact, $\lambda_i \in \R[3]$ is the contact impulse; $\Ja[i](q) \in \mathbb{R}^{3 \times \nA}$ and $\Ju[i](q) \in \mathbb{R}^{3 \times \nU}$ are the contact Jacobians \cite{anitescu1996formulating} of the robots and the objects, respectively. The contact Jacobians are functions of $q$, as they depend on the contact points and normals at $q$ computed by a collision detector. For brevity, we will drop the explicit dependency on $q$ from here onward.

Equation \eqref{eq:q_dynamics_eom:a} states that the robot joint positions' deviation from their commanded values, as a result of external impulses, is proportional to the stiffness $\Ka$. Equation \eqref{eq:q_dynamics_eom:u} can have different physical interpretations depending on the value of the regularization constant $\epsilon$.
If $\epsilon = 0$, \eqref{eq:q_dynamics_eom:u} is the force (impulse) balance of the objects. This corresponds to the exact quasi-static formulation in  \cite{pang2021convex}. 
If $\epsilon = 1$, \eqref{eq:q_dynamics_eom:u} corresponds to Mason's classical definition of quasi-dynamic systems \cite{mason2001mechanics}.
Note that for any $\epsilon$, the momentum gained at every step due to external impulses is discarded at the next time step, which is characteristic of the highly damped behavior typical in quasi-static systems. In our physical experiments in later sections where objects are light, we find that the dynamics is closer to being exactly quasi-static ($\epsilon = 0$) than being classically quasi-dynamic ($\epsilon = 1$). Therefore, we pick $\epsilon$ to be as small as possible without causing numerical issues.

In addition, the Coulomb friction model requires the contact impulses $\lambda_i$ to stay inside the friction cone, and the relative sliding velocities to satisfy the maximum dissipation principle \cite{stewart2000rigid}. To enforce these constraints, we first introduce some additional notation. The contact Jacobian for contact $i$ can be written as
\begin{equation}
\label{eq:contact_jacobian_i}
\J_i \coloneqq [\Ju[i], \Ja[i]] \coloneqq 
\begin{bmatrix}
\Jn[i] \\
\Jt[i]
\end{bmatrix}
\in \R[3 \times (\nU + \nA)],
\end{equation}
where $\Jn[i] \in \R[1 \times (\nU + \nA)]$ maps the generalized velocity of the system to the normal contact velocity, and $\Jt[i] \in \R[2 \times (\nU + \nA)]$ to the tangent velocities. Next, the friction cone at contact $i$, $\mathcal{K}_i$, and its dual cone $\mathcal{K}_i^\star$, are denoted by
\begin{subequations}
\label{eq:contact_cones}
\begin{align}
\mathcal{K}_i &\coloneqq \left\{\lambda_i = (\lambda_{\mathrm{n}_i}, \lambda_{\mathrm{t}_i}) \in \R[3] | \mu_i \lambda_{\mathrm{n}_i} \geq \sqrt{\lambda_{\mathrm{t}_i}^\intercal \lambda_{\mathrm{t}_i}} \right\}, \label{eq:contact_cones:lambda}\\
\mathcal{K}_i^\star &\coloneqq \left\{v_i = (v_{\mathrm{n}_i}, v_{\mathrm{t}_i}) \in \R[3] | v_{\mathrm{n}_i} \geq \mu_i \sqrt{v_{\mathrm{t}_i}^\intercal v_{\mathrm{t}_i}} \right\}, \label{eq:contact_cones:v}
\end{align}
\end{subequations}
where $\mu_i$ is the friction coefficient; the dual variable $v_i$ can also be interpreted as the relative contact velocity for contact $i$; the subscripts $(\cdot)_\mathrm{n}$ and $(\cdot)_\mathrm{t}$ indicate respectively the normal and tangential components. 

With this notation, Anitescu's frictional contact constraints can be written as:
\begin{subequations}
\label{eq:friction_constraints}
\begin{align}
v_i \coloneqq 
\J_i
{\delta q}
+
\begin{bmatrix}
\phi_i \\
0_2
\end{bmatrix}
&\in \mathcal{K}_i^\star,\\
\lambda_i &\in \mathcal{K}_i, \\
v_i^\intercal \lambda_i &= 0,
\end{align}
\end{subequations}
where $\phi_i \in \R$ is the signed distance for contact $i$ at the current time step. These constraints enforce the Coulomb friction model exactly when a contact is sticking (not sliding) and in separation. In sliding, these constraints enforce maximum dissipation, but adds a small gap between the two relatively-sliding objects. Anitescu showed that this gap converges to 0 as $h \rightarrow 0$ \cite{anitescu2006optimization}.

Remarkably, this implies that the quasi-dynamic equations of motion \eqref{eq:q_dynamics_eom}, together with the friction constraints \eqref{eq:friction_constraints}, are the KKT optimality conditions \cite[\textsection 5.9]{boyd2004convex} of the following convex Second-Order Cone Program (SOCP) \cite{anitescu2006optimization}:
\begin{subequations}
\label{eq:q_dynamic_socp}
\begin{align}
&\underset{\dq}{\minimize} \; \frac{1}{2} \dq^\intercal \mathbf{Q} \dq + b^\intercal \dq, \; \text{subject to} \label{eq:q_dynamic_socp:cost}\\
&\qquad \qquad \J_i
{\delta q}
+
\begin{bmatrix}
\phi_i \\
0_2
\end{bmatrix}
\in \mathcal{K}_i^\star, \; \forall i \in \{1 \dots \nC\}, \; \text{where} \label{eq:q_dynamic_socp:constraint}\\
&\mathbf{Q} \coloneqq \begin{bmatrix} \epsilon\Mu/h & 0 \\ 0 & h \Ka \end{bmatrix}, \;
b \coloneqq - h\begin{bmatrix} \tauU \\ \Ka(u - \qa) + \tauA \end{bmatrix},
\end{align}
\end{subequations}
whose primal and dual solutions, $\dq^\star$ and $\lambda^\star \coloneqq (\lambda_1^\star, \dots, \lambda_{\nC}^\star)$, can be obtained by conic solvers such as \cite{mosek, scs}. Note that without regularization ($\epsilon=0$), the objective \eqref{eq:q_dynamic_socp:cost} would be positive semi-definite, and thus \eqref{eq:q_dynamic_socp} could have multiple solutions.

The SOCP \eqref{eq:q_dynamic_socp} reduces to a Quadratic Program (QP) when there is no friction or if the friction cone can be described by linear constraints. The second-order friction cones \eqref{eq:contact_cones} can be equivalently represented as linear constraints in the planar case, or be approximated with polyhedral cones \cite{stewart2000rigid} in the 3D case. However, this approximation introduces non-physical anisotropic behaviors \cite{li2018implicit, howell2022dojo}, which is why \eqref{eq:contact_cones} is preferred.

\subsection{Derivatives of Forward Dynamics}\label{sec:quasi_dynamic_derivatives}
We illustrate how to compute the derivatives of the system configuration at the next step, $q_+$, with respect to the current $q$ and $u$. Let us express the forward contact dynamics in the standard dynamical system form \eqref{eq:f_x_u}:
\begin{equation}
\label{eq:f_q_u}
q_+ = f(q, u) = q + \dq^\star(q, u),
\end{equation}
where $\dq^\star$ is the solution to  \eqref{eq:q_dynamic_socp}. Taking the derivatives of \eqref{eq:f_q_u} yields
\begin{equation}
\label{eq:q_dynamics_AB}
\A = \DfDx{f}{q} = \I + \DfDx{\dq^\star}{q}, \;
\B = \DfDx{f}{u} = \DfDx{\dq^\star}{u},
\end{equation}
where $\DfDxLine{\dq^\star}{q}$ and $\DfDxLine{\dq^\star}{u}$ can be expanded using the chain rule into:
\begin{subequations}
\label{eq:dq_star_q_u}
\begin{align}
\DfDx{\dq^\star}{q} &= \DfDx{\dq^\star}{b} \DfDx{b}{q} + \DfDx{\dq^\star}{\mathbf{Q}}\DfDx{\mathbf{Q}}{q} + \sum_{i=1}^{\nC} \DfDx{\dq^\star}{\J_i}\DfDx{\J_i}{q} + \DfDx{\dq^\star}{\phi_i}\DfDx{\phi_i}{q}, \label{eq:DdqDq}\\
\DfDx{\dq^\star}{u} &= \DfDx{\dq^\star}{b} \DfDx{b}{u}. \label{eq:DdqDu}
\end{align}
\end{subequations}

Similar to other differentiable simulators based on implicit time-stepping \cite{werling2021fast, howell2022dojo}, we compute the derivatives of the solution $\dq^\star$ with respect to the problem data $(\mathbf{Q}, b, \J_i, \phi_i)$ by applying the implicit function theorem to the KKT optimality conditions of the convex program \eqref{eq:q_dynamic_socp} \cite{agrawal2019differentiatingcone}. 
Then, the derivatives of $(\mathbf{Q}, b, \J_i, \phi_i)$ with respect to $q$ and $u$ can be straightforwardly computed using either automatic differentiation or a more specialized method that takes advantage of the structure of rigid body systems \cite{carpentier2018analytical}.

As the derivatives \eqref{eq:dq_star_q_u} are discontinuous functions of $(q, u)$ \cite{bundledgradients}, they are not a good local approximation of the CQDC dynamics unless multiple gradients are computed for different state-action pairs and averaged using the first-order randomized smoothing scheme (Sec. \ref{sec:randomized_smoothing}).

\subsection{Implementation}
As all existing rigid-body differentiable simulators, to the best of our knowledge, assume second-order Newtonian dynamics, it is difficult and inefficient to modify them to support our CQDC dynamics formulation. Therefore, we implemented the proposed dynamics formulation using the Drake robotics toolbox \cite{drake}. Although our implementation is not heavily optimized, it is adequate for computing contact-rich plans within a reasonable amount of wall-clock time (a few minutes). 

For the forward dynamics (Sec. \ref{sec:convex_quasi_dynamic_contact_dynamics}), we use Drake's \code{MultibodyPlant} and \code{SceneGraph} for collision detection and the computation of the object mass matrix $\Mu$, contact Jacobians $\J_i$ and signed distances $\phi_i$. The SOCP \eqref{eq:q_dynamic_socp} can then be constructed with \code{MathematicalProgram} and solved with a third-party solver of our choice.

For the dynamics derivatives (Sec.\ref{sec:quasi_dynamic_derivatives}), we have a custom implementation for differentiating through the KKT optimality conditions of an SOCP using Eigen's \cite{eigenweb} linear solvers, allowing us to efficiently compute the partial derivatives of $\dq^\star$ with respect to $(\mathbf{Q}, b, \J_i, \phi_i)$ from primal-dual solutions $(\dq^\star, \lambda^\star)$ given by third-party conic solvers. As for the partials of $(\mathbf{Q}, b, \J_i, \phi_i)$ with respect to $q$, we note that $b$ is a linear function of $q$;  $\DfDxLine{\phi_i}{q} = \Jn[i]$; $\DfDxLine{\J_i}{q}$ and $\DfDxLine{\mathbf{Q}}{q}$ are computed with Drake's forward-mode automatic differentiation.

Finally, in order to avoid discontinuities coming from collision detection, we curate our system models so that every contact pair is either sphere-sphere, sphere-box, or sphere-cylinder, which means the contact points and normals change smoothly with the system configuration $q$ \cite{SiggraphContact22}. For example, both the box-shaped fingers of the Allegro hand in Fig. \ref{fig:banner}a-d and the box-shaped manipuland in Fig. \ref{fig:banner}b  are represented as arrays of inscribing spheres for the purpose of collision detection.
We note that this limitation can be alleviated with smoothing over collision geometries \cite{simonsinglelevel,randomizedsmoothingcollision}.

\section{Smoothing of Contact Dynamics}\label{sec:smoothingequivalence}

\noindent In this section, we combine our formalism and the computation methods of smoothing in Sec.\ref{sec:localsmoothing} and the CQDC dynamics in Sec.\ref{sec:quasidynamic}, resulting in a class of methods that efficiently make local approximations to the smooth surrogate of our convex quasi-dynamic contact model. Throughout this section, we assume we compute a local model of the smooth surrogate around some nominal state-input pair $(\bar{q},\bar{u})$. 

\subsection{Randomized Smoothing of Contact Dynamics}
We follow the method in Sec.\ref{sec:smoothdynamics} in order to perform randomized smoothing of the contact dynamics. For first-order randomized smoothing, we utilize \eqref{eq:montecarlo_ABc} with access to gradients of the contact dynamics obtained in Sec.\ref{sec:quasi_dynamic_derivatives}. We similarly do zero-order randomized smoothing using \eqref{eq:montecarlo_ABc_zero}.

However, there is a caveat in the randomized smoothing scheme: if the sampling distribution $\rho$ has infinite support, the sampled state $\bar{q} + w^q_i$ could violate the non-penetration constraint for rigid-bodies, i.e. $\phi(\bar{q} + w^q_i) < 0$ for some $i$.

Although reasoning about the dynamics $f(q, u)$ with an infeasible (penetrating) $q$ may seem ill-posed \cite{contactkalmanfilter}, we can \emph{define} the dynamics from an infeasible $q$ as the projection of $q$ to the ``nearest" point in the feasible (non-penetrating) set. The notion of nearest can be defined in terms of the work required to move the system configuration by $\dq$, which is precisely the quadratic cost \eqref{eq:q_dynamic_socp:cost} (divided by the step size $h$). This projection problem can then be written as
\begin{subequations}
\label{eq:projection_as_optimization}
\begin{align}
\underset{\dq}{\minimize} \; &\frac{1}{2} \dq^\intercal \mathbf{Q} \dq + b^\intercal \dq, \; \text{subject to} \\
& \phi_i(q + \delta q) \geq 0, \; i \in \{1 \dots \nC\}, \label{eq:projection_as_optimization:constraint}
\end{align}
\end{subequations}
where \eqref{eq:projection_as_optimization:constraint} is the non-linear non-penetration constraint. 

While the projection in \eqref{eq:projection_as_optimization} is difficult to solve in general, we can linearize the constraint \eqref{eq:projection_as_optimization:constraint} in order to locally approximate the problem as a QP. When the constraint is linearized, the problem remarkably becomes equivalent to the frictionless special case of the CQDC dynamics \eqref{eq:q_dynamic_socp}. In other words, projection is simply another interpretation of what the CQDC dynamics does within the penetrating regime, and no other explicit treatment is required for projection other than the evaluation of CQDC dynamics. In practice, due to the local nature of linearization, we use a low-variance distribution $\rho$ to sample states, though higher variance can be used for inputs.

When samples within the penetrating regime are projected onto the boundary of the feasible set and then averaged, the expected value of such a distribution creates a \emph{stochastic force field} that pushes the object away from feasible set's boundary. We illustrate this phenomenon through a simple example. 

\begin{example}\label{ex:projection}\normalfont\textbf{(The Stochastic Force Field)} Consider the dynamics of an unactuated 1D block with a wall occupying $q \leq 0$ (Fig. \ref{fig:projection}a), such that the physical dynamics is identity \emph{if} the block is in a non-penetrating configuration, $f(q)=q$ if $q\geq 0$. The dynamics within the penetrating regime is not well-defined physically; yet, applying the quasi-dynamic equations of motions \eqref{eq:projection_as_optimization} to this system gives 
\begin{subequations}
\label{eq:projection}
\begin{align}
\underset{\delta q}{\minimize} \; &\frac{1}{2} m (\delta q)^2, \; \text{subject to} \\
& q + \delta q \geq 0.
\end{align}
\end{subequations}
which has the following solution,
\begin{equation}
\label{eq:1d_projection_solution}
f(q) = q + \delta q =  \begin{cases}
    q & \text{ if } q \geq 0  \text{ (no penetration) }\\
    0 & \text{ else } \text{ (penetration) }\\
\end{cases}
\end{equation}

One interpretation of \eqref{eq:1d_projection_solution} is that configurations inside the wall gets projected onto the wall. By taking an average according to such dynamics, the expectation pushes the box away from the wall as illustrated in Fig. \ref{fig:projection}, which creates a stochastic force field. Note that with this interpretation, the gradients are also well-defined within the penetrating regime - an infinitesimal change of position in the penetrating configuration does not have any effect on the location of projection in this example, thus $\partial f(q)/\partial q=0$ if $q < 0$ (But note that after smoothing, $\partial f_\rho(q)/\partial q > 0$). For more complex geometries, the location of projection changes due to changes in the surface, which we connect back to the presence of the $\partial \mathbf{J}/\partial q$ and $\partial \phi/\partial q$ terms in \eqref{eq:DdqDq}.

\begin{figure}[t]
\centering\includegraphics[width = 0.48\textwidth]{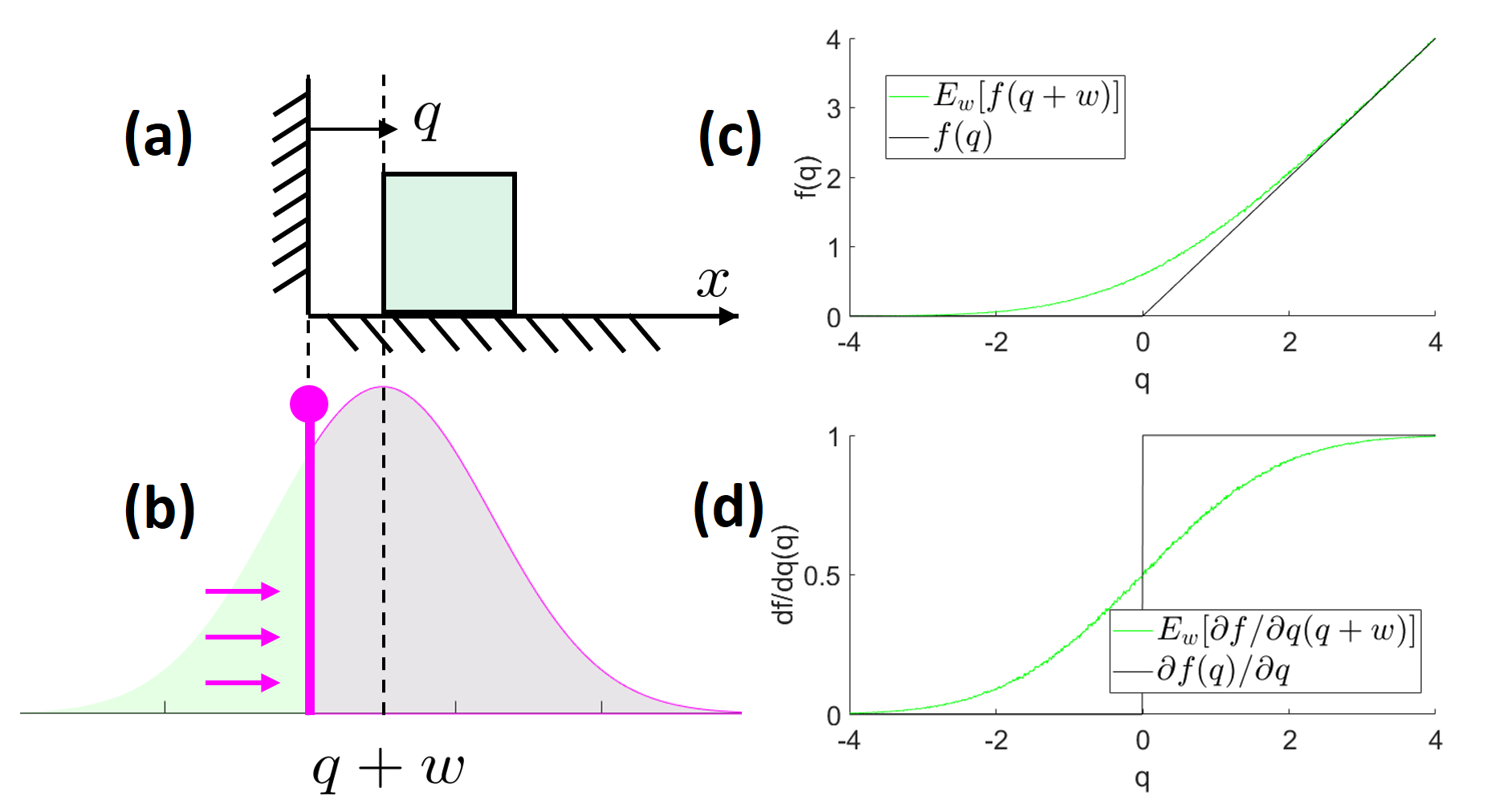}
\caption{Figure for Example \ref{ex:projection}, where quasi-dynamics of motion is interpreted as a projection operator. (\textbf{a}) Illustration of the system. (\textbf{b}) Distribution of $q + w$ (green) and $f(q + w)$ (pink). Note that the samples for which $q+w_i<0$ have been projected onto the surface into a delta function, and the expectation of the pink distribution lies on the right side of $q$, creating a stochastic force field effect. (\textbf{c}) CQDC dynamics and its randomized smoothed version. (\textbf{d}) Gradients of the CQDC dynamics obtained with first-order randomized smoothing.}
\label{fig:projection}
\vskip -0.2 true in
\end{figure}
\end{example}

\subsection{Analytic Smoothing of Contact Dynamics}\label{sec:analyticsmoothing}
Randomized smoothing simply involves solving \eqref{eq:q_dynamic_socp} repeatedly and averaging the resulting $q_+$, $\A$, and $\B$. In contrast, analytic smoothing (Sec.\ref{sec:analytic_smoothing}) of the CQDC dynamics \eqref{eq:q_dynamic_socp} is not as straightforward, since the solution to an optimization problem does not give easy access to explicit forms from which we can analytically design smooth surrogates.

To perform analytic smoothing,  we can convert the hard constraints in a convex program into costs using a penalty method. Specifically, we convert the constrained program \eqref{eq:q_dynamic_socp} into an unconstrained convex program using the log-barrier function, a common technique in the interior-point method for convex conic programs, with weight $\kappa$:
\begin{equation}
\label{eq:q_dynamics_log}
\begin{aligned}
\underset{\dq}{\minimize} \; &\frac{1}{2} \dq^\intercal \mathbf{Q} \dq + b^\intercal \dq \\
&- \frac{1}{\kappa} \sum_{i=1}^{\nC} \log \left[\frac{(\Jn[i] \dq + \phi_i)^2}{\mu_i^2} - (\Jt[i]\dq)^\intercal \Jt[i]\dq \right],
\end{aligned}
\end{equation}
whose solution converges to the solution of \eqref{eq:q_dynamic_socp} as $\kappa \rightarrow \infty$ \cite[\textsection 11.3 and \textsection 11.6]{boyd2004convex}. 

The log-barrier term in \eqref{eq:q_dynamics_log} can also be interpreted as the potential of a force field whose strength is inversely proportional to the distance to the boundary of the constraint \cite[p.567]{boyd2004convex}. For moderate values of $\kappa$, constraints can exert forces even though they are not active, achieving a smoothing effect similar to the ``force-at-a-distance" relaxation of complementarity constraints, which are commonly used in planning through contact methods such as \cite{posa2014direct, howell2022dojo}. We will further illustrate this similarity in Example \ref{ex:equivalence}.

From $\dq^\star$, the solution to the smoothed dynamics \eqref{eq:q_dynamics_log}, we can directly compute the smoothed gradient $\A_\rho$ and $\B_\rho$ in the same way as \eqref{eq:q_dynamics_AB}. Once again, the derivatives of $\dq^\star$ with respect to $q$ and $u$ are computed by applying the implicit function theorem to the optimality condition of \eqref{eq:q_dynamics_log}, which only consists of the stationarity condition due to the absence of conic constraints.


To solve \eqref{eq:q_dynamics_log}, we implemented an in-house solver using Newton's method \cite[\textsection 9.5]{boyd2004convex}, and find that our out-of-the-textbook implementation works robustly and reliably for all numerical experiments in Sec. \ref{sec:traj_opt} and \ref{sec:rrt_results}.

\subsection{Equivalence of Smoothing Schemes}\label{sec:smoothingequivalence}
In Sec.\ref{sec:localsmoothing}, we saw that randomized and analytic smoothing can be interpreted as different methods of computing the same quantity. Here, we show with examples that (\textbf{i}) for simple systems, we can derive the sampling distribution $\rho$ needed to obtain the smoothing given by the log-barrier-based analytic smoothing scheme; (\textbf{ii}) for more complex systems, randomized smoothing schemes using a Gaussian $\rho$ and the log-barrier-based analytic smoothing scheme result in qualitatively-similar smoothed dynamics.

\begin{figure}[t]
\centering
\includegraphics[width = 0.48\textwidth]{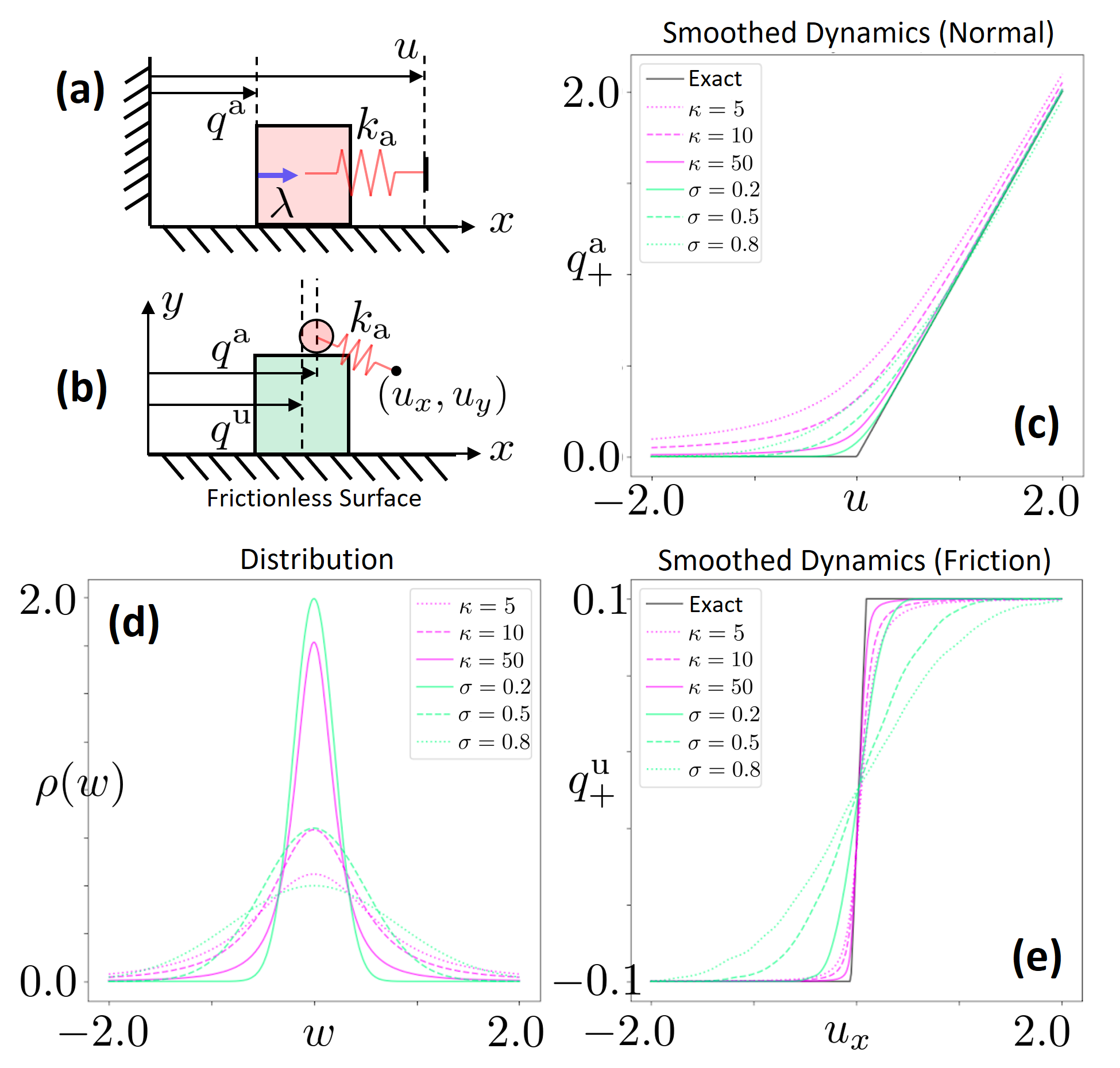}
\caption{
(\textbf{a}) A system consisting of an actuated cart constrained to slide on a frictionless surface, and a wall occupying $\qa \leq 0$. The actuator has stiffness $k_\textrm{a}$.
(\textbf{b}) A system consisting of an unactuated cart constrained to slide on a frictionless surface, and a ball actuated along both the $x$ and $y$ axes. The ball can touch the top surface of the cart with a frictional contact.
(\textbf{c}) Randomized and analytic smoothing of (a). Randomized smoothing, shown in green, is done with a Gaussian kernel with different variances $\sigma$. Analytic smoothing, shown in magenta, is done with different log-barrier weights $\kappa$. 
(\textbf{d}) Density functions of the Gaussian kernels (green) and the elliptical distributions used for analytic smoothing (magenta). 
(\textbf{e}) Randomized and analytic smoothing of (b). We plot $q^\mathrm{u}_+$ against $u_x$ for a fixed $u_y$ that is inside the cart. The linear region in the plot corresponds to sticking contact, and the flat regions to sliding.}
\label{fig:equivalence}
\vskip -0.2 true in
\end{figure}

\begin{example}\label{ex:equivalence} \normalfont(\textbf{Equivalence of Smoothing Schemes})
We start with the 1D frictionless system in Fig. \ref{fig:equivalence}a, whose dynamics is an instance of the frictionless CQDC dynamics \eqref{eq:q_dynamic_socp}:
\begin{equation}
\label{eq:q_dynamics_1d_cart}
\begin{aligned}
\underset{\dqa}{\minimize} \; &\frac{1}{2} hk_a (\dqa)^2 - h \ka (u - \qa) \dqa, \; \text{subject to} \\
&\qa + \dqa \geq 0,
\end{aligned}
\end{equation}

The KKT conditions of \eqref{eq:q_dynamics_1d_cart} are also the equations of motion of the system:
\begin{subequations}
\label{eq:wallcartdynamics}
\begin{align} 
h k_\mathrm{a} \left(\qa + \dqa - u\right) &= \lambda, \\
0 \leq \qa + \dqa  &\perp \lambda \geq 0,
\end{align}
\end{subequations}
which has the explicit solution
\begin{equation}
    q^\mathrm{a}_+ = \qa + \dqa =  \begin{cases}
        u & \text{ if } u \geq 0  \text{ (no contact) }\\
        0 & \text{ else } \text{ (contact) }\\
    \end{cases}.
\end{equation}

Also as an instance of \eqref{eq:q_dynamics_log}, \eqref{eq:q_dynamics_1d_cart} can be smoothed analytically by converting the constraint into a cost using the log-barrier function:
\begin{equation}
\underset{\dqa}{\minimize} \; \frac{1}{2} hk_a (\dqa)^2 - h \ka (u - \qa) \dqa - \frac{1}{\kappa} \log \left(\qa + \dqa \right),    
\end{equation}
whose optimality condition is obtained by setting the gradient of the smoothed cost to 0, yielding the equations of motion of the smoothed system: 
\begin{equation}
\label{eq:smooth_1d_cart}
hk_\mathrm{a}(\qa + \dqa - u) = \frac{1}{\kappa}\bigg(\frac{1}{\qa + \dqa}\bigg).
\end{equation}

The right-hand side of \eqref{eq:smooth_1d_cart} can be interpreted as an impulse whose magnitude is inversely proportional to the distance to the wall. Calling this impulse $\beta$, we note that
\begin{equation}\label{eq:complementarityrelaxation}
    \beta (\qa + \dqa) = \kappa^{-1},
\end{equation}
which is analogous to the common bilinear relaxation to the complementarity constraint for contact \cite{posa2014direct,howell2022dojo}.

The solution to \eqref{eq:smooth_1d_cart} is given by 
\begin{equation}
    q^\mathrm{a}_+=\frac{1}{2}\left(u + \sqrt{u^2 + \frac{4}{\kappa h^2k^2_\mathrm{a}}}\right)
\end{equation}
which is equivalent to randomized smoothing of the original dynamics under the following elliptical distribution
\begin{equation}
    \rho(w) = \sqrt{\frac{4\sigma }{(w^\intercal\sigma w + 4)^3}},
\end{equation}
where $\sigma\coloneqq hk\kappa $. We note that as opposed to the Gaussian, this distribution has a heavy tail. 
\end{example}

For more complex examples involving friction, such as the system in Fig. \ref{fig:equivalence}b, obtaining an analytic expression for $\rho$ that corresponds to \eqref{eq:q_dynamics_log} is difficult. Instead, we numerically illustrate the performance of the two smoothing schemes in Fig. \ref{fig:equivalence}e. (See \cite{bundledgradients} for more details). We note that with a different relaxation of complementary slackness, Howell \textit{et al.} showed a similar trend for the smoothed dynamics as the relaxation is tightened \cite[Fig. 7]{howell2022dojo}.

For more general settings beyond our simple examples, we expect there to be subtle mathematical differences between analytic and randomized smoothing, though they demonstrate similar empirical behavior in our work.

\section{Case Study: Trajectory Optimization \label{sec:traj_opt}}
\noindent Throughout the previous sections, we have developed the theory and computational methods to obtain local linear models of the smooth surrogate of the proposed CQDC dynamics. These tools are not specific to any planning algorithm: they can improve the performance of most iterative planning algorithms that rely on gradients.

In this section, we demonstrate the efficacy of smoothed CQDC dynamics on trajectory optimization for systems with contacts.
Although smoothing of hard contact constraints has been widely utilized to improve convergence \cite{posa2014direct, howell2022dojo, howell2022trajectory}, existing methods still struggle with complex problems such as dexterous manipulation. In particular, the quality of solutions can be very sensitive to initial guesses \cite{onol2020tuning}.
In this context, we show that a variant of trajectory optimization, with the help of smoothed CQDC dynamics and only a trivial initial guess, can perform well even on dexterous manipulation tasks.

\subsection{Iterative MPC with Smoothing \label{sec:iMPC}}
The variant of trajectory optimization algorithm used in this section, which we call iterative MPC (iMPC), is an iLQR-inspired algorithm proposed in \cite{bundledgradients}. We briefly summarize iMPC here for completeness, and illustrate how smoothing can be easily integrated into iMPC.

Consider the problem of finding an optimal sequence of inputs to track some desired state trajectory $\{x^d_t\}_{t=0}^T$. We need an initial guess for the nominal input trajectory $\{\bar{u}_t\}^{T-1}_{t=0}$, from which the nominal state trajectory $\{\bar{x}_t\}^T_{t=0}$ can be obtained by rolling out $\{\bar{u}_t\}^{T-1}_{t=0}$ from the initial state $x_0$. For every time $t$ (i.e. in a time-varying manner), we can create a locally linear model that approximates the dynamics, with model parameters $\{\mathbf{A}_t,\mathbf{B}_t,c_t\}_{t=0}^{T-1}$ \eqref{eq:linearization}. Then, finding the optimal $\{\bar{u}_t\}^{T-1}_{t=0}$, subject to the locally linear model of the dynamics, can be written as a QP. We present the MPC variant of this problem that receives the initial state $\bar{x}_j$ at time $t=j$ and computes the optimal action for the remaining time steps,

{\small
\begin{subequations}
\label{eq:trajopt}
\begin{align}
\textbf{MPC}(\bar{x}_j) & = u^\star_j, \text{where}\\
\min_{x_t,u_t}\;\; & \norm{x_T-x_T^d}_{\mathbf{Q}_T}^2 + \sum^{T-1}_{t=j} \left(\|x_t - x^d_t\|^2_{\mathbf{Q}_t} + \|u_t\|^2_{\mathbf{R}_t}\right)\\
\text{s.t.}\;\; & x_{t+1} = \mathbf{A}_t(x_t-\bar{x}_t) + \mathbf{B}_t(u_t - \bar{u}_t) + c_t,\\
& \mathbf{C}^x_t x_t\leq d^x_t,\; \mathbf{C}^u_t u_t \leq d^u_t, \; \forall t\in\{j \cdots T-1\}, \label{eq:trajopt:x_u_constraints}\\
& x_j = \bar{x}_j.
\end{align}
\end{subequations}
}
\noindent Here, $\{\mathbf{Q}_t,\mathbf{R}_t\}$ are the quadratic weights for state and input, respectively; $\mathbf{Q}_T$ is the weight on the terminal state; $\{\mathbf{C}^x_t,d^x_t\}$ and $\{\mathbf{C}^u_t,d^u_t\}$ are inequality parameters on the state and input, respectively. The linear constraints \eqref{eq:trajopt:x_u_constraints} can enforce, for instance, joint and actuation limits.

The iMPC algorithm is summarized in Alg. \ref{alg:impc}. 
In every outer iteration (body of the $\mathtt{while}$ loop starting at Line \ref{alg:impc:while}), iMPC solves truncated versions of \eqref{eq:trajopt} for $T - 1$ times. Specifically, at inner iteration $j$ (body of the $\mathtt{for}$ loop starting at Line \ref{alg:impc:for}), we solve MPC \eqref{eq:trajopt} for the sub-problem starting at $t=j$ (Line \ref{alg:impc:mpc}), and apply $u_j^\star$ from the solution to update $\bar{x}_{j+1}$ (Line \ref{alg:impc:update_x}). During MPC, we also enforce a trust region by using \eqref{eq:trajopt:x_u_constraints} to constrain $x_t$ and $u_t$ to stay close to $\bar{x}_t$ and $\bar{u}_t$, respectively.

\begin{algorithm}
\caption{\textbf{iMPC}}\label{alg:impc}
\textbf{Input:} Initial state $x_0$, input trajectory guess $\{\bar{u}_t\}_{t=0}^{T-1}$\;
\textbf{Output:} Optimized input trajectory $\{\bar{u}_t\}_{t=0}^{T-1}$ \;
$\{\bar{x}_t\}_{t=0}^T\leftarrow$ Rollout $f$ from $x_0$ with$\{\bar{u}_t\}_{t=0}^{T-1}$\;
\While {\textrm{not converged}}{ \label{alg:impc:while}
    Compute system matrices $\{\mathbf{A}_t,\mathbf{B}_t,c_t\}_{t=0}^{T-1}$\;
    \For {$0\leq j < T$} { \label{alg:impc:for}
        $\bar{u}_j \leftarrow \mathbf{MPC}(\bar{x}_j)$ \label{alg:impc:mpc}\; 
        $\bar{x}_{j+1} \leftarrow f(\bar{x}_j,\bar{u}_j)$ \label{alg:impc:update_x}\;
    }
}
\algorithmicreturn $\; \{\bar{u}_t\}_{t=0}^{T-1}$
\end{algorithm}

To apply smoothing to iMPC, we substitute the linearizations of smooth surrogates $\{\mathbf{A}_{t,\rho},\mathbf{B}_{t,\rho},c_{t,\rho}\}^{T-1}_{t=0}$ \eqref{eq:ABc_rho} for the first-order Taylor expansions $\{\mathbf{A}_t,\mathbf{B}_t,c_t\}_{t=0}^{T-1}$ \eqref{eq:linearization}. After every outer iteration, we reduce the variance of $\rho$, allowing the smooth surrogates $f_\rho$ to converge to true dynamics $f$. For analytic smoothing, we initially set $\kappa$ using physical intuition from \eqref{eq:complementarityrelaxation}, and increase it every iteration.

\begin{figure*}
\centering
\subfloat[\code{PlanarPushing}.]{
	\includegraphics[width=0.32\linewidth]{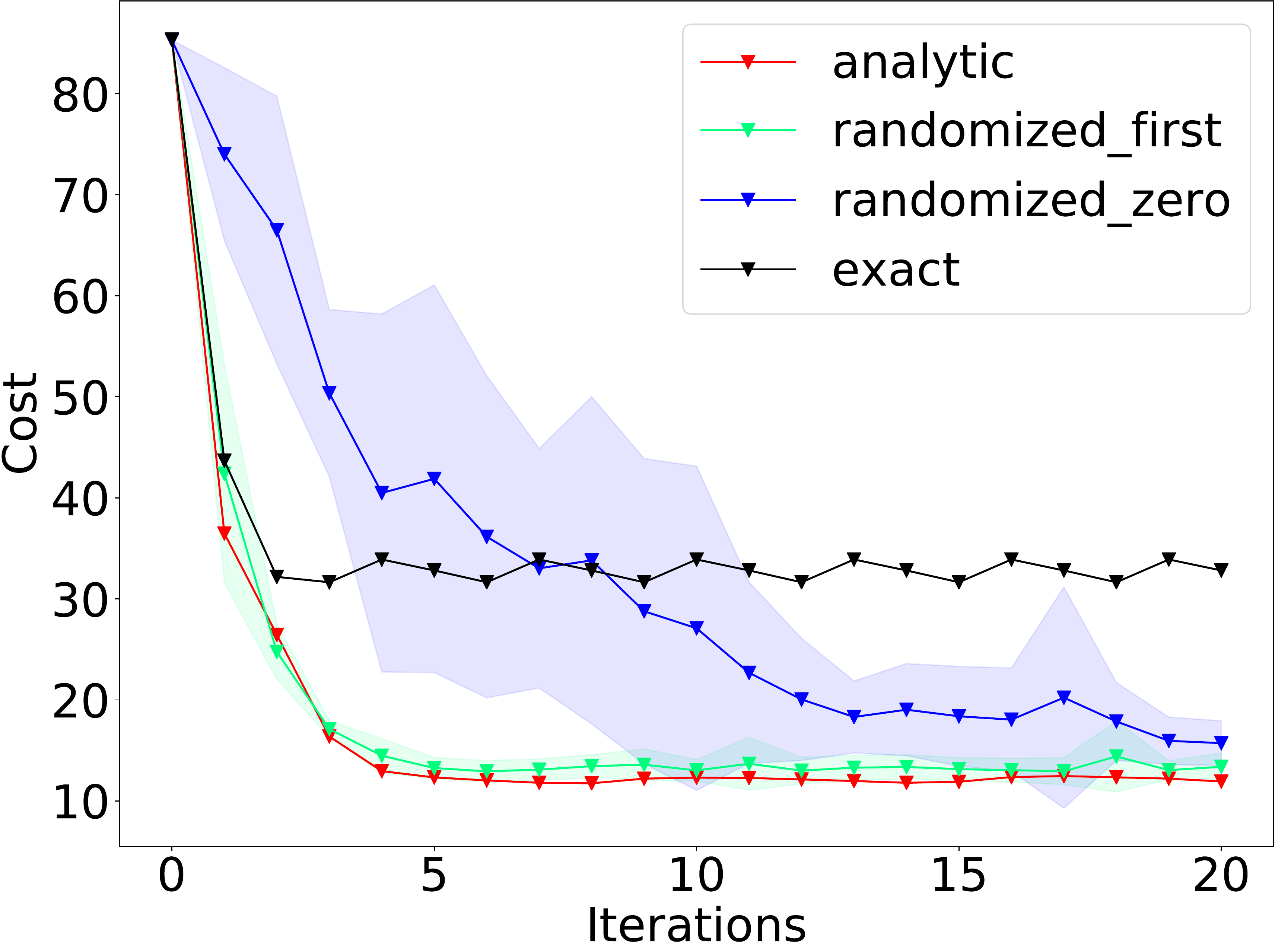}
}
\subfloat[\code{PlanarHand} Re-orientation.]{
	\includegraphics[width=0.32\linewidth]{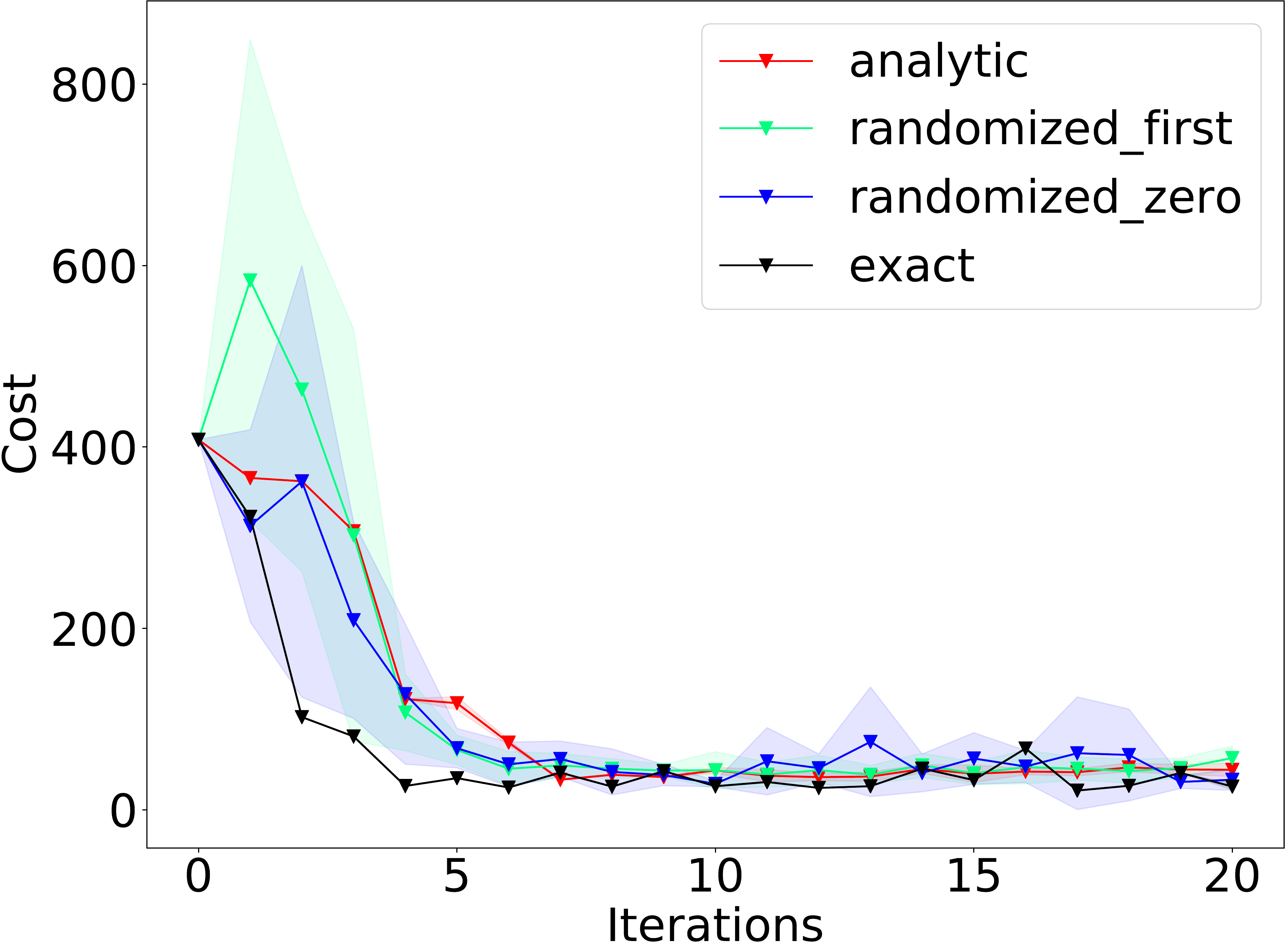}
}
\subfloat[\code{AllegroHand} Rotation.]{
	\includegraphics[width=0.32\linewidth]{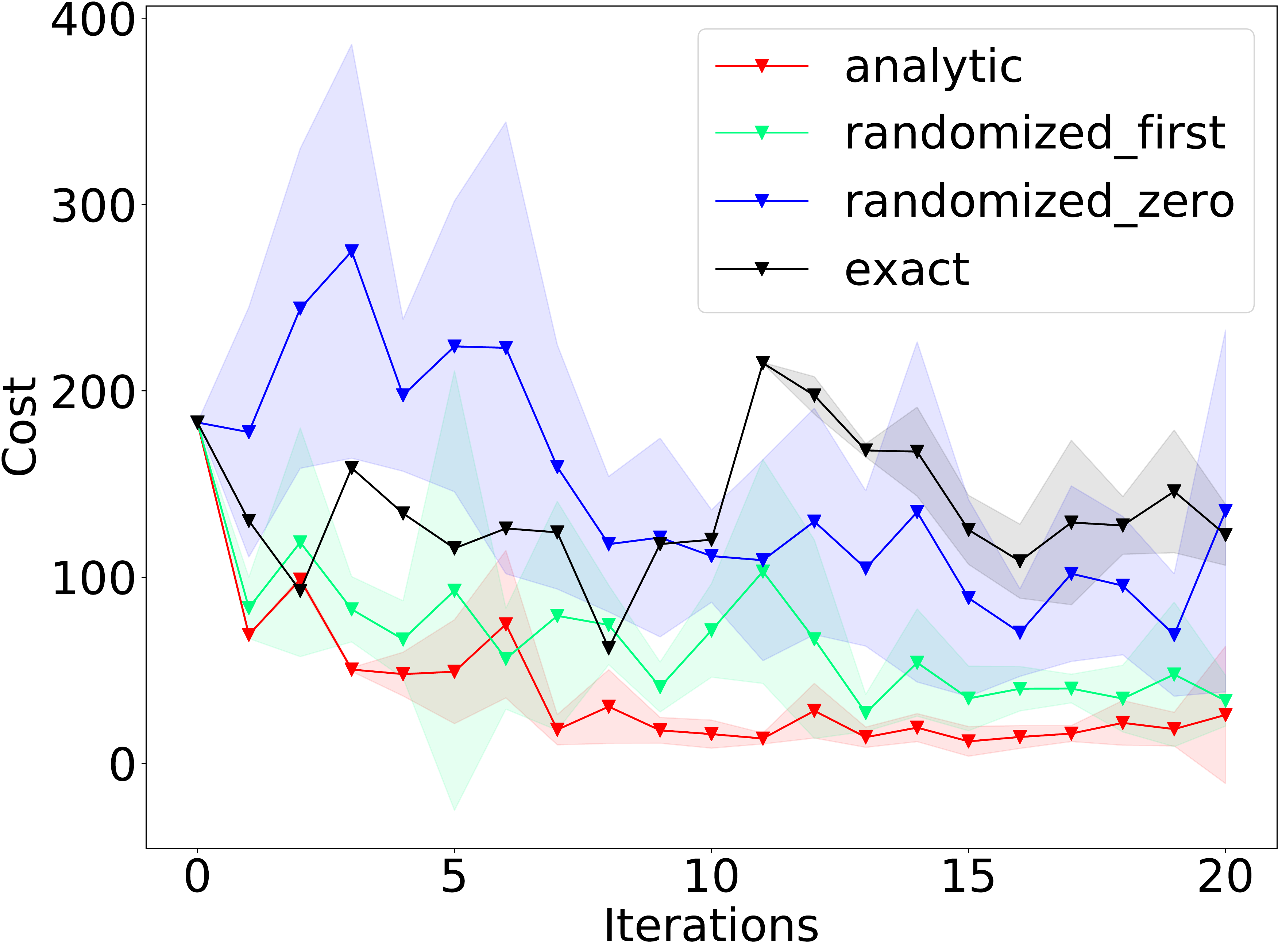}
}
\caption{Performance of iMPC with different smoothing schemes: analytic, randomized (first-order), randomized zero-order, and exact (no smoothing). For each method, the solid line represents the mean over five runs, and the shaded region represents the standard deviation. }
\label{fig:trajoptperformance}
\vspace{-0.2cm}
\end{figure*}

\subsection{Experiment Setup}\label{sec:trajopt_setup}
\subsubsection{Systems Description \label{sec:trajopt_setup:systems}}
We test iMPC with different smoothing schemes on two planar systems from \cite{bundledgradients} and a 3D system for in-hand rotation. We describe our systems below, and their visualization can be seen in Fig. \ref{fig:rrtperformance}. The three tuple after the name of each system indicates $(\nU, \nA, \nCG)$, where $\nU$ is the number of unactuated DOFs, $\nA$ the number of actuated DOFs, and $\nCG$ the number of collision geometries.

\begin{enumerate}
\item {\bf Planar Pushing, (3,2,2)}.  A classical example of nonprehensile manipulation \cite{lynch1996stable}. The goal is specified as some 2D configuration of the box.
\item {\bf Planar Hand Reorientation, (3,4,13)}.  We use a planar hand with two fingers, each with two DOFs. The goal is to change the position and orientation of the ball in a 2D plane.
\item {\bf Allegro In-Hand Rotation, (6,16,20)}.  3D In-hand rotation of the ball with the full model of the Allegro hand \cite{huang2020efficient}. The goal is specified as a rotated configuration of the ball.
\end{enumerate}

\subsubsection{Initialization}
As mentioned in Sec. \ref{sec:iMPC}, given an initial state $x_0$, we need to initialize the nominal input trajectory $\{\bar{u}_t\}^{T-1}_{t=0}$, where $\bar{u}_t$ is the commanded positions of the robots at step $t$ under the CQDC dynamics. Empirically, we find that good convergence can be achieved with a constant initialization, i.e. $\bar{u}_0 = \bar{u}_1 = \dots \bar{u}_{T-1}$. 

For the numerical experiments in this section, we need a $\bar{u}_0$ that makes contact with the object. Otherwise the baseline which does not use smoothing would have zero gradients and make no progress at all. 

In contrast, for iMPC with smoothing, it is sufficient to set $\bar{u}_0 = q^\mathrm{a}_0$, as long as $q^\mathrm{a}_0$ is not ``too far away" from making contact with the objects (the reason is explained in Example \ref{ex:planarhandreachableset}). In many practical problems, the object's initial configuration $q^\mathrm{u}_0$ is fixed, but we are free to choose the initial robot configuration $q^\mathrm{a}_0$. In this case, we can simply calculate a $q^\mathrm{a}_0$ that is ``close" to making contact using, for example, methods that compute grasps \cite{murray2017mathematical}. 

\subsubsection{Hardware \& Implementation Details}
The numerical experiments are run on a desktop with one AMD Threadripper 2950 CPU (16 cores, 32 threads) and 32GB of RAM. The code for iMPC using different smoothing schemes is identical except for the computation of the linearizations. For analytic smoothing and the baseline, we solve respectively the smoothed \eqref{eq:q_dynamics_log} and original \eqref{eq:q_dynamic_socp} dynamics once and then apply the chain rule to get the linearization. For first-order randomized smoothing, we solve the original dynamics \eqref{eq:q_dynamic_socp} and apply the chain rule for 100 samples ($N=100$), which is parallelized on all available threads, and then average the gradients of the samples. Zeroth-order randomized smoothing simply requires parallel evaluation of the dynamics.

\begin{figure}[t]
\centering
\subfloat[Initial configurations and goals. Each system is shown in its initial configuration $q_0$. The thicker frame denotes the goal while the thinner frame denotes the initial configuration of the object.]{
	\includegraphics[width=0.45\textwidth]{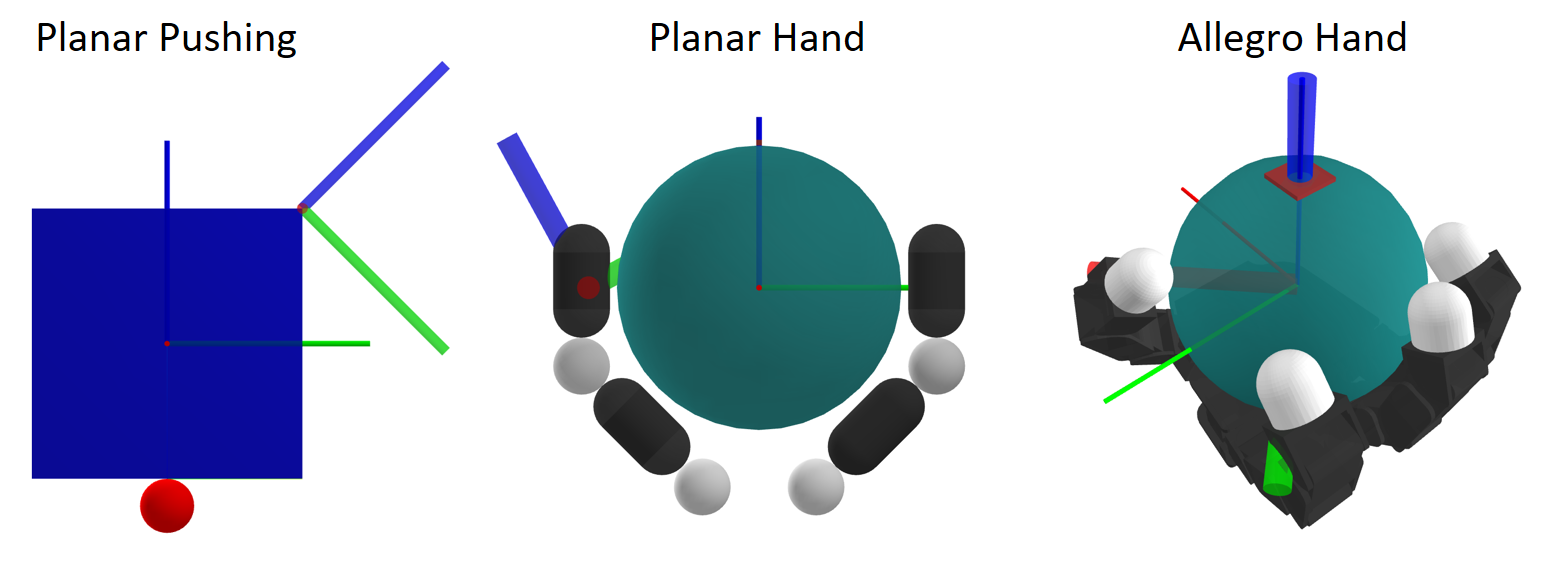}
	\label{fig:trajopttasks}
}\\
\vspace{-0.4cm}
\subfloat[Final object configuration achieved by the best runs within each of the four methods. Pink shaded denotes the goal configuration for the first two examples, while the goal configuration in the last example is marked by the pink line protruding out of the object. Colors correspond to the plots in Fig. \ref{fig:trajoptperformance}.]{
    \includegraphics[width = 0.45\textwidth]{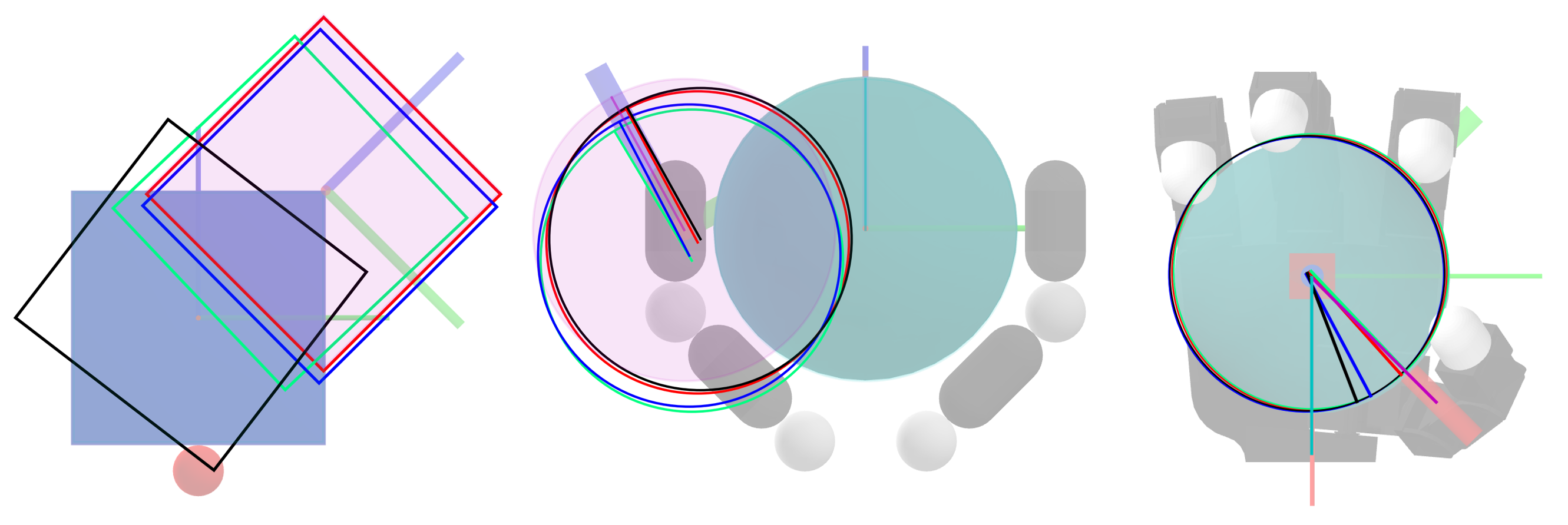}
    \label{fig:trajoptvis}
}
\caption{Tasks and results for the trajectory optimization case study.}
\label{fig:trajopt}
\end{figure}

\begin{table}[thpb] \label{tab:trajoptresults}
\centering
\begin{tabular}{|| c | r | r | r | r | r | r |} 
 \hline
 Problem & \multicolumn{2}{||c||}{PlanarPushing} & \multicolumn{2}{||c||}{PlanarHand} & \multicolumn{2}{||c||}{AllegroHand}  \\\hline
 Method & Cost & Time(s) & Cost & Time(s) & Cost & Time(s) \\ \hline\hline

    A.        & 11.74 & 2.17  & 26.55 &  5.20  & \bf{5.78} & 19.59  \\
    RF.       & \bf{11.73} & 4.64 & \bf{17.8}7 & 23.09 & 8.42 & 40.07 \\
    RZ.      & 12.86 & 4.61 & 18.29 & 11.93 & 28.21 & 34.05 \\
    Exact            & 31.64 & 1.88 & 18.49 & 5.91 & 44.68 & 12.92 \\\hline
\end{tabular}
\caption{Average of minimum cost and running time achieved by different methods. All methods are ran for $20$ iterations across 5 trials. Method abbreviations: A(Analytic), RF(Randomized First), RZ(Randomized Zero).}
\label{table:trajoptresults}
\vspace{-0.3cm}
\end{table}


\subsection{Results \& Discussion}
In Fig. \ref{fig:trajoptperformance}, we plot the performance of iMPC with a baseline that does not use smoothing, and the three different smoothing schemes in Sec. \ref{sec:smoothing_computation_schemes}, namely analytic, randomized first-order, and randomized zeroth-order. We also summarize the running time of each method, as well as the minimum cost achieved across the iterations, in Table \ref{table:trajoptresults}. Illustrations of the tasks and the results achieved by different methods are shown in Fig. \ref{fig:trajopt}. We interpret the results and discuss the relevant findings in this section. 
\subsubsection{Exact vs. Smoothing} For \code{PlanarPushing} and \code{AllegroRotation}, the various smoothing schemes achieve much lower costs than using exact gradients. However, for \code{PlanarHand}, using the exact linearization is performant as well. This difference may be explained by the observation that the planar hand example does not go through many mode changes, while the planar pusher and the Allegro hand require several mode changes to converge to the optimal trajectory.

\subsubsection{Analytic vs. Randomized Smoothing} Comparing the performance of the three smoothing schemes, the analytic and the first-order randomized smoothing perform similarly, while the zeroth-order version does not perform as well. We believe the cause lies in the high variance characteristic of the zeroth-order estimator in higher dimensions.

\subsubsection{Running (wall-clock) Time} While analytic smoothing only requires one evaluation of the smoothed dynamics \eqref{eq:q_dynamics_log} in order to compute $(\mathbf{A}_\rho,\mathbf{B}_\rho,c_\rho)$, randomized smoothing requires taking $N$ samples and averaging them, which costs $N$ times more compute-time. After parallelization, we expect randomized smoothing to be roughly $N/\xi$ times slower than analytic smoothing where $\xi$ is the number of threads. Indeed, with $N=100$ and $\xi=32$, our results show that randomized smoothing is 2 to 3 times slower than analytic smoothing.

\section{Local Mahalanobis Metric for RRT}
\label{sec:mahalanobis}
\noindent While trajectory optimization can find trajectories reaching goals that are close to the initial configuration, it is highly prone to local minima when the goal is further away (e.g. moving the box back in \code{PlanarPushing}, rotating the ball by 180 degrees in \code{PlanarHand}, \code{AllegroHand}). To solve these tasks, the planning algorithm needs to be more global. When faced with such problems, the RRT algorithm \cite{lavalle1998rapidly} has proven to be a classical and effective method for global planning. 

However, extending RRT to dynamical systems (i.e. kinodynamic RRT) has been difficult, as a distance metric between two states is hard to define. In \cite{shkolnik2009reachability}, it was argued that a good distance metric for RRT would need to explicitly consider dynamic reachability in order to efficiently grow the tree. The authors further proposed Reachability-Guided RRT (RG-RRT), which had system-specific reachability metrics that was shown to be effective for smooth systems. To alleviate the limitation of being system-specific, later works have considered building such metrics based on local characteristics of the dynamics such as local linearizations \cite{haddad2021anytime,wu2020r3t}.

However, when the dynamics involves contact, such local linearizations are no longer informative, and existing approaches often tackle dynamic reachability by explicitly considering contact modes \cite{chen2021trajectotree,cheng2021contact}. This has led to planners that scale poorly with the number of contacts. In contrast, we propose to handle the challenges brought about by contact with smoothing. We show that when combined with smoothing, the locally linear model can be used to construct an informative distance metric that is consistent with notions of reachability. 

\begin{figure*}[t]
\centering\includegraphics[width=0.98\textwidth]{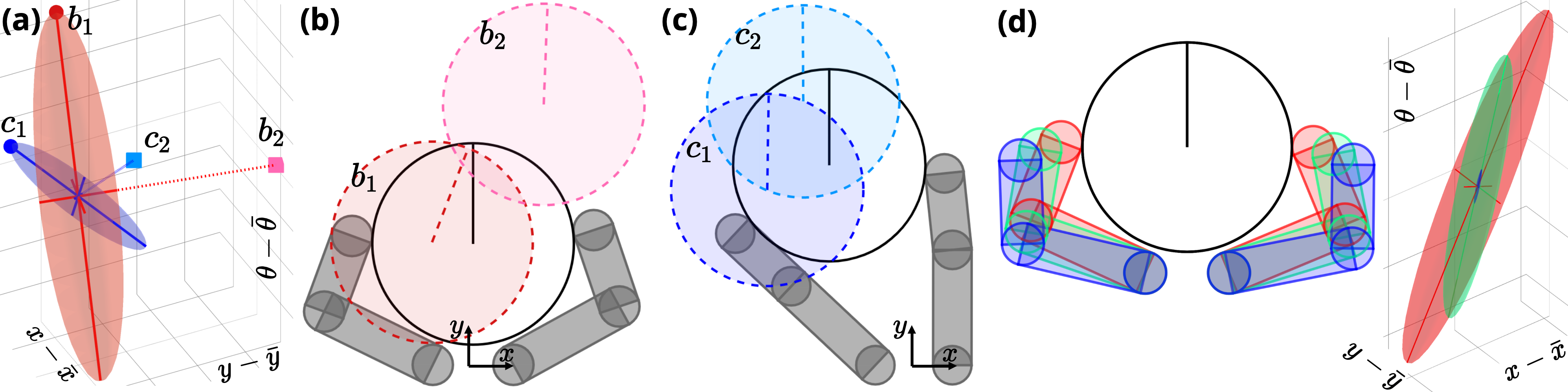}
\caption{(\textbf{a}) Two different sublevel sets $\mathcal{R}^\mathrm{u}_{\rho, \varepsilon, \gamma}$, represented as ellipsoids, shown in the space of $\qu$, with $\varepsilon = 1$, and $\gamma = 10^{-6}$. The ellipsoid centers are shifted to the origin for easy comparison. Red ellipsoid: $\mathcal{R}^\mathrm{u}_{\rho, \varepsilon, \gamma}$ for the system configuration in Fig. \ref{fig:reachability_planar_hand}b; blue ellipsoid: $\mathcal{R}^\mathrm{u}_{\rho, \varepsilon, \gamma}$ for the configuration in Fig. \ref{fig:reachability_planar_hand}c. Points $b_1$, $c_1$ are where ellipsoids' major axes intersect their boundaries. Points $b_2$, $c_2$ are points along the minor axes of the ellipsoids, and satisfy $\norm{b_1} = \norm{b_2}$ and $\norm{c_1} = \norm{c_2}$, where the norm is based on the standard Euclidean metric.
(\textbf{b}) The solid robots and objects represent the $\bar{q}$ at which the red $\mathcal{R}^\mathrm{u}_{\rho, \varepsilon, \gamma}$ in (a) is computed. The straight line on the puck indicates its orientation. The dashed dark red puck corresponds to the configuration $b_1$, and pink to $b_2$. Note that $b_1$ is easier to each than $b_2$.
(\textbf{c}) Similar to Fig. \ref{fig:reachability_planar_hand}b, the dashed dark blue puck correspond to $c_1$, and light blue to $c_2$. It is also easier to reach $c_1$ than $c_2$.
(\textbf{d}) The volume of $\mathcal{R}^\mathrm{u}_{\rho, \varepsilon, \gamma}$ shrinks as the fingers get further away from the puck. The ellipsoids on the right are color-coded to match the robot configurations on the left. Note that the blue ellipsoid is barely visible.}
\label{fig:reachability_planar_hand}
\vspace{-0.5cm}
\end{figure*}

\subsection{The Local Mahalanobis Metric}

Consider the following problem: given the current configuration $\bar{q}$, and some queried configuration $q$, how can we formulate a distance metric $d(q;\bar{q})$ that is consistent with reachability characteristics of the system? We propose to utilize the locally linear model around the nominal configuration $\bar{q}$, that characterizes the local response of the next system configuration $q_+$ with respect to the movement of the actuated configurations $u$. This local model can be written as 
\begin{equation}
    q_+ = \mathbf{B}(\bar{q},\bar{q}^{\mathrm{a}})\underbrace{(u - \bar{q}^\mathrm{a})}_{\delta u} + c(\bar{q},\bar{q}^{\mathrm{a}})
\end{equation}
where the notation is consistent with the CQDC dynamics formulation in Sec.\ref{sec:convex_quasi_dynamic_contact_dynamics}; the input $u$ is the position command to the system, and $\mathbf{B},c$ are defined as \eqref{eq:linearization}. We note that the contribution of $\mathbf{A}$ term is zero since $\delta x = 0$. Also note that we set the nominal action to be  $\bar{u}=\bar{q}^{\mathrm{a}}$.

Given such a local characterization, and a queried state $q$, we define the Mahalanobis metric as follows.
\begin{definition}
{\bf Local Mahalanobis Metric.}\normalfont \label{def:mahalanobis}
Given a nominal configuration $\bar{q}$, and queried configuration $q$, we define the Mahalanobis distance $d_\gamma$ of $q$ from $\bar{q}$ as follows:
\begin{equation}
\label{eq:metric}
\begin{aligned}
d_{\gamma}(q;\bar{q}) & \coloneqq \|q - \mu\|_{\mathbf{\Sigma}^{-1}_{\gamma}}=\textstyle\frac{1}{2}(q-\mu)^\intercal\mathbf{\Sigma}^{-1}_\gamma (q-\mu) \\
\mathbf{\Sigma}_{\gamma} & \coloneqq \mathbf{B}(\bar{q},\bar{q}^\mathrm{a})\mathbf{B}(\bar{q},\bar{q}^\mathrm{a})^\intercal + \gamma\mathbf{I}_n, \; \mu \coloneqq c(\bar{q},\bar{q}^\mathrm{a}).
\end{aligned}
\end{equation}
\end{definition}

The regularization $\gamma\mathbf{I}_n$ is added to ensure that $\mathbf{\Sigma}_{\gamma}$ is positive definite and the inverse $\mathbf{\Sigma}^{-1}_{\gamma}$ is well-defined. Note that the $\varepsilon$-sublevel set of this metric $d_\gamma(q;\bar{q})$, which we denote by $\mathcal{R}_{\varepsilon,\gamma}(\bar{q})$, describes an \emph{ellipsoid} that is centered at $\mu$ and has a shape matrix $\mathbf{\Sigma}^{-1}_\gamma$. We further motivate our construction of the metric by noting that this ellipsoid can be alternatively characterized (equivalent up to regularization) by the set
\begin{equation}
    \mathcal{R}_{\varepsilon}(\bar{q})\coloneqq \left\{\mathbf{B}(\bar{q},\bar{q}^{\mathrm{a}}) \delta u + c(\bar{q},\bar{q}^\mathrm{a})\; | \; \|\delta u\|\leq \varepsilon\right\}.
\end{equation}

This equivalence relation is exact when $\mathbf{B}$ has full row rank (i.e. the system is one-step controllable) and $\gamma=0$. On the other hand, if $\mathbf{B}$ loses rank, one of the principle axis of $\mathcal{R}_{\varepsilon}$ has a length of zero and the set becomes degenerate.

Finally, note that when $q-\mu \notin \mathbf{Range}(\mathbf{B})$, i.e. there is no actuation $u$ that can take the state $\bar{q}$ to the queried state $q$, the distance $ d_{\gamma}(q;\bar{q})$ is a large number dominated by the inverse of the regularization term $\gamma^{-1}$, which is consistent with the intuition that states that are harder to reach are further away.

\subsection{Metric on Smoothed Dynamics and Unactuated Objects}

As explained in the previous sections, the local model constructed using $\mathbf{B}$ may not be a very informative one for non-smooth systems with contact. In light of the various smoothing schemes introduced in Sec.\ref{sec:localsmoothing} to alleviate this issue, we propose a metric by utilizing the linearization of the smooth surrogate $(\mathbf{B}_\rho,c_\rho)$ as described in \eqref{eq:ABc_rho}, as opposed to those of the original contact dynamics, $(\mathbf{B},c)$. 

Furthermore, for systems where robots interact with unactuated objects through contact, we focus on the reachability of the objects, as the robots are actuated and can easily move to a desired configuration without contact. We combine smoothing and the object-centric reachability in the following variant of the Mahalanobis metric $d^\mathrm{u}_{\rho,\gamma}$, 
\begin{equation}
\begin{aligned}\label{eq:unactuated_mahalanobis_metric}
d_{\rho,\gamma}^\mathrm{u}(q;\bar{q}) & \coloneqq \|q^\mathrm{u} - \mu^\mathrm{u}_\rho\|_{\mathbf{\Sigma}^{\mathrm{u}^{-1}}_{\rho,\gamma}}, \\
\mathbf{\Sigma}_{\rho,\gamma}^\mathrm{u} & \coloneqq \mathbf{B}^\mathrm{u}_\rho(\bar{q},\bar{q}^\mathrm{a})\mathbf{B}^\mathrm{u}_\rho(\bar{q},\bar{q}^\mathrm{a})^\intercal + \gamma\mathbf{I}_{n_\mathrm{u}},\\ 
\mu_\rho^\mathrm{u} & \coloneqq c_\rho^\mathrm{u}(\bar{q},\bar{q}^\mathrm{a}).
\end{aligned}
\end{equation}
where $\mathbf{B}^\mathrm{u}_\rho$ is formed by the rows of $\mathbf{B}_\rho$ corresponding to the unactuated DOFs, and $c_\rho^\mathrm{u}$ is defined similarly. Finally, we define $\mathcal{R}_{\rho,\varepsilon,\gamma}^\mathrm{u}(\bar{q})$ as the $\varepsilon$-sublevel set of $d^\mathrm{u}_{\rho,\gamma}(q;\bar{q})$. 

In the rest of this section, we give several examples that provide intuition into the local Mahalanobis metric $d^\mathrm{u}_{\rho,\gamma}$ and its sublevel set $\mathcal{R}^\mathrm{u}_{\rho,\varepsilon,\gamma}$. 

\begin{example} \normalfont \textbf{(Understanding $\B$ for Planar Systems)}
When both the robots and objects are constrained in a plane (e.g. the planar systems in Sec. \ref{sec:trajopt_setup:systems}), the CQDC dynamics \eqref{eq:q_dynamic_socp} simplifies to the following QP:
\begin{subequations}
\label{eq:q_dynamics_planar_qp}
\begin{align}
\underset{\dq}{\minimize} \; &\frac{1}{2} \dq^\intercal \mathbf{Q} \dq + b^\intercal \dq, \; \text{subject to} \\
&(\Jn[i] + \mu_i \Jt[i]) \dq + \phi_i \geq 0, \; i \in \{1\dots\nC\}, \label{eq:q_dynamics_planar_qp:constraint1}\\
&(\Jn[i] - \mu_i \Jt[i]) \dq + \phi_i \geq 0, \; i \in \{1\dots\nC\}, \label{eq:q_dynamics_planar_qp:constraint2}
\end{align}
\end{subequations}
where the contact Jacobian $\Jt[i]$ has only one row instead of two, and the conic contact constraint \eqref{eq:q_dynamic_socp:constraint} reduces to two inequality constraints \eqref{eq:q_dynamics_planar_qp:constraint1} and \eqref{eq:q_dynamics_planar_qp:constraint2}. We define $\J \in \R[(2\nC) \times n_q]$ by stacking the $\Jn[i] + \mu_i \Jt[i]$ and $\Jn[i] - \mu_i \Jt[i]$ from \eqref{eq:q_dynamics_planar_qp:constraint1} and \eqref{eq:q_dynamics_planar_qp:constraint2} into a single matrix, and partition $\J$ into $\Ju$ and $\Ja$ in a similar way as in \eqref{eq:contact_jacobian_i}.

More structure behind the $\mathbf{B}$ matrix (as defined in \eqref{eq:q_dynamics_AB}) can be revealed with a bit of linear algebra. We can work out by hand the application of the implicit function theorem to the KKT conditions of \eqref{eq:q_dynamics_planar_qp}, and the chain rule in \eqref{eq:DdqDu}, to obtain an explicit expression for $\B$:
\begin{subequations}
\label{eq:planar_B_structure}
\begin{align}
\mathbf{B} &=
\begin{bmatrix}
\Ba \\ 
\Bu
\end{bmatrix}
=
\begin{bmatrix}
\mathbf{I} - (h^2\Ka)^{-1} (\JaActive)^\intercal \mathbf{P} \JaActive \\
\Mu^{-1} (\JuActive)^\intercal \mathbf{P} \JaActive
\end{bmatrix}, \; \text{with}\\
\mathbf{P} &= \left[\JuActive \Mu^{-1} (\JuActive)^\intercal + \JaActive (h^2 \Ka)^{-1} (\JaActive)^{\intercal} \right]^{-1}.
\end{align}
\end{subequations}
where we assume $\JuActive$ and $\JaActive$ have full row rank. The tilde over a Jacobian indicates the sub-matrix formed by rows of the original matrix corresponding to the active constraints, i.e. contacts with non-zero contact forces.


The structure in $\Bu$ explains why $\Bu_\rho$ is a good measure of the object's reachability when there is contact.
We can interpret $\mathbf{Range}(\JuActive^\intercal)$ as achievable object motions under the specific subset of active contacts.
By averaging $\Bu$ computed from different contacts which can be activated from the nominal $(\bar{q}, \bar{u})$, $\Bu_\rho$ summarizes possible object motions due to contact, in the form of $\mathbf{Range}(\JuActive^\intercal)$ weighted by $\Mu$ and $\mathbf{P}$. 

Furthermore, for a configuration with no active contacts, \eqref{eq:planar_B_structure} implies that $\Ba = \I$ and $\Bu = \mathbf{0}$, as both $\JaActive$ and $\JuActive$ are empty matrices in the absence of active contacts. This has the intuitive interpretation that under a $u$ that does not lead to contacts, the robot will move to where it is commanded to, and the object will remain still.

As $\mathbf{B}_\rho^\mathrm{u}$ is the expected value of $\mathbf{B}^\mathrm{u}$, it follows naturally from the above observation that the local distance metric $d_{\rho, \gamma}^\mathrm{u}$ tends to be dominated by the regularization $\gamma \mathbf{I}_{n_\mathrm{u}}$ for a nominal configuration $\bar{q}$ where robots and objects are far from making contact. In such cases, the probability that an action $u$ sampled from a distribution $\rho$ centered at $\bar{q}^\mathrm{a}$ leads to active contacts is low. As a result, in the Monte-Carlo estimation of $\mathbf{B}_\rho^\mathrm{u}$, such samples simply introduce $\mathbf{0}$ into the average, dragging the distance metric $d^\mathrm{u}_{\rho,\gamma}$ towards being dominated by the regularization.
\end{example}

\begin{example} \normalfont \textbf{(Metric on Planar Hand)}
\label{ex:planarhandreachableset}
We illustrate how the Mahalanobis metric can guide planning using the \code{PlanarHand} system first introduced in Sec. \ref{sec:trajopt_setup}.
As shown in Fig. \ref{fig:reachability_planar_hand}, the system lives in the $xy$ plane, with gravity pointing into the paper along the negative $z$ direction. The system consists of two actuated 2-link robotic fingers and an unactuated puck which is free to translate and rotate. Each finger can interact with the ball through frictional contacts along both links.

For a given $\qu$, the difficulty of reaching $\qu$ from $(\bar{q}, \bar{u})$ can be measured by the local Mahalanobis metric $d^\mathrm{u}_{\rho, \gamma}$, whose $1$-sublevel sets are shown in Fig. \ref{fig:reachability_planar_hand}a as ellipsoids. Although object configurations $b_1$ and $b_2$ are equidistant to the origin under the globally-uniform Euclidean metric, $b_1$ is considered much closer than $b_2$ under the local Mahalanobis metric (red ellipsoid). Indeed, in Fig. \ref{fig:reachability_planar_hand}b, reaching $b_1$ from the current puck configuration seems easier than reaching $b_2$. A similar observation can be made for the configuration in Fig. \ref{fig:reachability_planar_hand}c.

In addition, the local Mahalanobis metric also varies greatly from one configuration another, as evidenced by the difference between the blue and red ellipsoids in Fig. \ref{fig:reachability_planar_hand}a. This implies that a globally-uniform metric is rarely a good measure of reachability characteristics.

Lastly, the ellipsoid that corresponds to the $1$-sublevel set shrinks as the nominal state gets further away from the contact manifold, as shown in Fig. \ref{fig:reachability_planar_hand}d. This signifies that the configurations where the object is less accessible by the robot are naturally considered ``further away" and can thus be avoided by the planner.
\end{example}

\section{RRT through Contact \label{sec:rrt_for_contact}}
\noindent We are now ready to present our smoothing-based enhancements to the vanilla RRT algorithm, which we reproduce in Alg. \ref{alg:rrt} to establish notations for our discussion. Our method enhances RRT by incorporating
(\textbf{i}) a reachability-aware $\Nearest$ operation based on the smoothed Mahalanobis metric on the unactuated objects $d_{\rho,\gamma}^\mathrm{u}$,
(\textbf{ii}) a fast $\Extend$ operation based on the projection of the subgoal to the range of $\mathbf{B}_\rho$; and
(\textbf{iii}) a contact sampling procedure which improves the reachability of nodes added to the tree.

We denote the RRT tree as $\mathcal{T}=(\mathcal{V},\mathcal{E})$ with vertex set $\mathcal{V}$ and edge set $\mathcal{E}$. Each node $q \in \mathcal{V}$ is simply a point in the configuration space of the system. 

\begin{algorithm}[h]
\caption{\textbf{RRT}}\label{alg:rrt}
\textbf{Input:} $q_{\mathrm{init}}, q_{\mathrm{goal}}, K$\;
\textbf{Output:} $\mathcal{T}$\;
$\mathcal{T} = \{q_{\mathrm{init}}\}$\;
\For {$k = 1, \dots, K$}{
    $q_{\mathrm{subgoal}} = \mathtt{SampleSubgoal(p)}$\;
    $q_{\mathrm{nearest}} = \mathtt{Nearest}(q_{\mathrm{subgoal}})$\; \label{alg:rrt:nearest}
    $q_{\mathrm{new}} = \mathtt{Extend}(q_{\mathrm{nearest}}, q_{\mathrm{subgoal}})$\;  \label{alg:rrt:extend}
    $\mathtt{AddNode}(q_{\mathrm{new}})$ \label{alg:rrt:add_node}\;
    \If{\texttt{GoalReached}}{$\textbf{break}$\;}
}
\end{algorithm}
\vspace{-0.8cm}

\subsection{Nearest Node using Local Mahalanobis Metric}
As illustrated in Sec.\ref{sec:mahalanobis}, in particular by Example \ref{ex:planarhandreachableset}, a globally-uniform metric used by the vanilla RRT is usually a poor measure of reachability. Given a subgoal $q_{\mathrm{subgoal}}$, if the nearest node $q_{\mathrm{nearest}}$ is chosen under a globally-uniform metric, reaching $q_{\mathrm{subgoal}}$ from $q_{\mathrm{nearest}}$ may require large $u$ or even be dynamically infeasible. This will compromise RRT's ability to explore the configuration space, as trying to $\mathtt{Extend}$ towards a hard-to-reach $q_{\mathrm{subgoal}}$ typically returns a child node that is close to the parent node $q_{\mathrm{nearest}}$. In order to retain RRT's ability to efficiently explore under dynamics constraints, we use the smoothed Mahalanobis metric \eqref{eq:unactuated_mahalanobis_metric} instead of the usual Euclidean metric in the $\mathtt{Nearest}$ step:
\begin{equation}
q_{\mathrm{nearest}} = \text{argmin}_{q \in \mathcal{V}}\; d^{\mathrm{u}}_{\rho,\gamma}(q_{\mathrm{subgoal}}; q).
\end{equation}

\subsection{Dynamically Consistent Extension}
After choosing $q_{\mathrm{nearest}}$ from the tree $\mathcal{T}$, we need an action or a sequence of actions that moves the system from $q_{\mathrm{nearest}}$ to $q_{\mathrm{subgoal}}$ subject to the dynamics constraint. One feasible strategy to connect $q_{\mathrm{nearest}}$ to $q_{\mathrm{subgoal}}$ is to solve for an input sequence $\{u_t\}_{t=0}^{T-1}$ using a trajectory optimization algorithm such as Alg. \ref{alg:impc} \cite{karaman2010optimal}. However, the high computational cost of trajectory optimization motivates us to seek a simpler solution.

Fortunately, as a result of the farsightedness of quasi-dynamic models, even an input sequence with $T = 1$ (i.e. a single time step) can steer the system fairly far away from $q_{\mathrm{nearest}}$. Although trajectory optimization with $T > 1$ can explore a larger region around $q_{\mathrm{nearest}}$, we find in practice that a single time step is sufficient for $\Extend$ to effectively grow the RRT tree $\mathcal{T}$.

We present the modified $\Extend$ that uses a single time step in Alg. \ref{alg:extend}. The input $u$ is computed by projecting $(q^\mathrm{u}_{\mathrm{subgoal}} - \mu_\rho^\mathrm{u})$ to $\mathbf{Range}(\Bu_\rho)$ \eqref{eq:ABc_rho}, \eqref{eq:unactuated_mahalanobis_metric} using least-squares (Line \ref{alg:extend:lstsq}). Afterwards, we normalize the input and multiply it by some stepsize $\varepsilon$. The scaled input is then passed to the forward dynamics to obtain a new node. Crucially, we use the actual dynamics $f$ as opposed to the smooth surrogate dynamics $f_\rho$ (Line \ref{alg:extend:rollout}). This ensures that while the search for the next action relies on the smoothed model, the actual path is dynamically consistent under the original non-smooth contact dynamics (i.e. CQDC). 

\begin{algorithm}
\caption{$\mathtt{Extend}$}\label{alg:extend}
\textbf{Input:} $q_{\mathrm{nearest}}, q_{\mathrm{subgoal}}$\;  \textbf{Output:} $q_{\mathrm{new}}$\;
$\delta u^\star = \mathrm{argmin}_{\delta u} \|\Bu_\rho \delta u + c_\rho^\mathrm{u} - q_{\mathrm{subgoal}}^\mathrm{u} \|$ \label{alg:extend:lstsq} \;
\algorithmicreturn  $\; f(q_{\mathrm{nearest}}, q_{\mathrm{nearest}}^\mathrm{a} + \varepsilon \cdot \delta u^\star / \|\delta u^\star\|)$ \label{alg:extend:rollout}\;
\end{algorithm}
\vspace{-0.8cm}

\subsection{Contact Sampling}\label{sec:contactsampling}
A node $q$ where robots and objects are far from making contacts hinders the growth of the RRT tree for two reasons. First, such nodes are considered far away from most sampled subgoals under the local Mahalanobis distance metric (Fig. \ref{fig:reachability_planar_hand}d). As a result, adding such nodes to the tree hardly improves coverage of the state space. Moreover, in the event that such a node is chosen by $\Nearest$, the $\Extend$ operation that follows often results in other unfavorable configurations.

To reduce the number of such nodes in $\mathcal{T}$ and encourage exploration during tree growth, the $\Extend$ operation is replaced, with some probability, by a new operation called $\ContactSample$. $\ContactSample$ takes $q_\mathrm{nearest}$ as input, and creates another node with a better local metric by fixing $q_\mathrm{nearest}^\mathrm{u}$ and finding an informative $q_\mathrm{nearest}^\mathrm{a}$ that makes contact. 

The $\ContactSample$ operation is essential for adequate exploration of the robot's state space, and needs to be designed differently for different robots. Here we summarize our implementation of the contact samplers for the systems in Fig. \ref{fig:rrt_tasks}, which are used in our experiments in later sections. In \code{PlanarPusher} and \code{PlanarHand}, we can sample contact points on the object surface and solve for robot joint angles using inverse kinematics. For systems with the Allegro hand, we pick a random direction in the hand configuration space using EigenGrasps \cite{eigengrasp}, and simulate closing the hand along that direction until contact is established.

\begin{figure*}[t]
\centering\includegraphics[width = 0.9\textwidth]{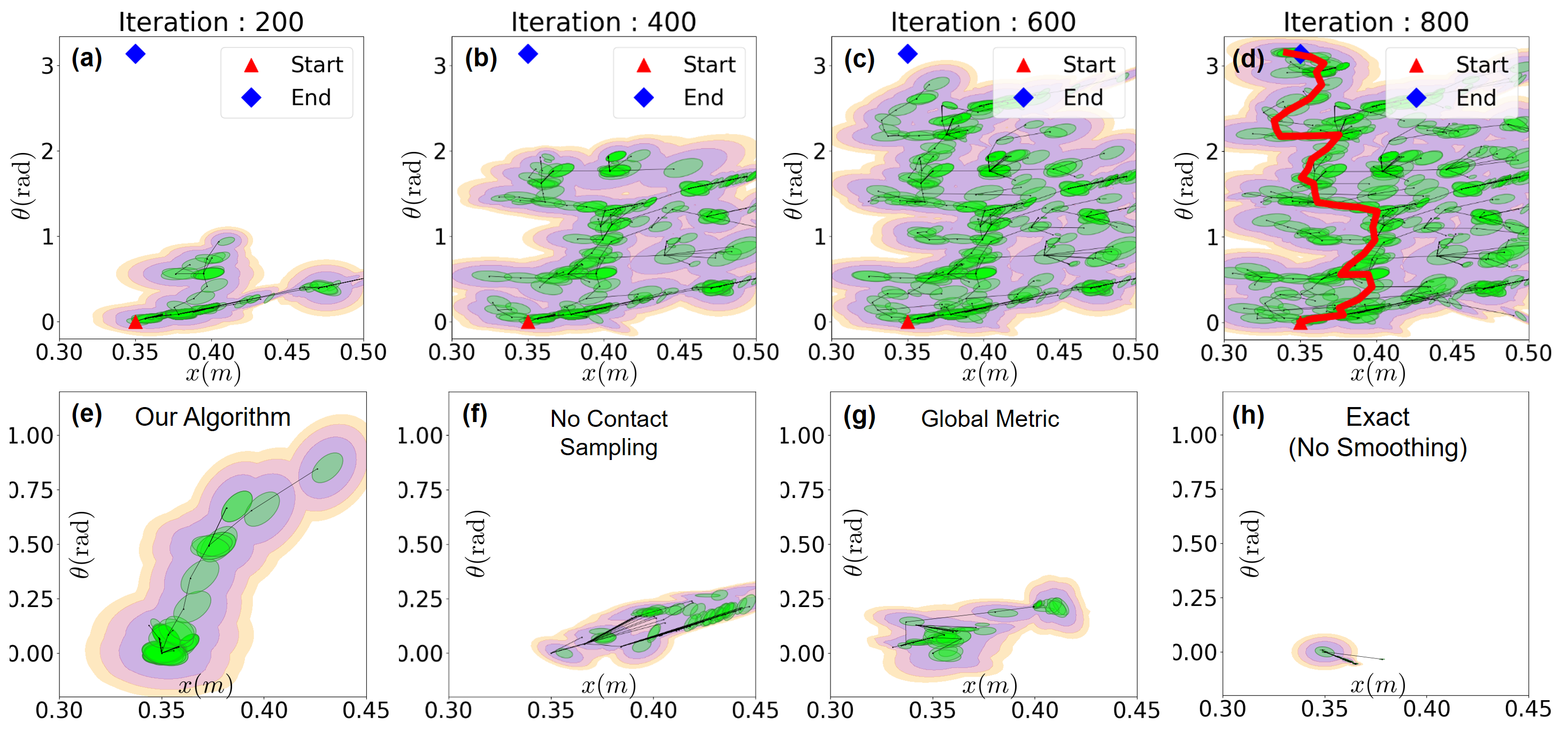}
\caption{\textbf{(a-d)} RRT trees, shown in the space of $\qu$, at different iterations of a complete run of the enhanced RRT for the \code{PlanarHandFixedY} system. The contours are the sub-level sets of the local Mahalanobis metric of the nodes. The path from start ($q_\mathrm{init}$) to goal ($q_\mathrm{goal}$) is highlighted in red in the final tree of \textbf{(d)}. \textbf{(e-h)} Visualization of RRT trees with the same number of nodes (50) but grown with different methods. (\textbf{e}) Tree grown with our algorithm; (\textbf{f}) without contact sampling; (\textbf{g}) using a globally uniform weighted Euclidean metric; (\textbf{h}) using exact gradients without smoothing. Note that our method achieves the best coverage of the space of $\qu$.}
\label{fig:degenerate}
\vskip -0.2 true in
\end{figure*}

Contact sampling introduces non-physical behavior where the robot teleports from one configuration to another. This is not a problem when the object can sustain static equilibrium without the actuated DOFs that need to teleport. For instance, in \code{AllegroHand}, when the ball is supported by the palm, the fingers are free to move around to regrasp the ball. In contrast, if \code{AllegroHand} were facing downwards, the ball would fall under gravity if it were not secured by some of the fingers. Although this is a limitation of our current implementation of contact sampling, we believe this can be resolved by a more sophisticated contact sampler which moves some of the actuated DOFs while keeping the object in static equilibrium with the rest.

\subsection{Effectiveness of Proposed Enhancements \label{sec:rrt_contact_effectiveness}}
We introduce a new system with a 2-dimensional object configuration space to illustrate the effectiveness of the proposed RRT enhancements:
\begin{itemize}
\item {\bf Planar Hand with fixed $y$, (2,4,13)}. A simplified version of the \code{PlanarHand} system in Sec. \ref{sec:trajopt_setup}. We fix the $y$-coordinate of the object, so that $\qu = (x, \theta)\in \R[2]$ can be easily plotted on paper. 
\end{itemize}

As shown in Fig. \ref{fig:degenerate}e, the vanilla RRT enhanced with the proposed $\Nearest$, $\Extend$ and $\ContactSample$ achieves good coverage of the space of $\qu$, which is crucial for RRT to adequately explore the configuration space and find a path to $q_\mathrm{goal}$. In contrast, tree growth is significantly hindered without contact sampling (Fig. \ref{fig:degenerate}f) or the local metric (Fig. \ref{fig:degenerate}g).

We also illustrate how the tree grows throughout a complete run of the enhanced RRT in Fig. \ref{fig:degenerate}a-d. Even with the proposed enhancements, tree growth can get stuck at times. This is characterized by a specific type of subgraph of the tree which we call a ``broom". A broom consists of one parent node with many child nodes, and is formed by repeated unsuccessful attempts to grow towards different subgoals from the same parent node. The occasional appearance of brooms is a sign that the proposed enhancements are not perfect. Nevertheless, the enhanced RRT is able to quickly branch out into empty part of the configuration space, and sufficiently cover the space as the tree grows. 

\subsection{Final Path Refinement}\label{sec:rrt_refinement}
The final path returned by the RRT algorithm is visually plausible, yet suffers from two minor drawbacks: (\textbf{i}) RRT tends to produce randomized paths that can be shortened, and (\textbf{ii}) the big step size used in the $\Extend$ operation creates some non-physical artifacts due to Anitescu’s convex relaxation of the Coulomb friction model (Sec.\ref{sec:convex_quasi_dynamic_contact_dynamics}). 

To mitigate these issues, we refine the RRT plan using trajectory optimization \cite{lgp,terry} and short-cutting \cite{shortcutting}. 
We first divide the RRT path into segments punctuated by $\ContactSample$ operations. We call these segments contact-rich as they involve contact-based interactions between the object and the robot. 
We shortcut the sequence of trajectories by (\textbf{i}) removing consecutive $\ContactSample$ steps, and (\textbf{ii}) truncating each segment if there is no movement in $\qu$. 
Then, for each contact-rich segment, we run trajectory optimization (Alg.\ref{alg:impc}) with a smaller time step $h$, using the RRT path segment as the initial guess. This not only smooths the final path, but also ensures that each trajectory segment is more physically realistic.
Finally, we connect adjacent contact-rich segments with a collision-free robot trajectory created by a collision-free RRT. We assume that the object configuration remains unchanged during the collision-free segment.

We find that combining these two strategies is effective in creating shorter and more physically realistic trajectories. 

\begin{figure*}[thpb]
\centering\includegraphics[width = 0.85\linewidth]{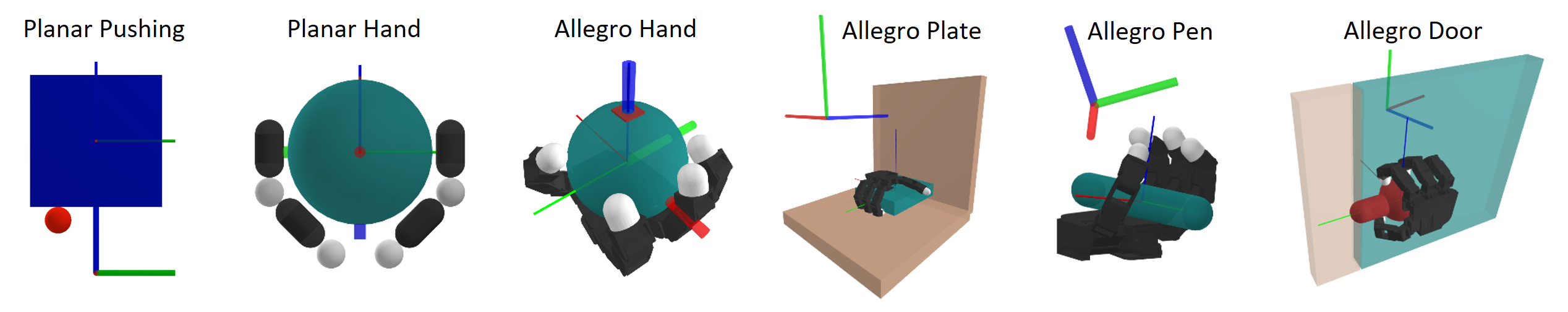}
\caption{Tasks for RRT. Similar to Fig. \ref{fig:trajoptvis}, the thicker frame denotes the goal, and the thinner frame the initial configuration of the object.}
\label{fig:rrt_tasks}
\vskip -0.2 true in
\end{figure*}

\section{Planning Results \& Discussion \label{sec:rrt_results}}
\noindent In this section, we apply our algorithm on difficult 3D contact-rich manipulation problems previously only tackled by heavy offline approaches in RL \cite{rajeswaran2018learning, chen2022system}, and illustrate that we can generate plans on the order of a minute of online compute time, all on the CPU, which shows the efficacy of our method.
The experiments in this section are designed to validate the following four hypotheses.
\begin{enumerate}
\item Using smooth surrogate greatly improves the performance over using the exact dynamics.
\item The equivalence of smoothing schemes establishes that analytic and randomized smoothing will have similar levels of performance empirically, with analytic smoothing showing superior computation time.
\item Using the Mahalanobis distance metric improves performance over a globally uniform distance metric.
\item Contact sampling greatly aids sample efficiency.
\end{enumerate}

\subsection{Experiment Setup \label{sec:rrt_experiment_setup}} 
To test the efficacy of our algorithm and our hypotheses, we run our algorithm to reach more challenging goals than the trajectory optimization examples in Sec. \ref{sec:trajopt_setup:systems}, as well as on 3 more contact-rich tasks on 3 new systems defined below.
\begin{enumerate}
    \item {\bf Pen Placement (6,19,24)}. The robot hand move the pen \cite{rajeswaran2018learning} to the desired configuration.
    \item {\bf Plate Pickup (6,19,42)}. The robot has to exploit the external contact between the plate and the wall \cite{cheng2021contact}, showing extrinsic dexterity \cite{extrinsic}.
    \item {\bf Door Opening (2,19,22)} \cite{rajeswaran2018learning} involves reasoning about a constrained system, where the handle must be rotated first before the door can be pushed open.
\end{enumerate}
The definition of the number tuples is identical to Sec.\ref{sec:trajopt_setup:systems}.

The contact-rich planning tasks are illustrated in Fig. \ref{fig:rrt_tasks}.
We design the tasks so that solving any of them with a single run of trajectory optimization is expected to fail due to the non-convexity of the problem.

To compare our algorithm with different baselines, we rate the quality of planners using two metrics:
\begin{enumerate}
    \item {\bf Iteration vs. Minimum distance to goal}.  we measure the distance between the goal and the tree, defined by $\min_{q\in\mathcal{V}} \|q^\mathrm{u} - q_\textrm{goal}^\mathrm{u}\|$ for every iteration. A successful planning algorithm would eventually reach the vicinity of the goal asymptotically, driving this metric to zero. 
    \item {\bf Iteration vs. Packing Ratio}. To characterize the \emph{exploration} performance of RRT, we do a Monte-Carlo estimation of the \emph{packing ratio}, which is defined as the volume of the space occupied by the reachability ellipses, divided by the total volume of some workspace limit for the unactuated objects. The workspace limit is the set from which subgoals are sampled when running RRT. More formally, we define the numerator as
    \begin{equation}
        V_{\mathrm{reachable}} = \textbf{vol}\big(\{q^\mathrm{u} | \min_{\bar{q}\in\mathcal{V}} d^\mathrm{u}_{\rho,\gamma}(q; \bar{q}) \leq \eta\}\big)
    \end{equation}
    where $\eta$ is some threshold on the distance metric. The Monte-Carlo estimate of $V_{\mathrm{reachable}}/V_{\mathrm{workspace}}$ can be computed by drawing $N$ samples within the workspace and counting how many of them belong to $V_{\mathrm{reachable}}$. A good planner should asymptotically reach a ratio of $1$ if the system can reach all points in the object's workspace from which $q_{\mathrm{subgoal}}^\mathrm{u}$ are sampled.
\end{enumerate}

\begin{figure*}[htp]
\centering
\subfloat[Planar Pushing.]{
	\includegraphics[width=0.485\linewidth]{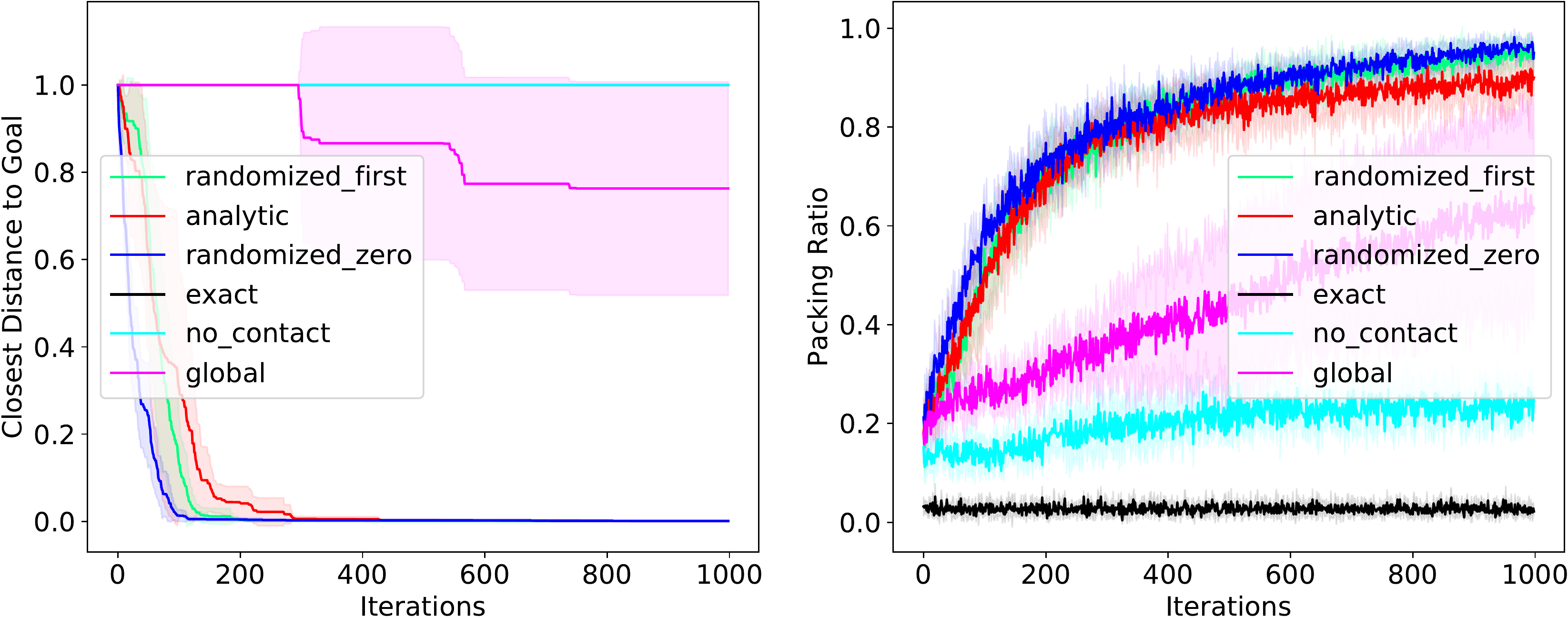}
}
\subfloat[Allegro Plate.]{
	\includegraphics[width=0.485\linewidth]{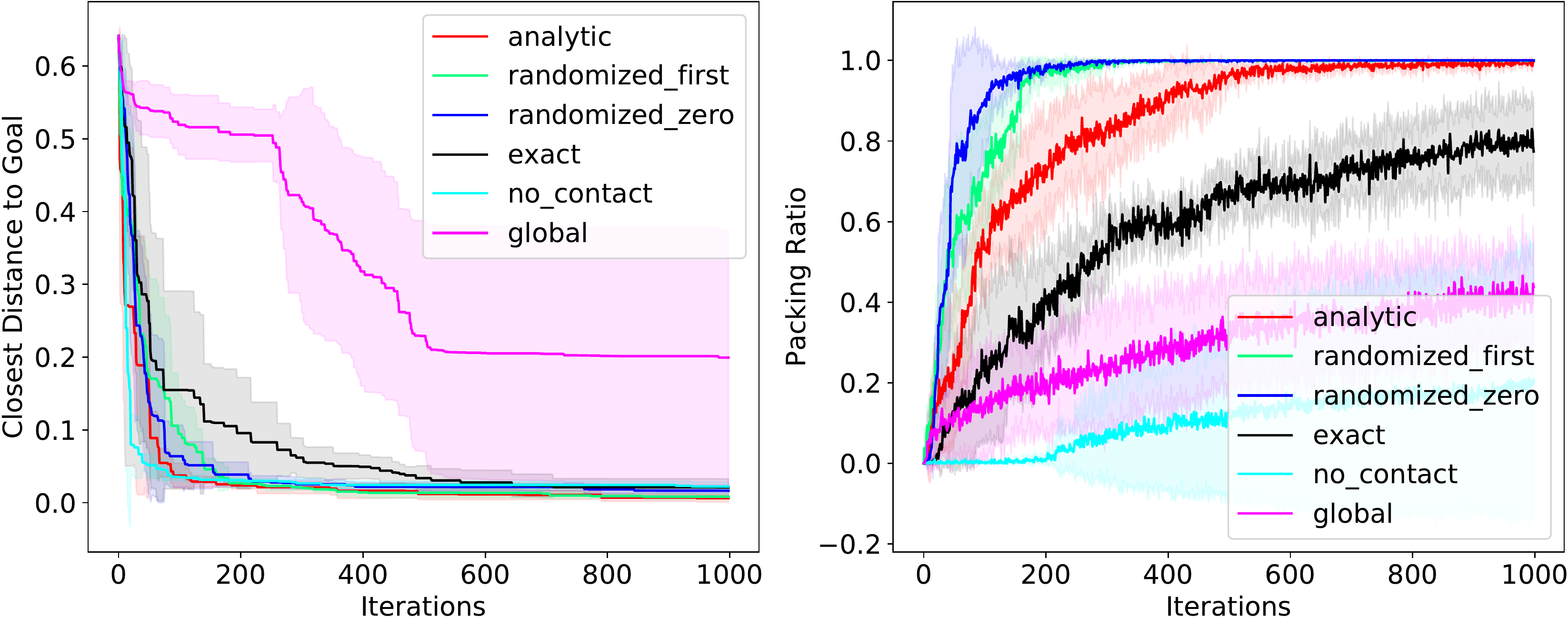}
} \\
\vspace{-0.2cm}
\subfloat[Planar Hand.]{
	\includegraphics[width=0.485\linewidth]{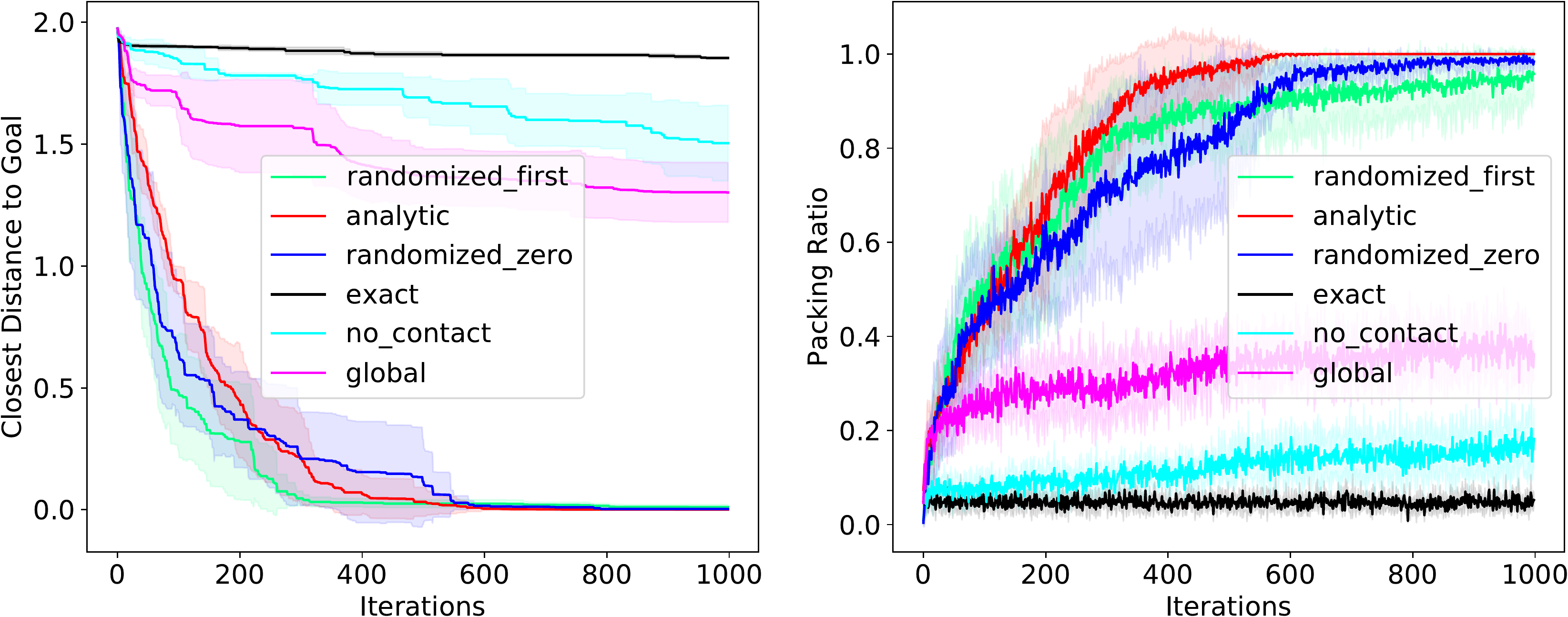}
}
\subfloat[Allegro Pen.]{
	\includegraphics[width=0.485\linewidth]{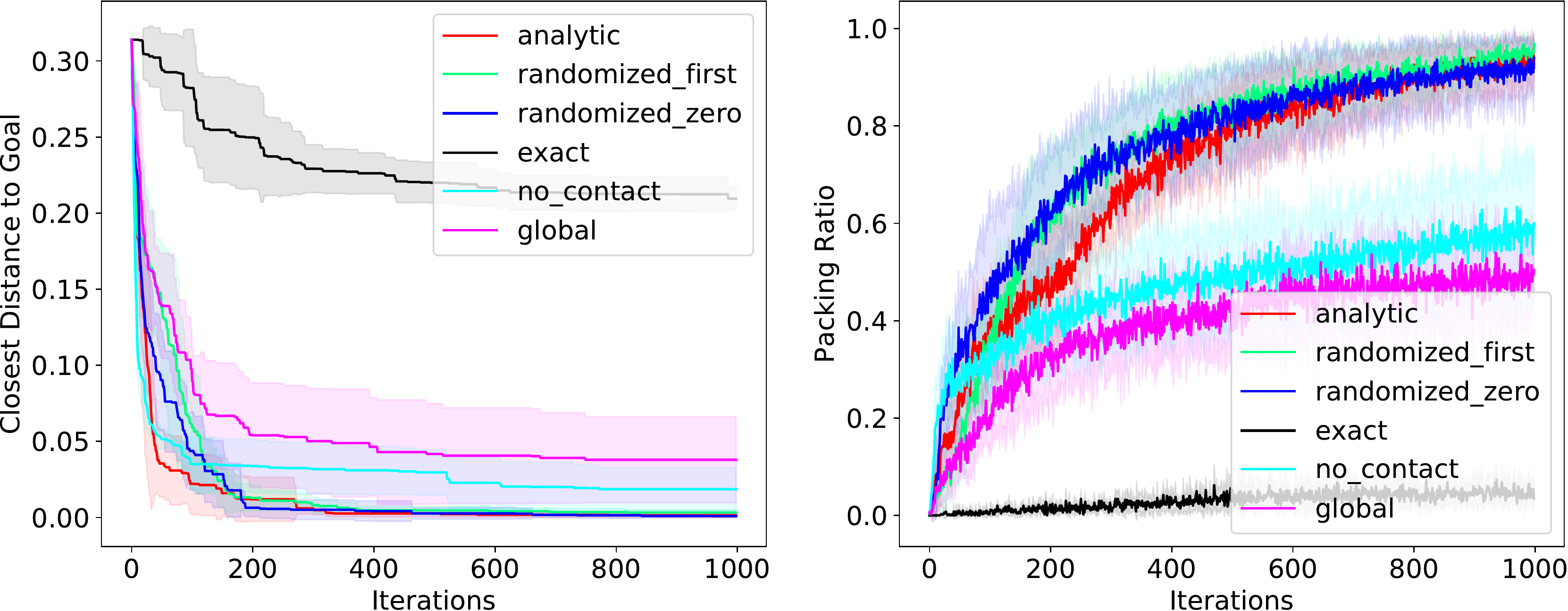}
} \\
\vspace{-0.2cm}
\subfloat[Allegro Hand.]{
	\includegraphics[width=0.485\linewidth]{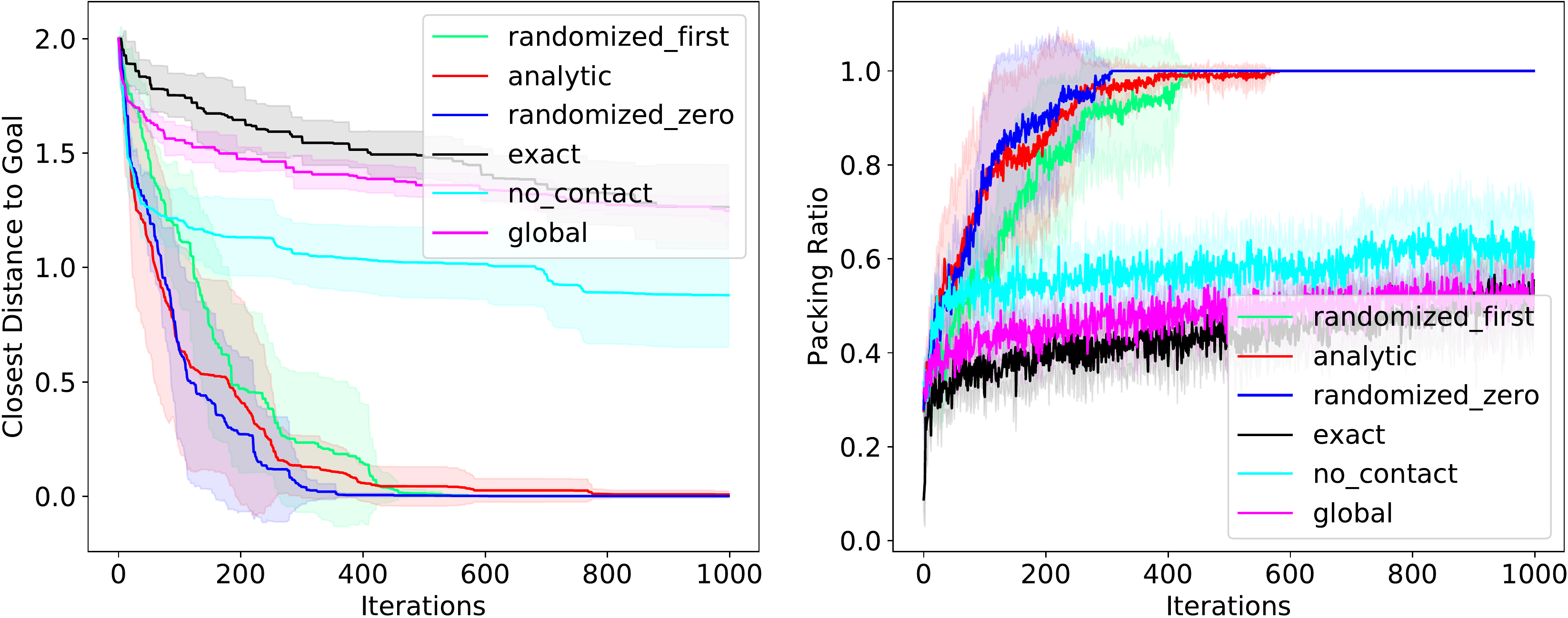}
}
\subfloat[Allegro Door.]{
	\includegraphics[width=0.485\linewidth]{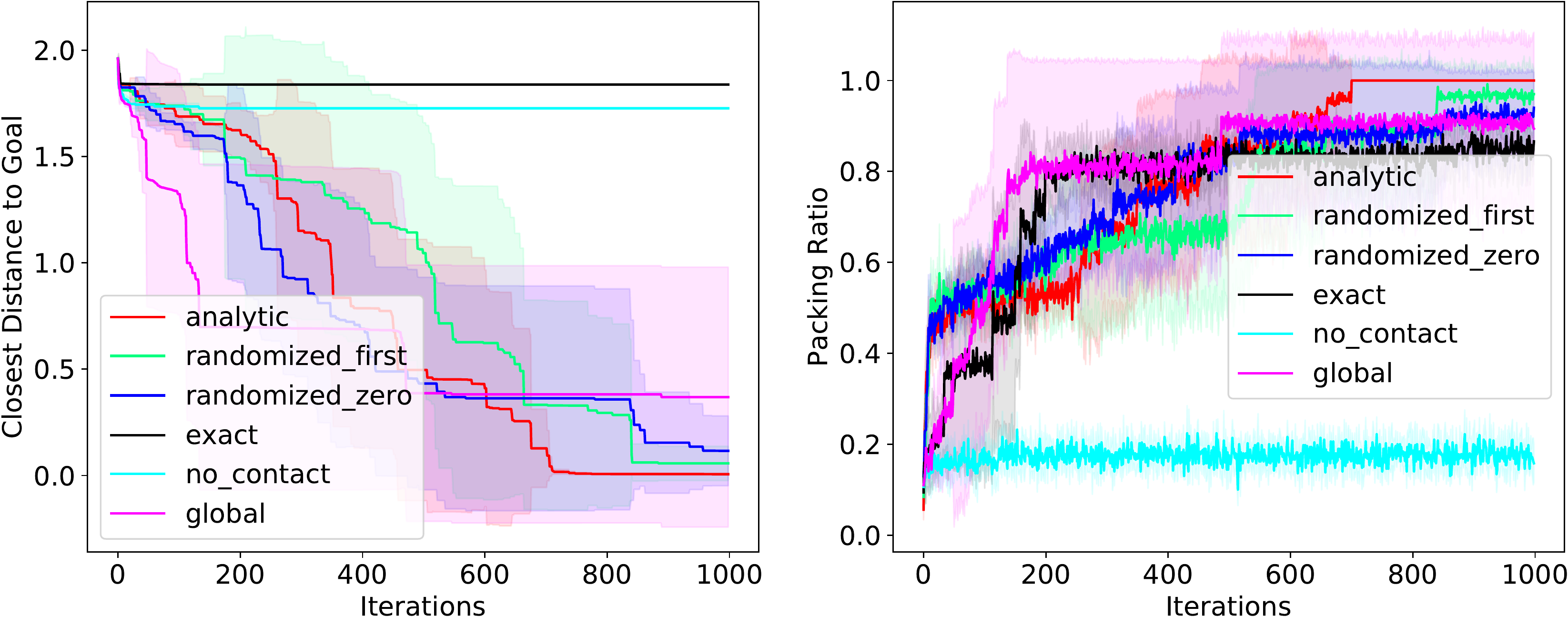}
}
\caption{Planning performance for the tasks in Fig. \ref{fig:rrt_tasks}. Results include running RRT with the enhancements proposed in Sec. \ref{sec:rrt_for_contact} using the three smoothing schemes from Sec. \ref{sec:smoothing_computation_schemes}, as well as the three ablation studies proposed in Sec. \ref{sec:rrt_experiment_setup}.}
\label{fig:rrtperformance}
\vskip -0.15 true in
\end{figure*}

For both metrics, the performance of our RRT algorithms averaged over 5 runs is plotted in Fig. \ref{fig:rrtperformance} for different smoothing schemes, as well as ablations of different choices we made for the algorithm. In particular, we run three variations of our algorithm while ablating a crucial ingredient.
\begin{enumerate}
    \item \textbf{Exact}. We replace the linearization of the smooth surrogate $\mathbf{B}_\rho$ with the exact linearization $\mathbf{B}$ of CQDC dynamics, used for both extension and metric computation.
    \item \textbf{NoContact}. We do not allow contact sampling (Sec.\ref{sec:contactsampling}) in this variant of the algorithm.
    \item \textbf{Global}. Instead of the local Mahalanobis metric, we use a globally-uniform metric during the $\Nearest$ step of the algorithm. For our experiments, we use a carefully-chosen weighted Euclidean norm. 
\end{enumerate}

\subsection{Results \& Discussion}

We plot the results of our experiments in Fig. \ref{fig:rrtperformance}, and display the running time of our algorithm using the three different smoothing schemes in Table \ref{table:rrtrresults}. We discuss some of our findings from the experiment, in the context of our hypotheses in the beginning of this section.

\begin{table}[thpb]
\centering
\begin{tabular}{|| c | r | r | r | r | r | r || } 
 \hline
 Method & PPushing & PHand & AHand & APlate & APen & ADoor
 \\\hline
    A. & 3.25  & 10.32 & 31.01 & 117.16 & 24.84 & 12.42  \\
    RF.  & 7.50 & 21.34 & 80.12 & 161.64 & 86.12 & 34.55\\
    RZ. & 7.40 & 21.00 & 82.00 & 168.11 & 81.23 & 33.80 \\\hline
\end{tabular}
\caption{Running time achieved by different methods in seconds. Every trial was run for $1000$ iterations. We choose to not display running time for the ablation options since they are slight variations of Analytic with comparable running times.}
\label{table:rrtrresults}
\end{table}

\subsubsection{Smoothing vs. Exact} Throughout all experiments, we saw that using the exact linearization to compute the distance metric and extension results in much worse performance compared to any of the smoothing schemes, which supports our hypothesis that mode smoothing is necessary in order to solve many of the tasks.

\subsubsection{Analytic vs. Randomized Smoothing} For most of the tasks, we saw no meaningful difference between analytic and randomized smoothing schemes in terms of both how fast the goal is reached and the packing ratio. This empirically supports our theory that the two smoothing schemes are equivalent methods to compute local models of surrogate dynamics. The running time in Table \ref{table:rrtrresults}, however, shows that analytic smoothing results in faster computation time as it does not require taking multiple samples. 

\subsubsection{Global vs. Mahalanobis Metric} Despite reasonable efforts to choose good weights for the weighted Euclidean norm, we consistently observed that the globally uniform weighted Euclidean metric resulted in much worse performance compared to the local Mahalanobis metric. This supports our hypothesis, and the findings of \cite{shkolnik2009reachability} that kino-dynamic RRT in general greatly benefits from guiding tree growth with reachability information. 

\subsubsection{Effect of Contact Sampling} For some of the tasks (e.g. \code{AllegroPlate, AllegroPen}), contact sampling was not necessary. However, for examples that require resetting the actuator into a completely different configuration to make progress (e.g. \code{PlanarPushing, AllegroHand, PlanarHand}), contact sampling greatly improves the planner's performance. Although our contact samplers are simple and can benefit from more sophisticated implementations, the experiments show that contact sampling is essential for RRT to efficiently explore through contact dynamics constraints.

\section{Sim2Real Transfer \& Hardware Results}\label{sec:sim2real}
Although our planner successfully plans through our CQDC dynamics model, we further investigate if the plans can successfully transfer to real experiments. For this purpose, we run the obtained plans from Sec.\ref{sec:rrt_results} in \emph{open-loop} on a higher fidelity simulator Drake \cite{drake}, as well as an actual hardware setting. These experiments further shed light on the efficacy and the limitations of our proposed method. 

\begin{figure*}[thpb]
\centering\includegraphics[width = 0.9\linewidth]{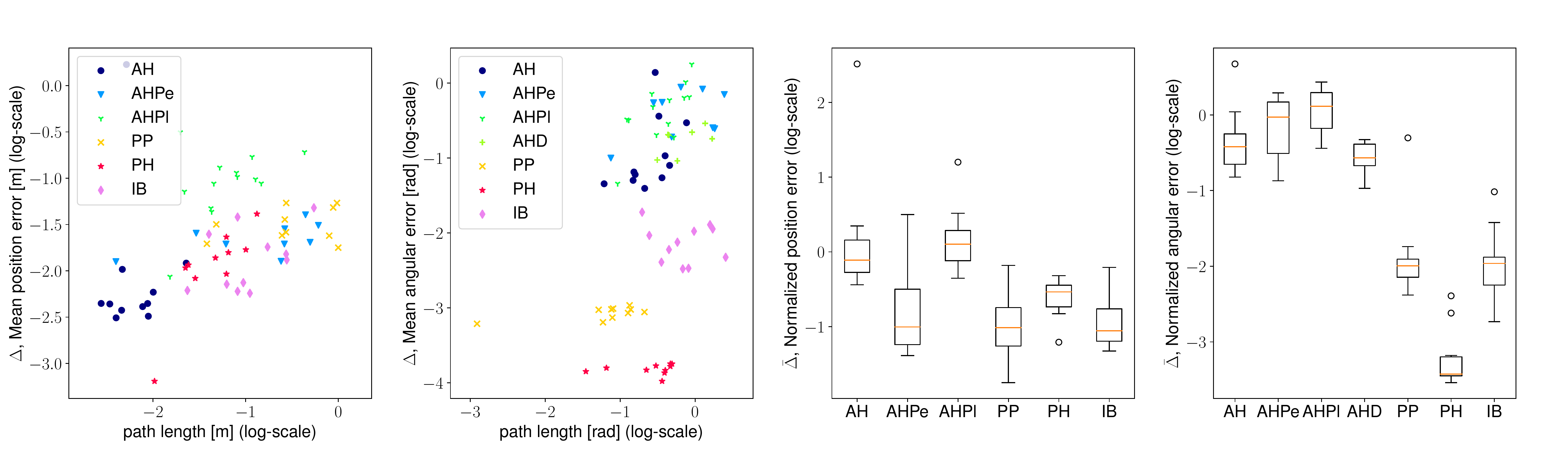}
\caption{Plots for sim2real performance of our CQDC dynamics, evaluated on the plans of Sec.\ref{sec:rrt_results}. \textbf{First Two Columns}: Scatter plot of mean error $\Delta$ vs. path length for position (first column) and orientation (second column). Each dot in the plot represents one segment trajectory. \textbf{Last Two Columns}: Box plot for the normalized error $\bar{\Delta}$ for positions (third column) and orientation (fourth column). Note that the mean in orange corresponds to the slope of the graph in the first two columns. Finally, we note that the \code{AllegroHandDoor} (AHD) example only consists of orientation, and has no position plot. Readers are highly encouraged to watch the accompanying video for the qualitative behavior of the actual segment trajectories in this plot. System abbreviations: AH(\code{AllegroHand}), AHPe(\code{AllegroPen}), AHPl (\code{AllegroPlate}), PP(\code{PlanarPushing}), PH(\code{PlanarHand}), AHD (\code{AllegroDoor}), IB(\code{IiwaBimanual})}
\label{fig:sim_to_real}
\vskip -0.2 true in
\end{figure*}

\subsection{Experiment Setup}

\subsubsection{Open-Loop Plan Transfer}
\newcommand{\qsimcoarse}{\{q_{k,\mathrm{sim}}\}^K_{k=0}}
\newcommand{\qsimfine}{\{q_{t,\mathrm{sim}}\}^T_{t=0}}
\newcommand{\ucoarse}{\{u_k\}^{K-1}_{k=0}}
\newcommand{\ufine}{\{u_t\}^{T-1}_{t=0}}
\newcommand{\hsim}{h_{\mathrm{sim}}}
\newcommand{\hreal}{h_{\mathrm{real}}}
\newcommand{\qreal}{\{q_{t,\mathrm{real}}\}^T_{t=0}}
\newcommand{\Tsim}{\mathcal{T}_{\mathrm{sim}}}
\newcommand{\qusim}{\{q^\mathrm{u}_{t,\mathrm{sim}}\}^T_{t=0}}
\newcommand{\qureal}{\{q^\mathrm{u}_{t,\mathrm{real}}\}^T_{t=0}}
\newcommand{\qusimt}{q^\mathrm{u}_{t,\mathrm{sim}}}
\newcommand{\qurealt}{q^\mathrm{u}_{t,\mathrm{real}}}

\newcommand{\qsimp}{q_{\mathrm{sim}}}
\newcommand{\qrealp}{q_{\mathrm{real}}}
\newcommand{\qusimp}{q^\mathrm{u}_{\mathrm{sim}}}
\newcommand{\qurealp}{q^\mathrm{u}_{\mathrm{real}}}

Our plan consists of state and action sequences, i.e. lists of ``knot'' points, that are consistent with the CQDC dynamics. We first divide this plan into individual segments $(\qsimcoarse, \ucoarse)$, punctuated by the $\ContactSample$ operation. 

Next, we convert the knot points $(\qsimcoarse, \ucoarse)$ into state and action trajectories $q_\mathrm{sim}: [0, T] \rightarrow \R[\nU + \nA]$ and $u: [0, T] \rightarrow \R[\nA]$ using first-order hold\footnote{we connect adjacent knot points with linear interpolation for positions and joint angles, or spherical linear interpolation (Slerp) for 3D orientations.}. Here $T$ denotes the duration of the trajectories in seconds. Rolling out $u(\cdot)$ on the real dynamics gives $q_\mathrm{real}: [0, T] \rightarrow \R[\nU + \nA]$, which is compared against $q_\mathrm{sim}(\cdot)$ to evaluate the sim2real performance.

\subsubsection{Evaluation Metrics}
To evaluate the performance of sim2real transfer, we first define the mean error $\Delta(\cdot,\cdot)$ between the two trajectories $q_\mathrm{sim}^\mathrm{u}(\cdot)$ and $q_\mathrm{real}^\mathrm{u}(\cdot)$ as
\begin{equation}
\Delta(q_\mathrm{sim}^\mathrm{u}, q_\mathrm{real}^\mathrm{u}) \coloneqq 
\frac{1}{T}
\int^T_{0} d\left(q_\mathrm{sim}^\mathrm{u}(t), q_\mathrm{real}^\mathrm{u}(t)\right) \mathrm{d}t
\end{equation}
where $d(\cdot,\cdot)$ is the Euclidean 2-norm for position (in meters), and the absolute change in angle for orientation (in radians). For 3D, this change of angle comes from axis-angle rotations. 

In addition, we expect that the metric $\Delta$ will depend on how much movement is inside the reference trajectory of the plan. To account for this scaling, we normalize $\Delta$ by dividing it by the length of the trajectory in the original plan, and denote the normalized error as $\bar{\Delta}$:
\begin{equation}
\bar{\Delta}(\qusimp, \qurealp) \coloneqq 
\frac{\Delta (q_\mathrm{sim}^\mathrm{u}, q_\mathrm{real}^\mathrm{u})}
{L(q_\mathrm{sim}^\mathrm{u})}
,
\end{equation}
where the denominator computes the path length of $q_\mathrm{sim}^\mathrm{u}$: 
\begin{equation}
L(q_\mathrm{sim}^\mathrm{u}) \coloneqq 
\int_0^T \norm{\Dot{q}_\mathrm{sim}^\mathrm{u}(t)}_2 \mathrm{d} t =
\sum^{K - 1}_{k=0}d(q^\mathrm{u}_{k+1,\mathrm{sim}}, q^\mathrm{u}_{k,\mathrm{sim}}).
\end{equation}

This normalization also takes into account the inherent scales of the system, and makes $\bar{\Delta}(\cdot,\cdot)$ a dimensionless quantity. For each system in Fig. \ref{fig:rrt_tasks}, we obtain at least $10$ segments and evaluate our error metrics. 

\subsubsection{Simulation Setup}
We transfer the examples of Fig.\ref{fig:rrt_tasks} into Drake \cite{drake}, which utilizes a full second-order dynamics model with a sophisticated and accurate contact solver \cite{tamsi}. The collision geometries, robot controller stiffness and coefficients of friction are kept consistent between the CQDC dynamics and Drake. 
\begin{figure}[thpb]
\centering\includegraphics[width = 1.0\linewidth]{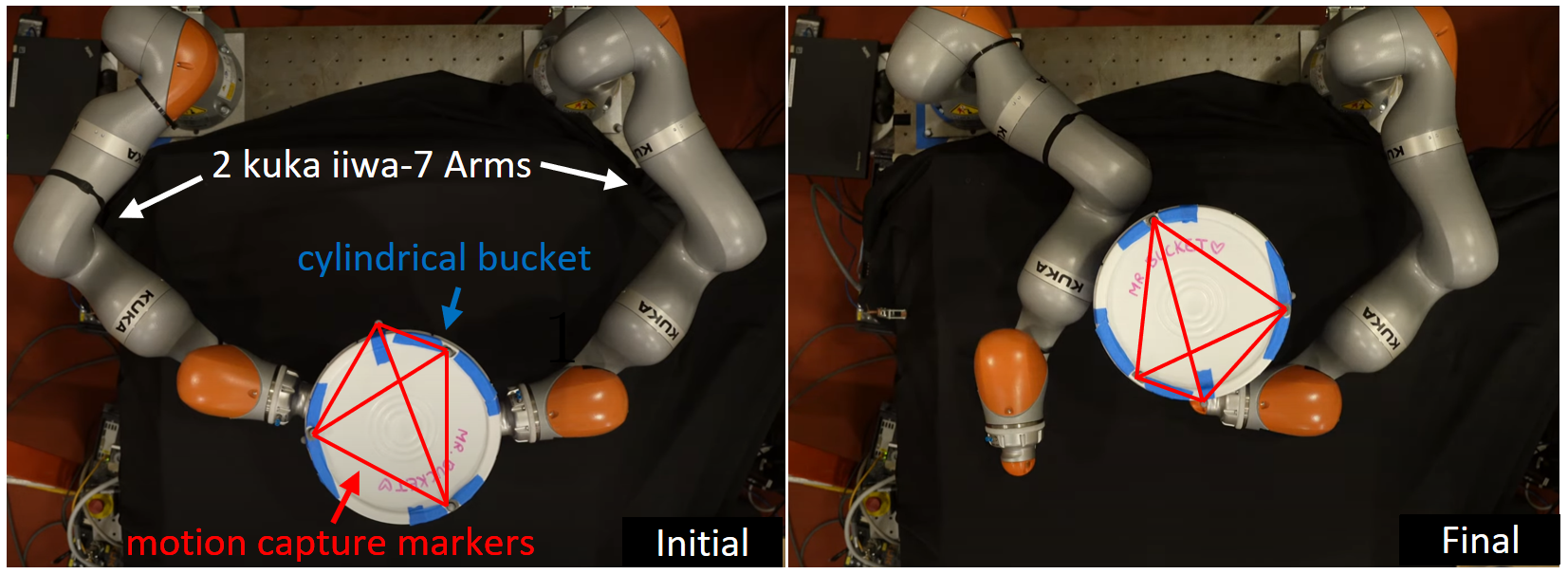}
\caption{Hardware for the \code{IiwaBimanual} setup, where the goal is to rotate the bucket by $180^\circ$. The left and right pictures correspond to the initial state and the final state after the open-loop plan execution. The lines between motion capture markers are connected to illustrate the change of pose in the bucket. Readers are encouraged to watch the accompanying video for the full execution.} 
\label{fig:hardware}
\end{figure}

\subsubsection{Hardware Setup}
To verify results on actual hardware, we create a variant of the \code{PlanarHand} environment, where the object is replaced by a bucket, and 2 Kuka iiwa arms are used for the actuators. We name this environment \code{IiwaBimanual}. We utilize a motion capture system to estimate the state of the bucket in order to compare the two trajectories of $\qusimp(\cdot)$ and $\qurealp(\cdot)$. Our setup is illustrated in Fig. \ref{fig:hardware}.

\subsection{Results \& Discussion}
We plot the results of our experiments in Fig.\ref{fig:sim_to_real}. While 2D systems such as \code{PlanarPushing}, \code{PlanarHand}, and \code{IiwaBimanual} display low error and good sim2real transfer, 3D systems such as \code{AllegroHand}, \code{AllegroPlate}, \code{AllegroPen} and \code{AllegroDoor} show larger error. To better understand the discrepancy of sim2real performance on different systems, we visualized trajectories from all systems by overlaying $\qurealp$ on top of $\qsimp$ (see accompanying video).  

From the visualizations, it is clear that on \emph{all} systems, there exists a persistent \emph{phase} difference between $\qurealp$ and $\qusimp$: $\qurealp$ tends to lag $\qusimp$ when the robot accelerates, and lead when the robot decelerates. This is not surprising, as the CQDC dynamics that generates $\qusimp$ is inherently a first-order system, whereas $\qurealp$ is generated from second-order dynamics. 
On trajectory segments with good sim2real performance, this only results in harmless oscillations of $\qurealp$ around $\qusimp$. In these cases, we believe that the good sim2real performance validates our contact model. We now discuss other cases where sim2real is not as effective.

\subsubsection{Violation of Quasi-dynamic Assumption} 
The quasi-dynamic assumption implies that objects are quickly stopped by damping when they are not ``pushed around'' by the robot. This is true on 2D systems, as the friction patch between the object and its supporting surface is persistent and always provides enough damping to bring the object to still. 

However, the necessary damping to uphold the quasi-dynamic assumption does not always exist on 3D systems. For instance, in \code{AllegroHand} and \code{AllegroHandPen}, the point contact between the object and the palm provides very little damping. Most of the damping comes from the finger joints when the fingers are opposing the object's motion. Therefore, when the grasp on the object is not tight, the object can roll quite far from the planned trajectory or even off the palm. 

\subsubsection{Missed Contacts}
Due to the non-smooth nature of contact dynamics, small discrepancies in object trajectory caused by the phase gap can lead to the robot completely missing contacts with the object. For example, we observed that in \code{AllegroHandPlate} or \code{AllegroHandDoor}, some grasps that were valid under the CQDC dynamics no longer succeeded in holding the object in place in Drake. The consequence of these failed grasps is that plates are dropped on the table in \code{AllegroHandPlate}, and door handles are missed in \code{AllegroHandDoor}.

\subsubsection{Necessity of Stabilization \& Robustification}
These results tell us that plans suggested by the CQDC dynamics can give high-level directions, but its open-loop execution may not succeed under second-order dynamics with high velocities and low amount of damping. We believe that tracking this high-level plan requires low-level feedback controllers that can stabilize to the plan, and actively enforce the closed-loop system to be quasi-dynamic. We also believe that the high-level planer can benefit from robustness objectives such as encouraging grasps that are considered good under classical grasping metrics. Such grasps will increase the control authority of the robot over the object, thereby providing sufficient damping and decreasing the chances of dropping the object. We leave these as promising directions for future work. 

\section{Conclusion}
\noindent We motivated our work by noting the stark contrast between the empirical effectiveness of RL for contact-rich settings and the struggle of the model-based methods. By identifying the pitfalls in the existing model-based methods for planning, understanding how RL was able to alleviate such pitfalls, and resolving them with model-based techniques, we have shown that traditional model-based approaches can be effective in tackling planning for contact-rich manipulation. Compared to existing tools in RL which use heavy offline computation in the order of hours or days, our contribution offers a powerful alternative in the spectrum of solutions by enabling efficient online planning in the order of a minute while being generalizable with respect to environments and tasks. We recap some of our contributions that enabled this process. 

We first pointed out the pitfall of explicit enumeration and consideration of modes in model-based methods, whereas RL has smoothed them out stochastically. Through Sec. \ref{sec:localsmoothing}, we formalized the process of smoothing for functions. This allowed us to view randomized smoothing in RL as a computational alternative to analytic smoothing used in existing model-based literature that both approximate local models of the smooth surrogate. This connection was made more concrete in the setting of contact dynamics in Sec. \ref{sec:smoothingequivalence}.  

Next, we have pointed out another weakness in model-based approaches where transients of second-order dynamics lead to myopic linearizations which are uninformative about long-term planning. As a solution to this weakness, we have presented CQDC, a novel formulation of contact model that is convex, differentiable, and quasi-dynamic. Through a number of theoretical arguments and empirical studies, we have shown the efficacy of our contact model. We have further shown that by inspecting the structure in our proposed model, we can analytically smooth out the contact dynamics with a log-barrier relaxation. With experiments, we have shown that our method of analytic smoothing has computational benefits over randomized smoothing.

Finally, we observed that existing model-based methods that rely on smoothing have been tied to local trajectory optimization. Due to their weakness to local minima, they have been less effective in difficult problems compared to RL-based approaches that attempt to perform global search. On the other hand, the SBMP methods for contact-rich systems have explicitly considered contact modes which fall into the pitfall of mode enumeration. Our contribution fills in a gap in existing methods by combining mode smoothing with RRT, where local approximation to the smooth surrogate was used to guide the exploration process of RRT via the local Mahalanobis metric.

By combining these three contributions, we have enabled efficient global motion planning for highly contact-rich and high-dimensional systems that were previously not achievable by existing model-based or RL-based methods. We believe that in the future, a highly optimized version of our planner can be used to perform real-time motion planning, or be used to guide policy search. With this capability, we hope to enable robots to find contact-rich plans online in previously unseen environments within seconds of planning time.

\bibliographystyle{IEEEtran}
\bibliography{references.bib}
\end{document}